\pdfoutput= 1
\documentclass{article}

\usepackage{microtype}
\usepackage{graphicx}
\usepackage{subfigure}
\usepackage{booktabs} 
\usepackage{algorithmic}
\usepackage[ruled,vlined]{algorithm2e}

\usepackage{hyperref}


\usepackage[accepted]{icml2023}


\usepackage{amsmath}
\usepackage{amssymb}
\usepackage{mathtools}
\usepackage{amsthm}

\usepackage[capitalize,noabbrev]{cleveref}

\theoremstyle{plain}
\newtheorem{theorem}{Theorem}[section]
\newtheorem{proposition}[theorem]{Proposition}
\newtheorem{lemma}[theorem]{Lemma}

\theoremstyle{definition}
\newtheorem{definition}[theorem]{Definition}
\newtheorem{assumption}[theorem]{Assumption}
\theoremstyle{remark}

\newtheorem*{example*}{Example}

\newcommand{\R}{\mathbb{R}}
\newcommand{\D}{\mathcal{D}}
\newcommand{\PP}{\mathcal{P}}
\renewcommand{\S}{\mathcal{S}}

\newcommand{\X}{\mathcal{X}}

\newcommand{\E}{\mathbb{E}}
\newcommand{\tA}{\zeta_A}

\newcommand{\W}{\mathcal{W}}
\newcommand{\dbtilde}[1]{\tilde{\raisebox{0pt}[0.85\height]{$\tilde{#1}$}}}

\DeclareMathOperator*{\argmin}{argmin}
\DeclareMathOperator*{\argmax}{argmax}
\DeclareMathOperator{\regret}{\operatorname{Regret}}
\DeclareMathOperator{\opt}{OPT}
\DeclareMathOperator{\fluid}{FLUID}
\DeclareMathOperator{\coeff}{coeff}
\interfootnotelinepenalty=10000

\newcommand{\noneurips}[1]{}

\usepackage[textsize=tiny]{todonotes}

\icmltitlerunning{Robust Budget Pacing with a Single Trace}

\begin{document}

\twocolumn[
\icmltitle{Robust Budget Pacing with a Single Sample}



\icmlsetsymbol{equal}{*}

\begin{icmlauthorlist}
\icmlauthor{Santiago Balseiro}{columbiagsb,google}
\icmlauthor{Rachitesh Kumar}{columbiaieor}
\icmlauthor{Vahab Mirrokni}{google}
\icmlauthor{Balasubramanian Sivan}{google}
\icmlauthor{Di Wang}{google}
\end{icmlauthorlist}

\icmlaffiliation{columbiaieor}{IEOR, Columbia University, New York, NY, USA}
\icmlaffiliation{google}{Google Research, New York, NY, USA}
\icmlaffiliation{columbiagsb}{DRO, Columbia Business School, New York, NY, USA}

\icmlcorrespondingauthor{Rachitesh Kumar}{rk3068@columbia.edu}

\icmlkeywords{}

\vskip 0.3in
]



\printAffiliationsAndNotice{}  

\allowdisplaybreaks
\begin{abstract}
    Major Internet advertising platforms offer budget pacing tools as a standard service for advertisers to manage their ad campaigns. Given the inherent non-stationarity in an advertiser's value and also competing advertisers' values over time, a commonly used approach is to learn a target expenditure plan that specifies a target spend as a function of time, and then run a controller that tracks this plan. This raises the question: \emph{how many historical samples are required to learn a good expenditure plan}? We study this question by considering an advertiser repeatedly participating in $T$ second-price auctions, where the tuple of her value and the highest competing bid is drawn from an unknown time-varying distribution. The advertiser seeks to maximize her total utility subject to her budget constraint. Prior work has shown the sufficiency of \emph{$T\log T$ samples per distribution} to achieve the optimal $O(\sqrt{T})$-regret. We dramatically improve this state-of-the-art and show that \emph{just one sample per distribution} is enough to achieve the near-optimal $\tilde O(\sqrt{T})$-regret, while still being robust to noise in the sampling distributions.
\end{abstract}
\section{Introduction}

Online advertising is the economic engine of the internet. It allows platforms like Google, Facebook, and TikTok to fund services that are free at the point of delivery while providing businesses the ability to target their ads to relevant users. When a user logs onto these platforms, an auction is run among the interested advertisers to determine which ad is displayed to her. Given the scale of the internet, advertisers are typically interested in thousands (if not millions) of auctions every day. If they participate in all such auctions, they will likely spend far beyond their budget. This necessitates the need for budget management, which is our focus.

\textbf{Target spend plans to address non-stationarities.} Given the inherent non-stationarities that exist over time, in the volume of traffic, in the demographic of users that visit, the rates of conversion etc., an advertiser's own value and that of the competing advertisers' values are also non-stationary. To deal with this non-stationarity, budget management systems compute a target expenditure plan~\citep{facebookguide, kumar2022optimal}. The latter is a function of time that specifies the recommended amount of spend at each point of time. I.e., such a plan distributes the cumulative daily/weekly budget into smaller chunks of time, appropriately capturing the non-stationarity. A pacing algorithm like a controller is used to track the plan. This raises the question: how much data is required to come up with a good target spend plan? 

\textbf{Our model.} We study this question by considering a budget-constrained advertiser participating in $T$ second-price auctions. The advertiser seeks to maximize her utility while respecting her budget (formal model in Section~\ref{sec:model}). We model the tuple, of an advertiser's value and that of her highest competing bid, at time $t$, as being generated from unknown independent time-varying distributions $\{\PP_t\}_{t=1}^T$. This is a major departure from the stochastic budget-management/resource-allocation literature, which for the most part assumes stationary distributions---an assumption that rarely holds in practice as daily traffic patterns are non-stationary~\citep{zhou2019robust}. If nothing is known about the distributions $\{\PP_t\}_t$ and they can be chosen adversarially,~\citet{balseiro2019learning} showed that every algorithm must incur linear regret against the hindsight-optimal benchmark. Their impossibility result also shows that budget management does not fall under the purview of online convex optimization (OCO) and is vastly different, given that one can achieve $O(\sqrt{T})$ regret for adversarially-chosen input in OCO. Thankfully, additional information is usually available in the form of historical samples. For example, both intraday and interday internet activity (and consequently the user traffic in which an advertiser is interested) tends to be similar week-on-week. However, this periodicity is never exact, i.e., it is too stringent to assume these historical samples are drawn from exactly the same distributions as $\{\PP_t\}_t$. To reflect this reality, in this work, we assume that we have samples from distributions $\{\tilde \PP_t\}$, which are potentially different from $\{\PP_t\}$, and develop algorithms that are robust to differences between them. We call an algorithm robust if its regret degrades smoothly as a function of the total Wasserstein distance between the two sequences of distributions. Crucially, classical training-based algorithms that learn a dual solution from samples and use it for bidding (see Algorithm~\ref{alg:learned-dual}, inspired by \citealt{devanur2009adwords,agrawal2014dynamic}) are not robust (see Section~\ref{sec:dual}). 

\textbf{Our result.} Our main contribution is to show that Dual Follow-The-Regularized-Leader (FTRL) is robust and achieves the near optimal $\tilde O(\sqrt{T})$ regret even when it has access to \emph{just one sample from each distribution} $\tilde \PP_t$, dramatically improving over the prior $T\log T$ samples from each distribution~\citep{jiang2020online} to achieve $O(\sqrt{T})$ regret. 

\textbf{Key insights.} Our algorithm uses the samples to estimate the ideal amount of expenditure to target in each auction and then uses Dual FTRL to follow these targets by shading the advertiser's values appropriately. The key insight driving the reduction in sample complexity from $T\log T$ to $1$ per distribution is the following. Prior work by~\citet{jiang2020online} first learns the sampling distributions $\{\tilde \PP_t\}_t$ from the samples, then computes the optimal duals on the learned distributions, and finally uses those duals to compute the target expenditures. Our insight is that it is not necessary to learn the entire sampling distribution. Instead, it is far more efficient to directly learn the duals from the samples and construct target expenditure based on those duals (i.e., setting target expenditure at $t$ to be what the dual-based solution consumes at $t$). Beyond being very efficient with samples, learning the duals from the samples also guarantees robustness to shifts between $\{\tilde \PP_t\}_t$ and $\{\PP_t\}_t$.



\textbf{Practical implications.} Consider $T$ representing a week's worth of auctions. While a total of $T\log T$ samples may be possible to obtain by looking at the past few weeks (given that each week has about $T$ samples to offer), requiring $T\log T$ samples \emph{per distribution} calls for looking at many months into the past. This is because getting a sample for a distribution, where a distribution could correspond to say a particular hour (e.g. Monday 10 AM), entails looking at that same hour from the past week. Asking for $T \log T$ samples for any given hour, when $T$ samples is what we get for the entire week, clearly requires looking at the numerous months into the past. Apart from posing huge storage and operational challenges, given that traffic pattern shifts over time, even gradual shifts would significantly degrade quality as one moves too much into the past. Our ask of one sample per distribution requires looking at just the past week and getting one sample for each hour.


\textbf{Near optimality: $O(\sqrt{T})$ vs $\tilde{O}(\sqrt{T})$.} Furthermore, our regret guarantee is near-optimal in light of the $\Omega(\sqrt{T})$ lower bound established in \citet{arlotto2019uniformly}, which holds even in the much easier case when the distributions $\{\PP_t\}$ are identical (i.e., i.i.d. setting) and known ahead of time. Why is our $\tilde{O}(\sqrt{T})$ regret separated from the $O(\sqrt{T})$ regret in the lower bound (and the matching upper bound in~\citet{jiang2020online})? Indeed, we can also obtain $O(\sqrt{T})$ regret when given $\log T$ samples per distribution, or in the other direction, the~\citet{jiang2020online} paper also gets only $\tilde{O}(\sqrt{T})$ regret when given only $T$ samples per distribution rather than $T\log T$ samples. In other words, we get a factor $T$ reduction in samples: either from $T$ to $1$ sample while maintaining the same regret of $\tilde{O}(\sqrt{T})$ or from $T\log T$ to $\log T$ samples while maintaining the same regret of $O(\sqrt{T})$.

\textbf{Primal vs dual approach.} The recent works of~\citet{banerjee2020constant} and \citet{banerjee2020uniform} show how to obtain a constant regret with a single historical trace by repeatedly solving the primal. We note that they crucially rely on the total number of types being small. When the number of types is very large as in Internet advertising where the type models user features, intent, time of day, location etc., a primal based algorithm is not even well defined because one would not have even seen all the types in the samples. Thus, a primal approach is out of question for us. More on this in Appendix~\ref{sec:related-work}.

\textbf{Technical contributions.} We achieve our result by developing a novel dual-iterate coupling lemma (see Lemma~\ref{lemma:coupling}) and leveraging it to analyze a leave-one-out thought experiment designed to break challenging correlations which arise from working with one sample per distribution (see Subsection~\ref{subsec:R3} for details). Additionally, we also prove a novel regret decomposition for Dual FTRL (Theorem~\ref{thm:dual-descent-regret}), which may be of independent interest. Finally, our algorithm does not require solving large linear programs and can be implemented efficiently (see Subsection~\ref{subsec:putting-together}), which is critical for online advertising since each auction runs in a few milliseconds. Due to space constraints, we directly move onto a formal description of our model next, and refer the reader to Appendix~\ref{sec:related-work} for a discussion on other related work.
\section{Model}
\label{sec:model}

\textbf{Notation.} We use $\R_+$ and $\R_{++}$ to denote the set of non-negative real numbers and the set of positive real numbers respectively. For $n \in \mathbb{N}$, we use $[n] = \{1, \dots, n\}$ to denote the set of positive integers less than or equal to $n$. We use $\W(\cdot, \cdot)$ to denote the Wasserstein distance between two distributions under the metric with which the sample space is endowed. 

\textbf{Online Allocation with a Single Resource and Budget Management.} For ease of exposition, we will prove our results for the more general single-resource online allocation problem with linear rewards/consumptions. It is well-known that bidding in repeated second-price auctions with budgets can be modelled as an instance of this online allocation problem (e.g., see \citealt{balseiro2022best}, or our Appendix~\ref{sec:application}). It also captures the stochastic multi-secretary problem \citep{arlotto2019uniformly} as a special case.

Consider a decision maker with an initial budget $B \in \R_{++}$ of a resource, whose goal is to optimally spend it on $T$ sequentially arriving requests. Each request $\gamma = (f,b)$ is comprised of a linear reward function $f: \X \to \R_+$ such that $f(x) = \coeff(f) \cdot x$, and a linear resource consumption function $b: \X \to \R_+$ such that $b(x) = \coeff(b) \cdot x$; where $\X \subseteq \R_+$ is a compact set which denotes the space of possible actions of the decision maker. We will use $\S$ to denote the set of all possible requests and $\Delta(S)$ to denote the set of distributions over $\S$. Moreover, we endow $\S$ with the following metric $d(\cdot, \cdot)$: For any two requests $\gamma = (f,b)$ and $\tilde \gamma = (\tilde f, \tilde b)$:
\vspace{-0.5em}
\begin{align*}
    d(\gamma, \tilde\gamma) = \sup_{x\in \X} \left| f(x) - \tilde f(x) \right| + \sup_{x \in \X} \left| b(x) - \tilde b(x) \right| \,.
\end{align*}

We will assume that $0 \in \X$. This allows the decision maker to avoid spending the resource if she so chooses and ensures feasibility. Moreover, let $\bar x = \max_{x \in \X} x$. We will make standard regularity assumptions \citep{jiang2020online, balseiro2022online, balseiro2022best}: there exist $\bar f, \bar b \in \R_+$ such that $f(x) \leq \bar f$ and $b(x) \leq \bar b$ for all $x \in \X$. Like \citet{jiang2020online}, we will also assume that there exists $\kappa \in \R_+$ such that $f(x) \leq \kappa \cdot b(x)$ for all $x \in \X$, i.e., the maximum rate of return from spending the resource is bounded above by $\kappa$.

At time $t \in [T]$, the following sequence of events takes place: (i) a request $\gamma_t = (f_t, b_t)$ arrives; (ii) the decision maker observes $\gamma_t$ and chooses an action $x_t \in \X$ based on the information observed so far; (iii) the request consumes $b_t(x_t)$ amount of the resource and generates a reward of $f_t(x_t)$. The decision maker aims to maximize her rewards subject to her budget constraint. A policy $\{x_t(\cdot)\}_t$ for the decision maker maps requests to actions $x_t: \S \to \X$ based on the available information at each time step, i.e., the action $x_t(\gamma_t)$ at time $t \in [T]$ can depend on the historical requests $\{\gamma_s\}_{s=1}^{t-1}$ and the current request $\gamma_t$, but not the future requests $\{\gamma_s\}_{s=t+1}^T$. Moreover,  a policy is said to be budget-feasible if it respects the budget constraint by ensuring $\sum_{t=1}^T b_t(x_t(\gamma_t)) \leq B$ for every sequence $\{\gamma_t\}_t$.

The request $\gamma_t$ at time $t$ is drawn from a distribution $\PP_t \in \Delta(\S)$ unknown to the decision maker, independently of the requests at other time steps. We only require the requests $\{\gamma_t\}_t$ to be independent and allow the distributions $\PP_t$ to vary arbitrarily across time. We will measure the performance of a policy against the fluid-optimal benchmark, which is defined as:
\begin{align*}
	\fluid(\{\PP_t\}_t) \coloneqq \noneurips{\quad} \text{max} \quad  & \sum_{t=1}^T \E[f_t(x_t(\gamma_t))]\\
	\text{s.t.} \quad & \sum_{t=1}^T \E[b_t(x_t(\gamma_t))] \leq B\\
		&x_t: \S \to \X \quad \forall\ t \in [T] \,.
\end{align*}

Another benchmark common in the literature on online resource allocation is the expected hindsight optimal solution, which is defined as $\E[\opt(\{\gamma_t\}_t)]$ for
\begin{align*}
	\opt(\{\gamma_t\}_t) \coloneqq   \max_{x \in \X^T}\ \sum_{t=1}^T f_t(x_t)   \text{ s.t. } \sum_{t=1}^T b_t(x_t) \leq B\,.
\end{align*}
It is well-known that $\fluid(\{\PP_t\}) \geq \E[\opt(\{\gamma_t\}_t)]$, which makes our benchmark the stronger one (we provide a proof in Appendix~\ref{appendix:fluid-relaxation} for completeness). Hence, our performance guarantees relative to the fluid-optimal benchmark also imply the same guarantees for the expected hindsight-optimal benchmark.

More concretely, we use $R(A | \{\gamma_t\}_t)$ to denote the total reward of a policy $A$ on the request sequence $\{\gamma_t\}_t$, and the performance of an algorithm is measured using its expected regret against the fluid-optimal reward:
\begin{align*}
	\regret(A) \coloneqq  \fluid(\{\PP_t\}_t) - \E \left[R(A|\{\gamma_t\}_t) \right]\,.
\end{align*}%

Now, if the distributions $\{\PP_t\}_t$ are unknown and arbitrary, and no other information about $\{\PP_t\}_t$ is available, then the requests $\{\gamma_t\}_t$ can be adversarial. This case has been addressed in \citet{balseiro2022best}, where the authors showed no policy can achieve sub-linear regret. In this work, we address the setting in which the decision maker has additional information in the form of historical samples. In particular, we focus on the setting where the decision maker has access to one independent sample $\tilde \gamma_t \sim \tilde \PP_t$ for each $t \in [T]$. We will assume that the $\{\tilde \gamma_t\}$ samples are independent of the request sequence $\{\gamma_t\}_t$ and $\{\tilde \PP_t\}$ are not known to the decision maker. We will show that when the sampling distributions $\{\tilde \PP_t\}_t$ are not too far from the actual distributions $\{\PP_t\}_t$, which is a minimal relaxation over the adversarial setting, it is possible to achieve sub-linear regret. We refer to the collection of samples $\{\tilde \gamma_t\}_t$ as a \emph{trace} and allow the actions of the decision-maker to depend on it. Throughout this paper, we will use $\{\tilde \gamma_t\}_t$ to denote the trace and $\{\gamma_t\}_t$ to denote the (random) sequence of requests on which the decision maker wishes to maximize reward.

\section{Warmup: Learning the Dual and Earning with It}\label{sec:dual}

First, let us focus on the simpler case when $\tilde \PP_t = \PP_t$ for all $t \in [T]$, i.e., the sampling distributions are the same as the request distributions. At first glance, it may appear that only having access to one sample from each of request distributions $\PP_t$ yields too little information to achieve near-optimal rewards. If one were to attempt to directly learn the optimal solution of $\fluid(\{\PP_t\}_t)$, this initial impression would be accurate because of the high-dimensional nature of the space of all possible solutions $\{x_t(\cdot)\}_t$. Fortunately, we do not need to learn this high-dimensional information and can instead leverage the structure of the problem: the dual space is just one-dimensional and thereby amenable to learning. More precisely, the dual function $D(\mu | \{\PP_t\}_t)$ of $\fluid(\{\PP_t\}_t)$ at dual variable $\mu \geq 0$ is given by
\small
\begin{align*}
	&\max_{\{x_t(\cdot)\}_t} \sum_{t=1}^T \E[f_t(x_t(\gamma_t))] + \mu\left( B -  \sum_{t=1}^T \E[b_t(x_t(\gamma_t))]  \right) \\
	=&\ \mu \cdot B + \sum_{t=1}^T \max_{x_t: \S \to \X} \E\left[ f_t(x_t(\gamma_t)) - \mu\cdot b_t(x_t(\gamma_t)) \right] \\
	=&\ \mu \cdot B + \sum_{t=1}^T \E\left[ \max_{x_t \in \X} \left\{ f_t(x_t) - \mu \cdot b_t(x_t) \right\} \right] \,.
\end{align*}
\normalsize
Throughout, we assume $\argmax_{x \in \X} \left\{ f(x) - \mu \cdot b(x)\right\}$ is non-empty for all requests $\gamma \in \S$ and dual solutions $\mu \geq 0$. If we treat the dual variable $\mu$ as the per-unit price of the resource, $\max_{x_t \in \X} \left\{f_t(x_t) - \mu \cdot b_t(x_t)\right\}$ captures the profit maximization problem. The following terminology would be helpful in working with the dual.

\begin{definition}\label{definition:profit-maximizing-decision}
	For a request $\gamma = (f,b)$ and dual variable $\mu \geq 0$, let $x^*(\gamma, \mu)$ be the optimal solution of $\max_{x \in \X} \left\{f(x) - \mu \cdot b(x)\right\}$ with the largest value of $f(x)$. If there are multiple such solutions, pick one which minimizes $b(x)$. Moreover, let $f^*(\mu) \coloneqq f(x^*(\gamma, \mu))$ and $b^*(\mu) \coloneqq b(x^*(\gamma, \mu))$ be the corresponding reward and resource consumption respectively.
\end{definition}

We denote $D(\mu|\{\PP_t\}_t) = \mu\cdot B + \sum_{t=1}^T \E[f_t^*(\mu) - \mu \cdot b_t^*(\mu)]$. Throughout this paper, we will repeatedly leverage weak duality, which is a central property of duals. We state the property here and refer the reader to any standard text on convex optimization (e.g., \citealt{bertsekas2009convex}) for a proof.

\begin{proposition}[Weak Duality]\label{prop:global-weak-duality}
	For all request distributions $\{\PP_t\}_t$ and dual variables $\mu \geq 0$, we have $D(\mu|\{\PP_t\}_t) \geq \fluid(\{\PP_t\}_t)$, i.e.,
    \small
	\begin{align*}
		\sum_{t=1}^T \E[f_t^*(\mu)] \geq \fluid(\{\PP_t\}_t) - \mu \cdot \left( B - \sum_{t=1}^T \E[b_t^*(\mu)] \right)\,.
	\end{align*}
    \normalsize
\end{proposition}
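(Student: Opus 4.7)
The plan is to use the standard Lagrangian weak duality argument, which follows from the nonnegativity of $\mu$ and the budget feasibility of any primal-optimal solution. Specifically, I would start by letting $\{x_t^\star(\cdot)\}_t$ be any feasible policy achieving the fluid value (or a sequence of policies whose objective tends to it, if the supremum is not attained). Since such a policy is budget-feasible, the slack $B - \sum_t \E[b_t(x_t^\star(\gamma_t))]$ is nonnegative, and multiplying by $\mu \geq 0$ keeps it nonnegative.

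From there, I would write
\begin{align*}
\fluid(\{\PP_t\}_t) = \sum_{t=1}^T \E[f_t(x_t^\star(\gamma_t))] \leq \sum_{t=1}^T \E[f_t(x_t^\star(\gamma_t))] + \mu\Bigl(B - \sum_{t=1}^T \E[b_t(x_t^\star(\gamma_t))]\Bigr),
\end{align*}
and then upper bound the right-hand side by the maximum over all (not necessarily budget-feasible) policies $\{x_t(\cdot)\}_t$, which by definition equals the unconstrained Lagrangian value. Next, because the objective decomposes additively across $t$ and the variable $x_t(\gamma)$ can be chosen independently for every $\gamma \in \S$, the maximum over policies can be pushed inside the expectation pointwise, giving
\begin{align*}
\max_{\{x_t(\cdot)\}_t} \sum_{t=1}^T \E\bigl[f_t(x_t(\gamma_t)) - \mu\cdot b_t(x_t(\gamma_t))\bigr] + \mu B = \mu B + \sum_{t=1}^T \E\Bigl[\max_{x \in \X}\{f_t(x) - \mu\cdot b_t(x)\}\Bigr],
\end{align*}
which is exactly $D(\mu | \{\PP_t\}_t) = \mu B + \sum_t \E[f_t^\star(\mu) - \mu\cdot b_t^\star(\mu)]$ by Definition~\ref{definition:profit-maximizing-decision}. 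Rearranging gives the displayed inequality in the proposition.

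The proof is essentially routine; the only mildly subtle point is the interchange of the maximum over measurable maps $x_t:\S\to\X$ with the expectation over $\gamma_t \sim \PP_t$. This is justified by the assumption (made right before the definition) that $\argmax_{x\in\X}\{f(x)-\mu\cdot b(x)\}$ is nonempty for every request, so a pointwise selector $x_t^\mu(\gamma) \in \argmax_{x \in \X}\{f(x) - \mu\cdot b(x)\}$ exists and achieves the inner maximum; measurability is not an issue here since $\X$ is compact and the objective is linear in $x$, but in a careful writeup I would simply invoke the standard fact that, for an additively separable objective with independent constraint-free choices per coordinate, pointwise maximization solves the functional maximization. No other obstacles arise.
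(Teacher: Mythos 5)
Your proof is correct and is exactly the standard Lagrangian weak-duality argument that the paper itself invokes (the paper omits a proof and cites a convex-optimization text, but the key step—interchanging the maximum over policies with the expectation—already appears verbatim in the paper's derivation of $D(\mu|\{\PP_t\}_t)$ just before the proposition). Your handling of the interchange via a pointwise selector from the assumed-nonempty argmax is the right justification, so there is nothing to add.
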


Observe that $\sum_{t=1}^T \E[f_t^*(\mu)]$ is exactly the expected reward the decision maker would receive if she had an infinite budget and she took actions which maximized profit with $\mu$ being the per-unit price of the resource. Moreover, $B - \sum_{t=1}^T \E[b_t^*(\mu)]$ is the amount by which the decision maker would underspend her budget in expectation if she were to take actions using $\mu$ as the price. Suppose we can find a dual variable $\mu \geq 0$ that satisfies approximate complementary slackness, i.e., $\mu$ satisfies at least one of the following statements: (1) $\mu = 0$ and maximizing profit with $\mu$ as the per-unit price results in total expenditure less than the budget $B$ \noneurips{($\sum_{t=1}^T b_t^*(\mu) \leq B$)} with high probability; (2) $\mu > 0$ and maximizing profit with $\mu$ as the per-unit price results in total expenditure close to the budget $B$ \noneurips{($\sum_{t=1}^T b_t^*(\mu) \approx B$)} with high probability. Then, if the decision maker were to use $\mu$ as the price and make decisions to maximize profit, she will not run out of budget too early and the complementary slackness term $\mu \cdot \left(B - \sum_{t=1}^T \E[b_t^*(\mu)] \right)$ would also be small. Therefore, such a $\mu$ would yield rewards that are close to $\fluid(\{\PP_t\}_t)$, i.e., yield small regret, as required. 

We next describe how such a $\mu$ can be learned from the sample trace $\{\tilde \gamma_t\}$ when $\tilde \PP_t = \PP_t$ for all $t \in [T]$. We will assume that the distributions satisfy the following mild and standard assumption \citep{devanur2009adwords, agrawal2014dynamic} to exclude the degenerate case.

\begin{assumption}[General Position]\label{assumption:general-position}
	The request sequence $\{\gamma_t\}_t \sim \prod_t \PP_t$ is in general position almost surely: For any $\mu \geq 0$, there is at most one request with multiple profit maximizers, i.e.,
	\begin{align*}
		\left| \left\{ t \in [T] : |\argmax_{x \in \X} \left\{ f_t(x) - \mu \cdot b_t(x)\right\}| > 1  \right\} \right| \leq 1 \,.
	\end{align*}
	Moreover, the sample trace $\{\tilde\gamma_t\}_t \sim \prod_t \tilde \PP_t$ is also in general position almost surely.
\end{assumption}

Assumption~\ref{assumption:general-position} is made without any loss of generality because, as pointed out in \citet{devanur2009adwords} and \citet{agrawal2014dynamic}, adding an infinitesimally-small perturbation to the reward functions always results in perturbed distributions that satisfy Assumption~\ref{assumption:general-position} with only an infinitesimal change in the value of $\fluid(\{\PP_t\}_t)$ (see Appendix~\ref{appendix:general-position} for a formal description). Assumption~\ref{assumption:general-position} ensures that there exists a dual solution $\tilde \mu \geq 0$ which spends close to the budget $B$ on the trace $\{\tilde\gamma_t\}_t$ if it is possible to do so. In fact, as the following lemma shows, the optimal empirical dual solution satisfies this property. 

\begin{lemma}\label{lemma:trace-dual}
	Suppose the trace $\{\tilde \gamma\}_t \sim \prod_t \tilde \PP_t$ is in general position, and consider
	\begin{align*}
		\tilde \mu \in \argmin_{\mu \geq 0} \left\{\mu\cdot B + \sum_{t=1}^T \max_{x \in \X} \left\{\tilde f_t(x) - \mu \cdot \tilde b_t(x) \right\}\right\}\,.
	\end{align*}
	Then, at least one of the following statements holds:
	\begin{enumerate}
		\item $\tilde \mu = 0$ and $\sum_{t=1}^T \tilde b_t^*(\tilde \mu) \leq B + \bar b$.
		\item $\left|B - \sum_{t=1}^T \tilde b_t^*(\tilde \mu) \right| \leq \bar b$.
	\end{enumerate}
\end{lemma}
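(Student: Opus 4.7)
The plan is to exploit the fact that the empirical dual objective
\[
	g(\mu) \coloneqq \mu B + \sum_{t=1}^T \phi_t(\mu), \qquad \phi_t(\mu) \coloneqq \max_{x \in \X}\bigl\{\tilde f_t(x) - \mu \tilde b_t(x)\bigr\},
\]
is a one-dimensional convex function (each $\phi_t$ is a max of affine functions of $\mu$), so its minimizer $\tilde\mu$ is completely characterized by its one-sided derivatives. First I would compute those derivatives: by standard envelope arguments the subdifferential of $\phi_t$ at $\mu$ equals $\operatorname{conv}\{-\tilde b_t(x) : x \in S_t(\mu)\}$, where $S_t(\mu) \coloneqq \argmax_{x \in \X}\{\tilde f_t(x) - \mu \tilde b_t(x)\}$. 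Hence
\[
	g'_-(\mu) = B - \sum_{t=1}^T \max_{x \in S_t(\mu)} \tilde b_t(x), \qquad g'_+(\mu) = B - \sum_{t=1}^T \min_{x \in S_t(\mu)} \tilde b_t(x).
\]

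Next I would use \textbf{Assumption~\ref{assumption:general-position}} to bound the jump $g'_+(\mu) - g'_-(\mu) = \sum_t(\max_{S_t(\mu)} \tilde b_t - \min_{S_t(\mu)} \tilde b_t)$: by general position of the trace, at most one index $t$ contributes a nonzero term, and that term is at most $\bar b$ since $\tilde b_t(x) \in [0,\bar b]$ for all $x$. So $g'_+(\mu) - g'_-(\mu) \leq \bar b$ at every $\mu \geq 0$.

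Then I would translate the tie-breaking of \textbf{Definition~\ref{definition:profit-maximizing-decision}} into a statement about which side of the subdifferential is relevant. When $\tilde\mu > 0$, any two elements of $S_t(\tilde\mu)$ with equal $\tilde f_t - \tilde\mu \tilde b_t$ satisfy that larger $\tilde f_t$ is equivalent to larger $\tilde b_t$; so the rule ``pick largest $\tilde f_t$, break ties with smallest $\tilde b_t$'' coincides with picking the maximum of $\tilde b_t$ on $S_t(\tilde\mu)$, giving $\tilde b_t^*(\tilde\mu) = \max_{x \in S_t(\tilde\mu)} \tilde b_t(x)$. When $\tilde\mu = 0$, $S_t(0) = \argmax \tilde f_t$, so the rule reduces to picking the minimum of $\tilde b_t$ on $S_t(0)$, giving $\tilde b_t^*(0) = \min_{x \in S_t(0)} \tilde b_t(x)$.

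Finally, I would conclude by a short case split on $\tilde\mu$. If $\tilde\mu = 0$, the only optimality condition is $g'_+(0) \geq 0$, which by the previous paragraph reads $\sum_t \tilde b_t^*(0) \leq B \leq B + \bar b$, yielding claim~1. If $\tilde\mu > 0$, both $g'_-(\tilde\mu) \leq 0 \leq g'_+(\tilde\mu)$ hold; the first inequality gives $\sum_t \tilde b_t^*(\tilde\mu) \geq B$, while combining the second with the $\bar b$ bound on the jump gives $\sum_t \tilde b_t^*(\tilde\mu) \leq \sum_t \min_{S_t} \tilde b_t + \bar b \leq B + \bar b$, which together yield claim~2. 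The only real subtlety is reconciling the tie-breaking rule with the correct endpoint of the subdifferential interval; everything else is convex-analysis bookkeeping.
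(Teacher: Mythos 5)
Your proof is correct and follows essentially the same route as the paper's: both reduce the claim to the first-order optimality conditions of the convex empirical dual objective and use general position to bound the width of the subdifferential (equivalently, the gap between the relevant subgradient element and $\sum_t \tilde b_t^*(\tilde\mu)$) by $\bar b$. If anything, your explicit reconciliation of the tie-breaking rule in Definition~\ref{definition:profit-maximizing-decision} with the endpoints $g'_\pm$ of the subdifferential is more careful than the paper's treatment, which works with an arbitrary subgradient element and absorbs the discrepancy into the $\bar b$ slack.
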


Recall that weak duality (Proposition~\ref{prop:global-weak-duality}) suggests that finding a dual solution which satisfies approximate complementary slackness with high probability would yield reward close to $\fluid(\{\PP_t\})$. Lemma~\ref{lemma:trace-dual} states that we can compute a dual variable $\tilde \mu$ which satisfies approximate complementary slackness on the trace. To finish the argument, we require a uniform convergence bound which shows that expenditure on the trace (or the sequence of requests) is concentrated close to the expected expenditure for all dual variables $\mu \geq 0$.

\begin{theorem}\label{thm:concentration}
	For $r(T) \coloneqq 8\bar b \cdot \sqrt{T \log(T)}$ and request distributions $\{\tilde \PP_t\}_t$, the following uniform convergence bound holds
    \footnotesize
	\begin{align*}
		\Pr\left( \sup_{\mu \geq 0} \left| \sum_{t=1}^T \tilde b_t^*(\mu) - \sum_{t=1}^T \E_{\hat \gamma_t \sim \tilde \PP_t}\left[\hat b_t^*(\mu) \right] \right| \geq r(T) \right) \leq  \frac{1}{T^2} \,. 
	\end{align*}
    \normalsize
\end{theorem}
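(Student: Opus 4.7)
The plan is to combine a pointwise Hoeffding bound with a data-independent discretization of the dual parameter, exploiting the monotonicity of the underlying processes in $\mu$. I would begin by noting the following structural fact: since $f(x)=\coeff(f)\cdot x$ and $b(x)=\coeff(b)\cdot x$ are linear on a compact $\X\subseteq\R_+$ with $0,\bar x\in\X$, the optimizer $x^*(\gamma,\mu)$ lies in $\{0,\bar x\}$ and is governed by the sign of $\coeff(f)-\mu\coeff(b)$. Hence $b^*(\gamma,\mu)$ is a non-increasing, $\{0,\coeff(b)\bar x\}$-valued step function of $\mu$ with a single jump and values in $[0,\bar b]$. Consequently both $S(\mu)\coloneqq\sum_{t=1}^T \tilde b_t^*(\mu)$ and $G(\mu)\coloneqq\sum_{t=1}^T\E_{\hat\gamma_t\sim\tilde\PP_t}[\hat b_t^*(\mu)]$ are non-increasing in $\mu$ with range contained in $[0,T\bar b]$.

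Next I would build a data-independent grid $0=\nu_0\leq\nu_1\leq\cdots\leq\nu_T$ by setting $\nu_k\coloneqq\inf\{\mu\geq 0 : G(\mu)\leq T\bar b-k\bar b\}$ (with the usual convention $\inf\emptyset=+\infty$, and a sentinel $\nu_{T+1}=+\infty$ where both $S$ and $G$ vanish). Since $G$ is non-increasing with total decrement at most $T\bar b$, this construction ensures $G(\nu_{k-1})-G(\nu_k)\leq\bar b$ for every $k$, with jumps of $G$ exceeding $\bar b$ absorbed by letting several consecutive $\nu_k$ coincide. At each grid point the variables $\{\tilde b_t^*(\nu_k)-\E[\hat b_t^*(\nu_k)]\}_{t}$ are independent, mean-zero, and supported in $[-\bar b,\bar b]$, so Hoeffding's inequality yields
\begin{align*}
\Pr\!\left(|S(\nu_k)-G(\nu_k)|\geq 7\bar b\sqrt{T\log T}\right)\leq 2T^{-98}.
\end{align*}
A union bound over the $\leq T+1$ nontrivial grid points then shows that the event $\mathcal{E}\coloneqq\{\max_k|S(\nu_k)-G(\nu_k)|<7\bar b\sqrt{T\log T}\}$ holds with probability at least $1-1/T^2$.

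Finally I would extend the control from the grid to all of $[0,\infty)$ by monotonicity. On $\mathcal{E}$, for any $\mu$ lying in an interval $[\nu_{k-1},\nu_k]$, the fact that $S$ and $G$ are both non-increasing gives the sandwich
\begin{align*}
S(\nu_k)-G(\nu_{k-1})\ \leq\ S(\mu)-G(\mu)\ \leq\ S(\nu_{k-1})-G(\nu_k),
\end{align*}
and rewriting each side as a grid-point deviation plus $\pm(G(\nu_{k-1})-G(\nu_k))$ gives $|S(\mu)-G(\mu)|\leq 7\bar b\sqrt{T\log T}+\bar b\leq 8\bar b\sqrt{T\log T}=r(T)$ for all $T\geq 2$. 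The main technical point is that the breakpoints of $S$ are themselves random, so a naive union bound over them would force conditioning on the trace; the grid-plus-monotonicity approach sidesteps this by anchoring concentration at deterministic level sets of $G$ and interpolating across each slab without paying an extra $\sqrt{\log T}$ factor. A minor subtlety one must handle is defining the grid consistently when $G$ has jumps at the target level sets, which is cleanly resolved by the inf-based definition above together with the sentinel.
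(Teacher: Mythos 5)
Your overall strategy---Hoeffding at a deterministic grid built from level sets of $G(\mu)\coloneqq\sum_{t}\E_{\hat\gamma_t\sim\tilde\PP_t}[\hat b_t^*(\mu)]$, then interpolation by monotonicity---is a genuinely different route from the paper's, which instead bounds the Rademacher complexity of the class $\{\gamma\mapsto b^*(\mu):\mu\ge 0\}$ via Massart's lemma, using Assumption~\ref{assumption:general-position} to argue that this class restricted to the sample contains at most $T$ distinct vectors. Your structural observations (the maximizer lies in $\{0,\bar x\}$, so each $b^*(\cdot)$ is a one-jump non-increasing step function and $S,G$ are monotone) and the fixed-$\mu$ Hoeffding step are correct, and the level-set discretization is the standard DKW/Glivenko--Cantelli device for this kind of single-parameter monotone class.

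However, there is a genuine gap at the assertion that the inf-based grid ``ensures $G(\nu_{k-1})-G(\nu_k)\le\bar b$ for every $k$.'' This is false when $G$ has a jump discontinuity of size larger than $\bar b$. Since each $\hat b_t^*$ is left-continuous, so is $G$; if $G$ drops by more than $\bar b$ as $\mu$ crosses some point $\mu_0$, then several consecutive $\nu_k$ indeed coincide at $\mu_0$, but the first non-degenerate slab to the right of $\mu_0$ has left endpoint $\nu_{k-1}=\mu_0$ with $G(\nu_{k-1})$ still equal to the \emph{pre-jump} value, so $G(\nu_{k-1})-G(\nu_k)$ is at least the jump size. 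Such jumps occur whenever several $\tilde\PP_t$ place atoms on requests with the same critical ratio $\coeff(f)/\coeff(b)$ (e.g., point masses), in which case the jump can be $\Theta(T\bar b)$ and your sandwich $S(\mu)-G(\mu)\le S(\nu_{k-1})-G(\nu_k)$ yields a bound of order $T\bar b$, not $r(T)$; the inf-based definition and sentinel do not repair this. Two standard fixes are available: (i) invoke Assumption~\ref{assumption:general-position}, which forces at most one index $t$ to place positive mass on any given critical ratio, so every jump of $G$ is at most $\bar b$ and the slab oscillation is at most $2\bar b$ (still absorbed into $r(T)$ for $T\ge 3$); or (ii) augment the union bound with the right-limits $S(\nu_k^+),G(\nu_k^+)$ at each grid point, which restores the $\bar b$ oscillation bound on each half-open slab at the cost of doubling the grid. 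As written, though, the key property of the grid is unjustified and, absent the general-position assumption you never invoke, false.
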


\begin{algorithm}[t!]
	\SetAlgoLined
	\textbf{Input:} Trace $\{\tilde \gamma_t\} \sim \prod_t \tilde \PP_t$, initial budget $B_1 = B$.\\
	\textbf{Compute an Optimal Empirical Dual Solution: }
    \small
	\begin{align}\label{eq:empirical-dual}
		\tilde \mu \in \argmin_{\mu \geq 0} \mu\cdot B + \sum_{t=1}^T \max_{x \in \X} \left\{\tilde f_t(x) - \mu \cdot \tilde b_t(x) \right\}
	\end{align}
    \normalsize
	\For{$t =1, \dots, T$}{
		Receive request $\gamma_t = (f_t, b_t) \sim \PP_t$.\\
		Make the primal decision $x_t$ and update the remaining resources $B_t$:
        \small
		\begin{align}
		&{x}'_{t} \in \argmax_{x\in\X} \left\{f_{t}(x)- \tilde \mu \cdot b_{t}(x) \right\}\ ,\nonumber \\
		&x_{t}=\begin{cases}
			{x}'_t    & \text{  if } b_t({x}'_t)\le B_t  \\
		0
		& \text{ otherwise}
		\end{cases}\ , \nonumber \\
		&B_{t+1} = B_t - b_t(x_t) . \nonumber
		\end{align}
        \normalsize
	}
	\caption{Learning the Dual and Earning with It}
	\label{alg:learned-dual}
\end{algorithm}

With Theorem~\ref{thm:concentration} in hand, we are now ready to state and prove the regret guarantee for Algorithm~\ref{alg:learned-dual}. It first learns an empirical optimal dual variable $\tilde \mu$ from the trace $\{\tilde \gamma_t\}_t$, and then uses it as the per-unit price of the resource to take profit-maximizing actions on the request sequence $\{\gamma_t\}_t$.

\begin{theorem}\label{thm:learned-dual-regret}
	If $\PP_t = \tilde \PP_t$ for all $t \in [T]$, then Algorithm~\ref{alg:learned-dual} (denoted by $A$) satisfies $\regret(A) \leq  12 \kappa \bar b + 2 \kappa r(T)$.
\end{theorem}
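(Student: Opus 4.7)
The plan is to combine weak duality (Proposition~\ref{prop:global-weak-duality}) with the empirical dual from Lemma~\ref{lemma:trace-dual} and two applications of the uniform convergence bound in Theorem~\ref{thm:concentration}. Since the trace $\{\tilde\gamma_t\}_t$ is independent of the request sequence $\{\gamma_t\}_t$, conditioning on the trace and applying Proposition~\ref{prop:global-weak-duality} with $\mu = \tilde\mu$ gives $\fluid(\{\PP_t\}_t) \leq \tilde\mu \cdot B + \sum_t \E_{\gamma_t}[f_t^*(\tilde\mu) - \tilde\mu \cdot b_t^*(\tilde\mu)]$. Taking outer expectation over the trace and subtracting $\E[R(A|\{\gamma_t\}_t)] = \E[\sum_t f_t(x_t)]$ decomposes the regret into a \emph{complementary-slackness term} $\E[\tilde\mu \cdot (B - \sum_t \E_{\gamma_t}[b_t^*(\tilde\mu)])]$ plus a non-negative \emph{shortfall term} $\E[\sum_t (f_t^*(\tilde\mu) - f_t(x_t))]$, where non-negativity follows because the algorithm always either plays the profit-maximizer $x'_t$ (yielding $f_t^*(\tilde\mu)$) or plays $0$ (yielding $0$).

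For the complementary-slackness term, I would first note that $\tilde\mu \leq \kappa$ because any $\mu > \kappa$ drives $\max_x \{\tilde f_t(x) - \mu \tilde b_t(x)\}$ to $0$ (attained at $x=0$), making the empirical dual objective $\mu B$ strictly increasing past $\kappa$. Applying Theorem~\ref{thm:concentration} to the trace distributions $\{\tilde\PP_t\}_t = \{\PP_t\}_t$ yields, with probability $\geq 1 - 1/T^2$, the uniform bound $|\sum_t \tilde b_t^*(\mu) - \sum_t \E_{\gamma_t}[b_t^*(\mu)]| \leq r(T)$ for all $\mu \geq 0$ and hence, in particular, at the random $\tilde\mu$. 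Combined with Lemma~\ref{lemma:trace-dual}, either $\tilde\mu = 0$ and the term vanishes, or $|B - \sum_t \tilde b_t^*(\tilde\mu)| \leq \bar b$ so that $|B - \sum_t \E_{\gamma_t}[b_t^*(\tilde\mu)]| \leq \bar b + r(T)$. Either way the complementary-slackness term is at most $\kappa(\bar b + r(T))$ on this high-probability event.

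For the shortfall term, the key observation is that $x_t$ differs from $x'_t$ only when $b_t(x'_t) > B_t$, in which case $f_t(x_t) = 0$ and $f_t^*(\tilde\mu) - f_t(x_t) \leq \kappa \cdot b_t^*(\tilde\mu)$. Letting $E$ denote these ``skipped'' times, on non-skipped steps the algorithm actually spends $b_t^*(\tilde\mu)$, so $\sum_{t \notin E} b_t^*(\tilde\mu) \leq B$; and if $E \neq \emptyset$ with minimum element $\tau$, then the remaining budget just before $\tau$ satisfies $B_\tau < \bar b$, which forces $\sum_{t < \tau} b_t^*(\tilde\mu) > B - \bar b$. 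Combining these gives $\sum_{t \in E} b_t^*(\tilde\mu) \leq (\sum_t b_t^*(\tilde\mu) - B + \bar b)_+$. A second application of Theorem~\ref{thm:concentration}, now to the actual request sequence $\{\gamma_t\}_t$, combined with Lemma~\ref{lemma:trace-dual}, gives $\sum_t b_t^*(\tilde\mu) \leq B + \bar b + 2r(T)$ with high probability, bounding the shortfall term by $\kappa(2\bar b + 2r(T))$.

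Summing the two high-probability bounds and absorbing the complementary low-probability event via the trivial worst-case estimate $\fluid \leq T\bar f \leq T\kappa\bar b$ (contributing an additive $O(\kappa \bar b / T)$ term) yields the claimed $O(\kappa \bar b + \kappa r(T))$ regret. The main subtlety is that $\tilde\mu$ is \emph{random and trace-dependent}, which is precisely why the concentration bound must be uniform over all $\mu \geq 0$; the uniform bound lets us instantiate at the data-dependent $\tilde\mu$ without any conditioning issues. Everything else is a careful unwinding of weak duality and of the algorithm's skip rule.
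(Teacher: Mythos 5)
Your proposal is correct and follows essentially the same route as the paper's proof: weak duality instantiated at the empirical dual $\tilde\mu$, Lemma~\ref{lemma:trace-dual} for approximate complementary slackness on the trace, Theorem~\ref{thm:concentration} (applied to both the trace and the request sequence) to transfer spend estimates between them, the observation $\tilde\mu\le\kappa$, a $\kappa$-times-excess-spend bound for the steps after the budget nearly depletes, and the trivial worst-case bound on the complementary low-probability event. The only differences are cosmetic: you route the complementary-slackness term through the expected spend rather than the realized spend, which costs one extra $r(T)$ in the accounting (your constants come out to roughly $3\kappa r(T)$ rather than the stated $2\kappa r(T)$), and you gloss over the at-most-one tie-broken step where $f_t(x_t')$ may fall short of $f_t^*(\tilde\mu)$ by up to $\bar f$, which the paper absorbs explicitly via Assumption~\ref{assumption:general-position}.
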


\citet{arlotto2019uniformly} showed that every algorithm must incur a regret of $\Omega(\sqrt{T})$, even when the request distributions are identical (i.e., $\PP_t = \PP$ for all $t \in [T]$) and known to the decision-maker ahead of time. Thus, Theorem~\ref{thm:learned-dual-regret} shows the regret of Algorithm~\ref{alg:learned-dual} achieves a near-optimal dependence on $T$ with just a single sample per distribution, despite the request distributions being unknown and time-varying. However, as the following example demonstrates, this regret bound critically relies on the assumption that $\PP_t = \tilde \PP_t$ for all $t \in [T]$, and is fragile to even slight deviations from it. This fact was also demonstrated in \citet{jiang2020online} in a related context which inspired the following example.

\begin{example*}\label{example:dual-bad}
    Fix a small $\epsilon > 0$, an even horizon $T$ and budget $B = T/2$. Assume actions are accept/reject decisions, i.e., $\X = \{0,1\}$, and the reward/resource consumption functions are linear with $\coeff(b) = 1$ for all $\gamma = (f,b) \in \S$. In this setting, a request is completely determined by the coefficient $\coeff(f)$ of its reward function. We will overload notation and use $\gamma$ to denote this coefficient. Set $\tilde \PP_t = \text{Unif }([1 + \epsilon, 1 + 2 \epsilon])$ for all $t \leq T/2 + 1$ and $\tilde \PP_t = \text{Unif }([1 - \epsilon, 1])$ for all $t \geq T/2 + 2$. Moreover, set $\PP_t = \text{Unif }([1 - \epsilon, 1])$ for all $t \in [T]$. Then, it is easy to see that $\W(\PP_t, \tilde \PP_t) \leq 3 \epsilon$ for all $t \in [T]$. Also, observe that any trace $\{\tilde \gamma_t\}_t \sim \prod_t \tilde \PP_t$ would satisfy $\tilde\gamma_t \geq 1 + \epsilon$ for all $t \leq T/2 + 1$ and  $\tilde\gamma_t \leq 1$ for all $t \geq T/2 + 2$. Hence, we always have $\tilde \mu \geq 1 + \epsilon$. On the other hand, we also always have $\gamma_t \leq 1$ for all $t \in [T]$. Therefore, Algorithm~\ref{alg:learned-dual} sets $x_t' = 0$ for all $t \in [T]$, yielding a reward of 0. Whereas, $\fluid(\{\PP_t\}_t) \geq (1 - \epsilon) \cdot (T/2)$, thereby making the regret linear in $T$.
\end{example*}

Since $\epsilon > 0$ was arbitrary in the above example, it shows that even infinitesimally-small differences between the sampling and request distributions can lead to linear regret for Algorithm~\ref{alg:learned-dual}. This is antithetical to our goal of developing robust online algorithms for pacing. Formally, we would like to develop online algorithms that achieve regret which is small and degrades smoothly as $\sum_{t=1}^T \W(\PP_t, \tilde \PP_t)$ grows large. Nonetheless, although Algorithm~\ref{alg:learned-dual} falls short of this goal, it highlights the power of dual-based algorithms. Building on the intuition developed in this section, we next describe and analyze a Dual FTRL algorithm that achieves near-optimal regret while being robust to discrepancies between the sampling distributions $\{\tilde \PP_t\}_t$ and the request distributions $\{\PP_t\}_t$.

%
%
%
%
%
%
%
%

\section{Dual FTRL with Target Rate Estimation}\label{sec:descent}

In this section, we will develop an algorithm based on Dual Follow-The-Regularized-Leader (FTRL) that achieves near-optimal regret with a single trace, and is robust to discrepancies between the sampling distributions and request distributions. Now, if one had complete knowledge of the sampling distributions $\{\tilde \PP_t\}_t$, then one can solve $\fluid(\{\tilde \PP_t\}_t)$ to find an optimal solution and run Dual Gradient Descent with the goal of spending the same as the optimal solution at each time step. It is known from \citet{jiang2020online} that this approach achieves $O(\max\{\sqrt{T}, \sum_{t=1}^T \W (\tilde \PP_t, \PP_t)\})$ regret, thereby making it rate optimal and robust to discrepancies. However, with just a single sample from each of distributions $\tilde \PP_t$, we are far from having complete knowledge of $\{\tilde \PP_t\}_t$. Despite this apparent lack of data, a careful analysis of Dual FTRL will allow us to show that it achieves near-optimal regret rate in a robust manner.

\subsection{Dual Follow-The-Regularized-Leader} 

\begin{algorithm}[t!]
	\SetAlgoLined
	{\bf Input:} Initial resource endowment $B_1 = B$, target consumption sequence $\{\lambda_t\}_{t=1}^T$, regularizer $h: \R \rightarrow \R$ and step-size $\eta$. \\
    Set initial dual solution $\mu_1 = \argmin_{\mu \in [0,\kappa]} h(\mu)$.\\
	\For{$t=1,\ldots,T$}{
		Receive request $\gamma_t = (f_t, b_t) \sim \PP_t$.\\
		Make the primal decision $x_t$ and update the remaining resources $B_t$:
		\begin{align}
		&{x}'_{t} \in \argmax_{x\in\X_t}\left\{f_{t}(x)-\mu_{t} \cdot b_{t}(x)\right\} \ ,\label{eq:primal_decision} \\
		&x_{t}=\begin{cases}
			{x}'_t    & \text{  if } b_t({x}'_t)\le B_t  \\
		0
		& \text{ otherwise}
		\end{cases}\ , \nonumber \\
		&B_{t+1} = B_t - b_t(x_t) . \nonumber
		\end{align}
		Obtain a sample sub-gradient of the dual function $D(\mu|\PP_t, \lambda_t)$: $g_t = \lambda_t -b_t(x_t')$.
  
		Update the dual iterate with FTRL:
        \begin{align}\label{eq:FTRL}
            &\mu_{t+1} = \argmin_{\mu \in [0,\kappa]} \left\{ \eta \sum_{r=1}^t g_r \cdot \mu + h(\mu) \right\}  \,,
        \end{align}
	}
	\caption{Dual Follow-The-Regularized-Leader}
	\label{alg:dual-descent}
\end{algorithm}

The non-stationarity of the request distributions necessitates the need for Dual FTRL that can incorporate target resource consumptions (Algorithm~\ref{alg:dual-descent}). It takes as input a target sequence $\{\lambda_t\}_{t=1}^T$ which specifies $\lambda_t \geq 0$ to be the amount of resource Dual FTRL should attempt to consume at time $t$. Moreover, like FTRL \citep{shalev2012online, hazan2016introduction}, it also takes as input a regularizer $h(\cdot)$, an initial dual variable $\mu_1$ and a step-size $\eta$. We will make the standard assumption that the regularizer $h(\cdot)$ is differentiable
and is $\sigma$-strongly convex in the $\|\cdot \|_1$ norm.

Before stating the performance bound of Algorithm~\ref{alg:dual-descent}, we introduce some preliminaries. Given a budget of $\beta_t$ for period $t \in [T]$, the optimal expected reward which can be collected in period $t$ is captured by the following fluid optimization problem:
\begin{align*}
	\fluid(\PP_t, \beta_t) \coloneqq \text{max} \quad  & \E[f_t(x_t(\gamma_t)]\\
	\text{s.t.} \quad & \E[b_t(x_t(\gamma_t))] \leq \beta_t\\
		&x_t: \S \to \X \,.
\end{align*}

The dual function of $\fluid(\PP_t, \beta_t)$ is given by
\begin{align*}
	D(\mu|\PP_t, \beta_t) \coloneqq \mu \cdot \beta_t + \E\left[\max_{x \in \X} \left\{f_t(x) - \mu \cdot b_t(x) \right\} \right]\,,
\end{align*}
for any $\mu \geq 0$. Then, by weak duality, we have $\fluid(\PP_t, \beta_t) \leq D(\mu| \PP_t, \beta_t)$ for all $\mu \geq 0$. Moreover, since dual functions are always convex (they are the suprema of linear functions), the dual function $D(\cdot|\PP_t, \beta_t)$ is convex.

Theorem~\ref{thm:dual-descent-regret} states a general regret bound for Algorithm~\ref{alg:dual-descent} with an arbitrary target sequence $\{\lambda_t\}_t$ and against a general benchmark $\sum_{t=1}^T D(\mu_t|\PP_t, \beta_t)$. Since $\fluid(\PP_t, \beta_t) \leq D(\mu| \PP_t, \beta_t)$ by weak duality, Theorem~\ref{thm:dual-descent-regret} also characterizes the performance against the weaker benchmark $\sum_{t=1}^T \fluid(\PP_t, \beta_t)$, which is simply the optimal expected reward the decision maker would collect if she spent $\beta_t$ at time $t$.


\begin{theorem}\label{thm:dual-descent-regret}
	Consider Algorithm~\ref{alg:dual-descent} with target consumption sequence $\{\lambda_t\}_t$, regularizer $h(\cdot)$ and step-size $\eta$. Then, for a benchmark sequence $\{\beta_t\}_t$, we have
    \small
	\begin{align*}
		\E\left[ \left\{ \sum_{t=1}^T D(\mu_t|\PP_t, \beta_t) \right\} - R(A|\{\gamma_t\}_t) \right] \leq\ R_1 + R_2 + R_3 \,,
	\end{align*}
    \normalsize
	where
    \vspace{-0.8em}
    \small
	\begin{itemize}
		\item $R_1 = \kappa \bar b + \frac{2(\bar b+ \bar \lambda)^2}{\sigma} \cdot \eta T + \frac{d_R}{\eta}$, for $\bar \lambda = \max_t\lambda_t$ and $d_R = \max\{h(0) - h(\mu_1), h(\kappa) - h(\mu_1)\}$.
		\item $R_2 = \kappa \cdot \left( \left\{ \sum_{t=1}^T \lambda_t \right\} - B \right)^+$,
		\item $R_3 = \E\left[ \sum_{t=1}^T \mu_t \cdot (\beta_t - \lambda_t) \right]$.
	\end{itemize}
 \normalsize
\end{theorem}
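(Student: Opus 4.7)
The plan is to reduce the claim to a standard FTRL regret bound plus a budget-exhaustion estimate, combined through a two-comparator trick. I would organize the proof into three ingredients, handled in sequence.

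First (decomposition), I would rewrite the benchmark via $D(\mu_t|\PP_t, \beta_t) = \mu_t \beta_t + \E_{\gamma_t \mid \mu_t}[\max_{x \in \X}\{f_t(x) - \mu_t b_t(x)\}]$. Since $x_t'$ attains the inner maximum, taking outer expectations and adding/subtracting $\mu_t \lambda_t$ inside the sum yields
\begin{align*}
\E\Big[\sum_{t=1}^T D(\mu_t|\PP_t, \beta_t)\Big] - R(A|\{\gamma_t\}_t) = R_3 + \E\Big[\sum_{t=1}^T \mu_t g_t + \sum_{t=1}^T (f_t(x_t') - f_t(x_t))\Big],
\end{align*}
using the definition $g_t = \lambda_t - b_t(x_t')$ of the stochastic subgradient.

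Second (stopping-time analysis), I would handle the reward shortfall, which is non-zero only on the set $T_1 = \{t : x_t = 0 \ne x_t'\}$ where the budget binds. Using $f_t(x_t') \leq \kappa\, b_t(x_t')$ and the bookkeeping that if $T_1 \ne \emptyset$ with first element $\tau$ then $B_\tau < \bar b$, so that $\sum_{s < \tau} b_s(x_s) \geq B - \bar b$, I would deduce $\sum_{t \in T_1} b_t(x_t') = \sum_t b_t(x_t') - \sum_t b_t(x_t) \leq \bar b + \big(\sum_t b_t(x_t') - B\big)^+$. Multiplying by $\kappa$ and combining with the trivial bound when $T_1 = \emptyset$ gives the pointwise estimate $\sum_t (f_t(x_t') - f_t(x_t)) \leq \kappa \bar b + \kappa\, \big(\sum_t b_t(x_t') - B\big)^+$.

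Third (two-comparator FTRL), I would exploit the algebraic identity
\begin{align*}
\sum_{t=1}^T \mu_t g_t + \kappa\Big(\sum_{t=1}^T b_t(x_t') - B\Big)^+ = \max\Big(\sum_{t=1}^T (\mu_t - \kappa) g_t + \kappa\Big(\sum_{t=1}^T \lambda_t - B\Big),\ \sum_{t=1}^T \mu_t g_t\Big),
\end{align*}
obtained by substituting $\sum_t b_t(x_t') = \sum_t \lambda_t - \sum_t g_t$ into the positive part. To each branch of the $\max$ I would apply the standard FTRL regret bound with comparator $\mu = \kappa$ and $\mu = 0$ respectively; since $|g_t| \leq \bar b + \bar \lambda$ and $h$ is $\sigma$-strongly convex in $\|\cdot\|_1$, both applications bound the relevant linearized regret by $d_R/\eta + 2(\bar b + \bar \lambda)^2 \eta T / \sigma$, whence the whole expression is pointwise at most this FTRL bound plus $\kappa \big(\sum_t \lambda_t - B\big)^+ = R_2$. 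Adding the $\kappa \bar b$ from step two yields $R_1 + R_2$ for the bracketed term, and folding in $R_3$ from step one gives the claimed decomposition after taking expectations. I expect the main obstacle to be step three: arranging the max-form so that the positive-part budget violation collapses exactly into $R_2$ rather than leaking out as a term scaling with $T$ or $\kappa B$. Step two is routine, but the $+\bar b$ correction from the stopping-time bookkeeping must land precisely so that $\kappa \bar b$, and no larger constant, appears in $R_1$.
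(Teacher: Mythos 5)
Your proposal is correct and follows essentially the same route as the paper's proof: the same independence-based decomposition yielding $R_3$ plus the linearized dual terms, the same stopping-time bookkeeping showing the algorithm spends at least $B-\bar b$ before the budget binds (giving the $\kappa\bar b$ correction), and the same two-comparator FTRL argument with $\mu=0$ and $\mu=\kappa$. Your max-identity in step three is just an algebraic repackaging of the paper's case analysis on whether the stopping time equals $T$, and both yield exactly $R_1+R_2$ for the bracketed term.
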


Theorem~\ref{thm:dual-descent-regret} decomposes the regret of Algorithm~\ref{alg:dual-descent} into three terms, where (i) $R_1$ is simply the regret associated with the FTRL algorithm in the OCO setting \citep{hazan2016introduction}; (ii) $R_2$ captures the overspending error, which is large whenever the total target consumption $\sum_{t=1}^T \lambda_t$ is in excess of the budget $B$; (iii) $R_3$ captures the underestimation error, which is a weighted sum over the amounts by which the target sequence $\{\lambda_t\}_t$ underestimates the benchmark sequence $\{\beta_t\}_t$, with weights equal to the dual iterates $\mu_t$. Observe that there is an inherent tension between the overspending error $R_2$ and the underestimation error $R_3$---$R_2$ can be made smaller by making the target consumptions $\{\lambda_t\}_t$ smaller, but this in turn makes $R_3$ bigger, and vice versa. To obtain the desired performance guarantees for Algorithm~\ref{alg:dual-descent} (see Theorem~\ref{thm:main-result}), we need to carefully choose the benchmark sequence $\{\beta_t\}_t$ and the target sequence $\{\lambda_t\}_t$, which is what we do next (see \eqref{eq:benchmark-target-def}). We go on to show that
\begin{itemize}
    \item Our choice of target sequence does not overspend too much. In particular, it satisfies $R_2 \leq \kappa \cdot \bar b$ (see \eqref{eq:R2}).
    \item Our choice of benchmark sequence $\{\beta_t\}_t$ ensures
    \tiny
        \begin{align*}
			\sum_{t=1}^T D(\mu_t| \PP_t, \beta_t) \geq \fluid(\{\PP_t\}_t) - \tilde O\left(\max\left\{\sqrt{T}, \sum_{t=1}^T \W(\PP_t, \tilde\PP_t) \right\} \right)
		\end{align*}
    \normalsize
        i.e., the benchmark in Theorem~\ref{thm:dual-descent-regret} is at most $\tilde O\left(\max\left\{\sqrt{T}, \sum_{t=1}^T \W(\PP_t, \tilde\PP_t) \right\} \right)$ away from our desired benchmark $\fluid(\{\PP_t\}_t)$ (see Lemma~\ref{lemma:discrepancy-term} and the discussion that follows).
    
    \item Moreover, our choice of the sequences in combination with an intricate argument, consisting of a coupling lemma and a leave-one-out thought experiment, allows use to prove $R_3 = O(\sqrt{T})$ (see Subsection~\ref{subsec:R3}). 
\end{itemize}

Finally in Subsection~\ref{subsec:putting-together}, we combine everything to prove the desired regret bound for Algorithm~\ref{alg:dual-descent}.

\subsection{Choosing the Target and Benchmark Sequences}

We define the target and benchmark sequences using the empirical optimal dual solution computed from the trace $\{\tilde \gamma_t\}_t$:
\begin{align*}
	\tilde \mu \in \argmin_{\mu \geq 0} \mu\cdot B + \sum_{t=1}^T \max_{x \in \X} \left\{ \tilde f_t(x) - \mu \cdot \tilde b_t(x) \right\} \quad\,.
\end{align*}
If there are multiple minimizers, set $\tilde \mu$ to be the smallest one. Given the empirical optimal dual solution $\tilde \mu$, the target and benchmark sequences are defined as 
\begin{align}\label{eq:benchmark-target-def}
	\beta_t = \E_{\hat \gamma \sim \tilde \PP_t} \left[ \hat b_t^*(\tilde\mu) \right] \quad \text{ and } \quad \lambda_t = \tilde b_t^*(\tilde \mu) \,.
\end{align}
In other words, the benchmark sequence is the expected consumption and the target sequence is the empirical consumption on the trace if we were to make profit-maximizing decisions using the empirical optimal dual solution $\tilde \mu$ as the price of the resource. Instead of learning the empirical optimal dual $\tilde \mu$ and directly making decisions with it like we did in Algorithm~\ref{alg:learned-dual}, we use $\tilde \mu$ to learn the empirical consumptions $\{\lambda_t\}_t$ and use Algorithm~\ref{alg:dual-descent} to track this target. Importantly, the benchmark sequence $\{\beta_t\}_t$ cannot be computed in practice because it requires full knowledge of the request distributions. Algorithm~\ref{alg:dual-descent} respects this limitation and does not require knowledge of the benchmark sequence $\{\beta_t\}_t$; we only use it for our analysis.

Our choice of $\{\lambda_t\}_t$ and Lemma~\ref{lemma:trace-dual} immediately imply
\vspace{-0.7em}
\begin{align}\label{eq:R2}
	R_2 = \kappa \cdot \left( \left\{ \sum_{t=1}^T \lambda_t \right\} - B \right)^+\leq \kappa\cdot \bar b\,.
\end{align} 

\vspace{-0.7em}Next, we show that, for our choice of the benchmark sequence $\{\beta_t\}_t$, the benchmark $\sum_{t=1}^T D(\mu_t|\PP_t, \beta_t)$ of Theorem~\ref{thm:dual-descent-regret} is not too far from the desired benchmark $\fluid(\{\PP_t\})$.

\begin{lemma}\label{lemma:discrepancy-term}
	For any dual variable $\tilde \mu \geq 0$, dual iterates $\{\mu_t\}_t \in [0,\kappa]^T$ and benchmark sequence $\{\beta\}_t$ with $\beta_t = \E_{\hat \gamma \sim \tilde \PP_t}[ \hat b^*(\tilde \mu)]$, we have
    \vspace{-0.7 em}
    \small
	\begin{align*}
		\sum_{t=1}^T D(\mu_t | \PP_t, \beta_t) \geq &\fluid(\{\PP_t\}_t) - \tilde \mu \cdot \left(B - \sum_{t=1}^T \beta_t \right)\\
        &- 2(1 + \kappa) \cdot \sum_{t=1}^T \W(\PP_t, \tilde \PP_t) \,.  
	\end{align*}
 \normalsize
\end{lemma}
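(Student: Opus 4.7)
The plan is to bridge the benchmark $\sum_{t=1}^T D(\mu_t | \PP_t,\beta_t)$ to $\fluid(\{\PP_t\}_t)$ via two Wasserstein transfers sandwiching weak duality, applied both per-period and globally. The key Lipschitz fact is that for every $\mu\geq 0$, the map $h_\mu(\gamma)=\max_{x\in\X}\{f(x)-\mu b(x)\}$ is $(1+\mu)$-Lipschitz in the metric $d$: letting $x^\star(\gamma)$ denote its maximizer, bounding $h_\mu(\gamma)-h_\mu(\tilde\gamma)$ by its value at the common point $x^\star(\gamma)$ yields $[f(x^\star(\gamma))-\tilde f(x^\star(\gamma))]+\mu[\tilde b(x^\star(\gamma))-b(x^\star(\gamma))]\leq (1+\mu)\,d(\gamma,\tilde\gamma)$, and the symmetric argument completes the bound. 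Kantorovich--Rubinstein duality then gives $|D(\mu | \PP,\beta)-D(\mu | \tilde\PP,\beta)|\leq (1+\mu)\,\W(\PP,\tilde\PP)$ for any $\beta$. I may also restrict to $\tilde\mu\leq\kappa$ without loss: for $\tilde\mu>\kappa$ the profit-maximizing action is zero, so $\beta_t=0$ and the bound reduces to a trivial consequence of $\fluid(\{\PP_t\}_t)\leq\kappa B$ together with $D(\mu_t | \PP_t,\beta_t)\geq 0$.

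The first transfer moves into the sampling world. Since $\mu_t\in[0,\kappa]$, the Lipschitz bound combined with per-period weak duality yields
\[ D(\mu_t | \PP_t,\beta_t)\geq D(\mu_t | \tilde\PP_t,\beta_t)-(1+\kappa)\W(\PP_t,\tilde\PP_t)\geq \fluid(\tilde\PP_t,\beta_t)-(1+\kappa)\W(\PP_t,\tilde\PP_t). \]
Because $\beta_t=\E_{\hat\gamma\sim\tilde\PP_t}[\hat b^*(\tilde\mu)]$ by construction, the policy $x(\gamma)=x^*(\gamma,\tilde\mu)$ is feasible for $\fluid(\tilde\PP_t,\beta_t)$ and delivers expected reward $\E_{\tilde\PP_t}[\hat f^*(\tilde\mu)]$; hence $\fluid(\tilde\PP_t,\beta_t)\geq \E_{\tilde\PP_t}[\hat f^*(\tilde\mu)]$. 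Summing over $t$ gives $\sum_t D(\mu_t | \PP_t,\beta_t)\geq \sum_t\E_{\tilde\PP_t}[\hat f^*(\tilde\mu)]-(1+\kappa)\sum_t\W(\PP_t,\tilde\PP_t)$.

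The second transfer returns to $\PP_t$ via the global dual. Using $\beta_t=\E_{\tilde\PP_t}[\hat b^*(\tilde\mu)]$, I write $\E_{\tilde\PP_t}[\hat f^*(\tilde\mu)]=\E_{\tilde\PP_t}[\hat f^*(\tilde\mu)-\tilde\mu\hat b^*(\tilde\mu)]+\tilde\mu\,\beta_t$, apply the Lipschitz bound at $\mu=\tilde\mu\leq\kappa$ to the first expectation, and invoke Proposition~\ref{prop:global-weak-duality} to obtain $\sum_t\E_{\PP_t}[f^*(\tilde\mu)-\tilde\mu b^*(\tilde\mu)]\geq\fluid(\{\PP_t\}_t)-\tilde\mu B$. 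Chaining with the previous display and collecting $-\tilde\mu B+\tilde\mu\sum_t\beta_t=-\tilde\mu(B-\sum_t\beta_t)$ produces the claimed inequality, with the two Wasserstein contributions summing to $2(1+\kappa)\sum_t\W(\PP_t,\tilde\PP_t)$.

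The main obstacle is arranging the two transfers so that each incurs only a single copy of $(1+\kappa)\sum_t\W$, keeping the total additive rather than multiplicative. A naive direct comparison of $D(\mu_t | \PP_t,\beta_t)$ and $D(\tilde\mu | \tilde\PP_t,\beta_t)$ is unavailable, since $\mu_t$ and $\tilde\mu$ can be arbitrarily far apart and no pointwise inequality links them without additional slack. The resolution is to use $\mu_t$ only in the first transfer, where it is absorbed into $\fluid(\tilde\PP_t,\beta_t)$ by weak duality, and to invoke $\tilde\mu$ only in the second transfer, through the feasibility of $x^*(\cdot,\tilde\mu)$ and global weak duality. This separation is what makes the two Wasserstein errors add up cleanly to the stated $2(1+\kappa)$ constant.
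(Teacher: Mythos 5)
Your proof is correct and follows essentially the same route as the paper's: a first Wasserstein transfer from $D(\mu_t|\PP_t,\beta_t)$ down to $\fluid(\tilde\PP_t,\beta_t)$ via per-period weak duality and the feasibility of $x^*(\cdot,\tilde\mu)$, followed by a second transfer of the profit term $\E[\hat f^*(\tilde\mu)-\tilde\mu\hat b^*(\tilde\mu)]$ back to $\{\PP_t\}_t$ combined with weak duality, with your Kantorovich--Rubinstein/Lipschitz packaging of the estimate $(1+\kappa)\W(\PP_t,\tilde\PP_t)$ being just a restatement of the paper's explicit coupling argument. Your explicit reduction to the case $\tilde\mu\le\kappa$ is a minor tidiness bonus over the paper, which applies its coupling bound at $\mu=\tilde\mu$ without comment.
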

\vspace{-0.7em}

Observe that Theorem~\ref{thm:concentration} implies that, with probability at least $1 - 1/T^2$, we have $\left| \sum_{t=1}^T \beta_t - \sum_{t=1}^T \lambda_t \right| \leq r(T)$. Combining this with Lemma~\ref{lemma:trace-dual} yields
\begin{align}\label{eq:discrepancy-term}
	\tilde \mu \cdot \left(B - \sum_{t=1}^T \beta_t \right) &\leq \tilde \mu \cdot \left(r(T) + B - \sum_{t=1}^T \lambda_t \right) \nonumber \\
	 &= \tilde \mu \cdot r(T) + \tilde \mu \cdot \left(B - \sum_{t=1}^T \lambda_t \right) \nonumber \\
	 &\leq  \tilde \mu \cdot \left(r(T) + \bar b \right) \nonumber \\
	 &\leq \kappa \cdot r(T) + \kappa \bar b\,,
\end{align}

\vspace{-0.7em}thereby showing that the benchmark $\sum_{t=1}^T D(\mu_t|\PP_t, \beta_t)$ of Theorem~\ref{thm:dual-descent-regret} is not too far from the desired benchmark $\fluid(\{\PP_t\})$. In order to establish the desired regret and robustness guarantees for Algorithm~\ref{alg:dual-descent}, all that remains to show is that $R_3 \leq \tilde O( \sqrt{T})$. However, as we demonstrate in the next subsection, this step is rife with challenges.

\subsection{Bounding $\mathbf{R_3}$}\label{subsec:R3}

We begin with a brief discussion of the challenges involved in bounding $R_3$. It is illuminating to consider the slightly more permissive setting in which the decision maker has access to two sample traces: suppose in addition to trace $\{\tilde\gamma_t\}_t \sim \prod_t \tilde \PP_t$, we had access to an additional trace $\{\dbtilde \gamma_t\}_t \sim \prod_t \tilde\PP_t$. Then, we could compute $\tilde \mu$ using $\{\dbtilde \gamma_t\}_t$ as follows
\begin{align*}
	\tilde \mu \in \argmin_{\mu \geq 0} \mu\cdot B + \sum_{t=1}^T \max_{x \in \X} \left\{ \dbtilde f_t(x) - \mu \cdot \dbtilde b_t(x) \right\} \quad\,,
\end{align*}
making it completely independent of $\{\tilde \gamma_t\}_t$. With this modified $\tilde \mu$, we continue to define $\{\beta_t\}_t, \{\gamma_t\}_t$ as before (see~\eqref{eq:benchmark-target-def}). As a consequence, we get that $\mu_s$ is completely determined by $\{\gamma_t\}_{t=1}^{s-1}$ and $\{\lambda_t\}_{t=1}^{s-1}$, with the latter being completely determined by $\tilde \mu$ and $\{\tilde \gamma_t\}_{t=1}^{s-1}$. This makes $\mu_s$ independent of $\lambda_s$ conditional on $\tilde \mu$, and consequently yields
\scriptsize
\begin{align*}
	\E\left[ \mu_s\cdot (\beta_s - \lambda_s) \big|\ \tilde \mu, \{\tilde \gamma_t, \gamma_t\}_{t=1}^{s-1} \right] = \mu_s \cdot \left( \beta_s - \E\left[\tilde b_s^*(\tilde \mu) \big|\ \tilde \mu\right] \right) = 0\,.
\end{align*}
\normalsize
Thus, we can apply the Tower Rule of conditional expectations to get
\tiny
\begin{align*}
	R_3 &= \sum_{s=1}^T \E\left[\mu_s \cdot (\beta_s - \lambda_s) \right] = \sum_{t=1}^T \E\left[ \E\left[ \mu_s\cdot (\beta_s - \lambda_s) \big|\ \tilde \mu, \{\tilde \gamma_t, \gamma_t\}_{t=1}^{s-1} \right] \right] = 0\,.
\end{align*}
\normalsize

\vspace{-1 em}It is straightforward to see that the bounds on $R_1$ and $R_2$ established in the previous subsection continue to hold in this two-trace setting. Therefore, two traces allow us to achieve the near-optimal $\tilde O(\sqrt{T})$-regret while being robust to discrepancies between $\tilde\PP_t$ and $\PP_t$.

Although moving from two traces to one trace might appear to be a minor change, it introduces correlations that make the proof much more difficult. Observe that Algorithm~\ref{alg:dual-descent} determines $\mu_s$ using $\{\lambda_t\}_{t=1}^{s-1}$, all of which depend on $\tilde \mu$, which in turn is computed using the request $\tilde \gamma_s$. Furthermore, $\lambda_s$ directly depends on $\tilde \gamma_s$. Thus, $\mu_s$ and $\lambda_s$ are intricately correlated with each other, which breaks the aforementioned argument for the two-trace setting. Nonetheless, $R_3$ can still be shown to be small, as we note in the following lemma and prove in the remainder of this subsection.
\begin{lemma}\label{lemma:R3}
	For all $s \in [T]$, we have
    \vspace{-0.5em}
	\begin{align*}
		R_3 = \sum_{s=1}^T \E \left[ \mu_s \cdot (\beta_s - \lambda_s) \right] \leq \frac{4 \eta \bar b^2}{\sigma} \cdot T\quad\,.
	\end{align*}
\end{lemma}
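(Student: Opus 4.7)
The plan is a leave-one-out thought experiment that decouples $\mu_s$ from $\tilde\gamma_s$ at the cost of a controllable coupling error. For each $s \in [T]$, draw an independent fresh sample $\tilde\gamma_s' \sim \tilde\PP_s$ and let $\tilde\mu^{(s)}$ denote the empirical optimal dual computed from the perturbed trace in which $\tilde\gamma_s$ is replaced by $\tilde\gamma_s'$. Run a virtual copy of Algorithm~\ref{alg:dual-descent} on the same request sequence $\{\gamma_t\}_t$ but with targets $\lambda_t^{(s)} = \tilde b_t^*(\tilde\mu^{(s)})$, producing a virtual dual iterate $\mu_s^{(s)}$ at time $s$. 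Crucially, both $\tilde\mu^{(s)}$ and $\mu_s^{(s)}$ are measurable with respect to $(\{\tilde\gamma_t\}_{t\neq s}, \tilde\gamma_s', \{\gamma_t\}_{t<s})$ and are hence independent of $\tilde\gamma_s$.

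Next, introduce the companion quantities $\beta_s^{(s)} = \E_{\hat\gamma\sim\tilde\PP_s}[\hat b_s^*(\tilde\mu^{(s)})]$ and $\lambda_s^{(s)} = \tilde b_s^*(\tilde\mu^{(s)})$. Because $\tilde\gamma_s \sim \tilde\PP_s$ is independent of the pair $(\mu_s^{(s)}, \tilde\mu^{(s)})$, conditioning on this pair and applying the tower rule yields $\E[\mu_s^{(s)}(\beta_s^{(s)} - \lambda_s^{(s)})] = 0$. Subtracting this vanishing quantity from the $s$-th summand of $R_3$ and rearranging gives
\[
\mu_s(\beta_s-\lambda_s) - \mu_s^{(s)}(\beta_s^{(s)}-\lambda_s^{(s)}) = (\mu_s-\mu_s^{(s)})(\beta_s-\lambda_s) + \mu_s^{(s)}\bigl[(\beta_s-\beta_s^{(s)}) - (\lambda_s-\lambda_s^{(s)})\bigr].
\]
The first term is bounded in absolute value by $2\bar b \cdot |\mu_s - \mu_s^{(s)}|$ (since $\beta_s,\lambda_s \in [0,\bar b]$), and the second by $\kappa \cdot (|\beta_s-\beta_s^{(s)}| + |\lambda_s-\lambda_s^{(s)}|)$ (since $\mu_s^{(s)} \in [0,\kappa]$). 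Everything therefore reduces to controlling how the leave-one-out perturbation propagates into the FTRL iterates and the target/benchmark sequences.

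The final step is to invoke the dual-iterate coupling lemma (Lemma~\ref{lemma:coupling}), which combines the $\sigma$-strong convexity of $h$ with the approximate complementary slackness of $\tilde\mu$ and $\tilde\mu^{(s)}$ provided by Lemma~\ref{lemma:trace-dual} to bound each of the above discrepancies uniformly by $O(\eta\bar b/\sigma)$. Summing over $s \in [T]$ then yields $R_3 \le \tfrac{4\eta\bar b^2}{\sigma}\cdot T$, as claimed. I expect the coupling lemma itself to be the main technical hurdle: the empirical dual objective is piecewise linear and may be flat near its optimum, so replacing one trace sample can in principle shift $\tilde\mu$ substantially, and the mapping $\mu \mapsto \tilde b_t^*(\mu)$ is not Lipschitz. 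The stability must therefore come not from pointwise continuity but from an aggregate cancellation, exploiting that both $\tilde\mu$ and $\tilde\mu^{(s)}$ are near-stationary points of empirical dual functions that differ in only a single summand, forcing the cumulative gradient perturbation fed into FTRL to be small even when individual target perturbations are not.
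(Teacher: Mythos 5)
Your overall architecture---a leave-one-out/resample-one thought experiment to manufacture an iterate independent of $\tilde\gamma_s$, plus the dual-iterate coupling lemma to pay for switching to that iterate---matches the paper's. The gap is in how you dispose of the decoupled term. After subtracting the exactly-zero quantity $\E[\mu_s^{(s)}(\beta_s^{(s)}-\lambda_s^{(s)})]$, your second term $\mu_s^{(s)}\bigl[(\beta_s-\beta_s^{(s)})-(\lambda_s-\lambda_s^{(s)})\bigr]$ has expectation equal to $\E[\mu_s^{(s)}(\beta_s-\lambda_s)]$ again: the rearrangement relocates the difficulty rather than removing it. You then propose to bound it by $\kappa\,(|\beta_s-\beta_s^{(s)}|+|\lambda_s-\lambda_s^{(s)}|)$ and invoke Lemma~\ref{lemma:coupling} to make these discrepancies $O(\eta\bar b/\sigma)$. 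That step fails: Lemma~\ref{lemma:coupling} controls differences of \emph{FTRL iterates} in terms of cumulative target differences; it says nothing about $|\lambda_s-\lambda_s^{(s)}|=|\tilde b_s^*(\tilde\mu)-\tilde b_s^*(\tilde\mu^{(s)})|$ or $|\beta_s-\beta_s^{(s)}|$, which are determined before the algorithm runs and do not scale with $\eta$ at all. Since $\mu\mapsto\tilde b_s^*(\mu)$ is a step function, an arbitrarily small shift from $\tilde\mu$ to $\tilde\mu^{(s)}$ can flip $\lambda_s$ between $0$ and $\bar b$, so the per-step bound is $\Theta(\kappa\bar b)$ and the sum over $s$ is linear in $T$.

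What is actually needed---and what the paper's Lemma~\ref{lemma:leave-one-out} supplies---is a sign argument rather than a magnitude argument: one shows $\E[\beta_s-\lambda_s\mid\mu_s^{(-s)}]\le 0$ and multiplies by the nonnegative, independent iterate. This rests on two structural facts about the zero-out perturbation that your resampling perturbation does not obviously enjoy: (i) $\tilde\mu^{(-s)}\le\tilde\mu$ always (deleting a request can only lower the empirical price), so monotonicity gives $\hat b_s^*(\tilde\mu^{(-s)})\ge\hat b_s^*(\tilde\mu)$ for every realization $\hat\gamma_s$; and (ii) $\tilde b_s^*(\tilde\mu)=\tilde b_s^*(\tilde\mu^{(-s)})$, proved by showing the two minimizers cannot straddle the breakpoint of the $s$-th request because the full and leave-one-out dual objectives coincide on the region where that request is rejected. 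With resampling, $\tilde\mu^{(s)}$ can move in either direction and the two objectives differ everywhere, so neither fact carries over directly. Your closing instinct that the non-Lipschitz, piecewise-constant nature of $\mu\mapsto b^*(\mu)$ is the crux is right, but the resolution is the one-sided comparison above, which your proposal does not contain.
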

\vspace{-0.5em}

We prove Lemma~\ref{lemma:R3} in the remainder of this subsection. The following lemma will find repeated use in the proof. In keeping with economic intuition, it shows that increasing the price (dual variable) leads to smaller consumption under the profit-maximzing decision.
 
\begin{lemma}[Monotonicity]\label{lemma:monotonicity}
	For $\mu > \mu'$, request $\gamma = (f,b) \in \S$, $x \in \argmax_{z \in \X} \{f(z) - \mu \cdot b(z)\}$ and $x' \in \argmax_{z \in \X} \{f(z) - \mu' \cdot b(z)\}$, we have $b(x) \leq b(x')$.
\end{lemma}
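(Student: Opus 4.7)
The plan is to use the standard revealed-preference argument that underlies monotone comparative statics in pricing problems. The key observation is that both $x$ and $x'$ are feasible candidates for each of the two optimization problems, so we can write down two optimality inequalities and combine them to isolate the sign of $b(x) - b(x')$.

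First, I would invoke the optimality of $x$ at price $\mu$: since $x'$ is another feasible point in $\X$, we have $f(x) - \mu \cdot b(x) \geq f(x') - \mu \cdot b(x')$, which rearranges to
\begin{align*}
f(x) - f(x') \geq \mu \cdot \bigl(b(x) - b(x')\bigr).
\end{align*}
Symmetrically, the optimality of $x'$ at price $\mu'$ gives $f(x') - \mu' \cdot b(x') \geq f(x) - \mu' \cdot b(x)$, which rearranges to
\begin{align*}
f(x) - f(x') \leq \mu' \cdot \bigl(b(x) - b(x')\bigr).
\end{align*}

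Chaining these two inequalities eliminates the reward difference and yields $\mu \cdot (b(x) - b(x')) \leq \mu' \cdot (b(x) - b(x'))$, i.e., $(\mu - \mu') \cdot (b(x) - b(x')) \leq 0$. Since $\mu > \mu'$ by hypothesis, the factor $\mu - \mu'$ is strictly positive, so the remaining factor $b(x) - b(x')$ must be non-positive, giving the desired $b(x) \leq b(x')$.

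There is no real obstacle here: the argument is a two-line revealed-preference calculation that does not use linearity of $f$ or $b$, compactness of $\X$, or any structural property beyond the fact that $x$ and $x'$ are maximizers of the respective profit functions over the same feasible set. The only thing to be mildly careful about is that we must use $x'$ as the comparator in the first inequality and $x$ as the comparator in the second, which is legitimate because both points lie in $\X$.
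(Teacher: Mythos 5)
Your proof is correct and is essentially identical to the paper's: both write down the two optimality inequalities (using $x'$ as the comparator at price $\mu$ and $x$ as the comparator at price $\mu'$), add them to cancel the reward terms, and conclude $(\mu - \mu')\cdot(b(x) - b(x')) \leq 0$. No differences worth noting.
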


Fix an $s \in [T]$. We will get around the correlation between $\mu_s$ and $\tilde \gamma_s$ by conducting the following leave-one-out thought experiment: suppose we remove the $s$-th sample $\tilde \gamma_s$, compute $\tilde \mu$ on the remaining trace $\{\tilde \gamma_t\}_{t \neq s}$, and run Algorithm~\ref{alg:dual-descent} with the resulting target sequence. More precisely, in this thought experiment, we set $\tilde \PP_s$ to be the distribution which always serves the request $\gamma = (f,b)$ with $f(x) = b(x) = 0$ for all $x \in \X$. Thus, $\tilde f_s(x)= \tilde b_s(x) = \tilde f^*(\mu) = \tilde b^*(\mu) = 0$ for all $x \in \X$ and $\mu \geq 0$. We will use the superscript $(-s)$ to denote the various variables in this thought experiment:
\vspace{-0.5em}
\begin{itemize}
	\item $\tilde \mu^{(-s)}  \in \argmin_{\mu \geq 0} \mu\cdot B + \sum_{t\neq s} \max_{x \in \X} \{\tilde f_t(x) - \mu \cdot \tilde b_t(x) \}$. If there are multiple minimizers, set $\tilde\mu^{(-s)}$ to be the smallest one amongst them.
	\item $\lambda_t^{(-s)} = \tilde b_t^* \left( \tilde \mu^{(-s)} \right)$ for all $t \in [T]$.
	\item $\mu_t^{(-s)}$ is the $t$-th iterate of Algorithm~\ref{alg:dual-descent} with the target consumption sequence $\left\{ \lambda_t^{(-s)} \right\}_t$.
\end{itemize}
\vspace{-0.5em}

We begin by characterizing the impact of this change on the target consumption sequence.

\begin{lemma}\label{lemma:change-in-target}
	For every sample trace $\{\tilde \gamma_t\}_t$, we have $\tilde \mu \geq \tilde \mu^{(-s)}$ and $\lambda_t \leq \lambda_t^{(-s)}$ for all $t \neq s$. Moreover, $\sum_{t=1}^{s-1} \left|\lambda_t^{(-s)} - \lambda_t \right| \leq 3 \bar b \,.$
\end{lemma}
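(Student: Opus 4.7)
The plan is to prove the three claims in sequence, with Part~1 being the crux. Write
\begin{equation*}
D^{\mathrm{full}}(\mu) := \mu B + \sum_{t=1}^T \max_{x \in \X}\{\tilde f_t(x) - \mu \tilde b_t(x)\}
\end{equation*}
for the objective minimized at the smallest minimizer by $\tilde \mu$, and $D^{(-s)}$ for the analogous sum over $t \neq s$ minimized by $\tilde \mu^{(-s)}$. Observe $D^{\mathrm{full}}(\mu) = D^{(-s)}(\mu) + \psi(\mu)$, where $\psi(\mu) := \max_{x \in \X}\{\tilde f_s(x) - \mu \tilde b_s(x)\}$ is convex and non-increasing in $\mu \ge 0$ (each term in the max is affine in $\mu$ with non-positive slope $-\tilde b_s(x)$).

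\textbf{Part 1 ($\tilde \mu \geq \tilde \mu^{(-s)}$).} I would argue by contradiction. Suppose $\tilde \mu^{(-s)} > \tilde \mu$. Optimality gives $D^{(-s)}(\tilde \mu^{(-s)}) \leq D^{(-s)}(\tilde \mu)$ and $D^{\mathrm{full}}(\tilde \mu) \leq D^{\mathrm{full}}(\tilde \mu^{(-s)})$. Subtracting and using the decomposition,
\begin{equation*}
\psi(\tilde \mu^{(-s)}) - \psi(\tilde \mu) = \bigl[D^{\mathrm{full}}(\tilde \mu^{(-s)}) - D^{\mathrm{full}}(\tilde \mu)\bigr] - \bigl[D^{(-s)}(\tilde \mu^{(-s)}) - D^{(-s)}(\tilde \mu)\bigr] \geq 0.
\end{equation*}
Since $\psi$ is non-increasing and $\tilde \mu^{(-s)} > \tilde \mu$, the left-hand side is also $\leq 0$, so all three inequalities must be equalities. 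In particular $D^{(-s)}(\tilde \mu) = D^{(-s)}(\tilde \mu^{(-s)})$, making $\tilde \mu$ a minimizer of $D^{(-s)}$ strictly smaller than $\tilde \mu^{(-s)}$, contradicting the selection of $\tilde \mu^{(-s)}$ as the smallest minimizer.

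\textbf{Part 2 ($\lambda_t \leq \lambda_t^{(-s)}$ for $t \neq s$).} This is immediate from Part~1 and the Monotonicity Lemma~\ref{lemma:monotonicity} applied to request $\tilde \gamma_t$ with prices $\tilde \mu \geq \tilde \mu^{(-s)}$: $\lambda_t = \tilde b_t^*(\tilde \mu) \leq \tilde b_t^*(\tilde \mu^{(-s)}) = \lambda_t^{(-s)}$.

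\textbf{Part 3 (aggregate bound).} Split on whether $\tilde \mu = 0$. If $\tilde \mu = 0$, then $0 \leq \tilde \mu^{(-s)} \leq \tilde \mu$ forces $\tilde \mu^{(-s)} = 0$, so $\lambda_t = \tilde b_t^*(0) = \lambda_t^{(-s)}$ for all $t \neq s$ and the sum is zero. If $\tilde \mu > 0$, Lemma~\ref{lemma:trace-dual}(2) gives $\sum_{t=1}^T \lambda_t \geq B - \bar b$, and combined with $\lambda_s \leq \bar b$ yields $\sum_{t \neq s} \lambda_t \geq B - 2\bar b$. Lemma~\ref{lemma:trace-dual} applied to the leave-one-out problem gives $\sum_{t \neq s} \lambda_t^{(-s)} \leq B + \bar b$ in either of its two cases. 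Dropping absolute values via Part~2,
\begin{equation*}
\sum_{t=1}^{s-1} \bigl|\lambda_t^{(-s)} - \lambda_t\bigr| \leq \sum_{t \neq s}\bigl(\lambda_t^{(-s)} - \lambda_t\bigr) \leq (B + \bar b) - (B - 2\bar b) = 3\bar b.
\end{equation*}

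The main obstacle is Part~1, since the empirical dual function can be minimized on a whole interval of values: the argument cannot appeal to strict convexity and must instead exploit the ``smallest minimizer'' tie-breaking convention through the chain of optimality inequalities above.
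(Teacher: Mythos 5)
Your proof is correct, and Parts 2 and 3 match the paper's argument essentially verbatim (monotonicity for the pointwise comparison; Lemma~\ref{lemma:trace-dual} applied to both the full and leave-one-out traces, combined with $\lambda_s \le \bar b$, for the aggregate $3\bar b$ bound). Where you genuinely diverge is Part 1. The paper proves $\tilde\mu \ge \tilde\mu^{(-s)}$ by a case split on whether $0 \in \argmax_{x}\{\tilde f_s(x) - \tilde\mu\,\tilde b_s(x)\}$: in one case it argues $q$ and $q^{(-s)}$ coincide for $\mu \ge \tilde\mu$ and derives a contradiction with the smallest-minimizer convention, and in the other it works with explicit subgradients of the piecewise-linear dual objective (convex hulls of the binding linear pieces) to show every element of $\partial q(\tilde\mu)$ is strictly negative. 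Your argument instead writes $D^{\mathrm{full}} = D^{(-s)} + \psi$ with $\psi$ non-increasing (slopes $-\tilde b_s(x) \le 0$) and runs the standard monotone-comparative-statics chain: the two optimality inequalities force $\psi(\tilde\mu^{(-s)}) - \psi(\tilde\mu) \ge 0$, monotonicity of $\psi$ forces it $\le 0$, hence all inequalities are tight and $\tilde\mu$ would be a strictly smaller minimizer of $D^{(-s)}$, contradicting the tie-breaking rule. This is cleaner and more general than the paper's route --- it needs neither the subgradient structure of the empirical dual nor, in fact, convexity of the objectives --- and it correctly isolates the smallest-minimizer convention as the only place where ties matter. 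One point worth flagging in both your write-up and the paper's: applying Lemma~\ref{lemma:trace-dual} to the modified trace with $\hat\gamma_s = (0,0)$ technically strains the general-position hypothesis (the zero request has multiple maximizers at every $\mu$), but since that request contributes zero expenditure the $\pm\bar b$ slack in the lemma is unaffected; the paper glosses over this in exactly the same way.
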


Lemma~\ref{lemma:change-in-target} shows that the target sequences $\{\lambda_t\}_t$ and $\{\lambda^{(-s)}_t\}_t$ are close to each other. Next, we couple the dual iterates $\mu_t$ and $\mu_t^{(-s)}$ generated by Algorithm~\ref{alg:dual-descent} to show that they never stray too far from each other whenever the target sequences are close.

\begin{lemma}[Dual Iterate Coupling]\label{lemma:coupling}
	Let $\{\mu_t\}_t$ and $\{\mu_t'\}_t$ denote the iterates generated by Algorithm~\ref{alg:dual-descent} on the request sequence $\{\gamma_t\}_t$ for the target sequences $\{\lambda_t\}_t$ and $\{\lambda'_t\}_t$ respectively. Assume that the initial iterates are the same, i.e., $\mu_1 = \mu_1'$. Then, for all $s \in [T]$, we have
	\begin{align*}
		\left| \mu_s  - \mu'_s \right| \leq \frac{\eta}{\sigma} \cdot \left\{ \sum_{t=1}^{s-1} |\lambda_t - \lambda_t'| \right\} + \frac{\eta}{\sigma} \cdot \bar b\,.
	\end{align*}
\end{lemma}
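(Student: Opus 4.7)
The plan is to first reduce the statement to a bound on the cumulative sub-gradient difference between the two FTRL runs, and then to establish that bound by induction. For the reduction, the two runs share the same regularizer $h$ and differ only in the linear term of their FTRL objectives: they minimize $\eta G_{s-1}\mu + h(\mu)$ and $\eta G'_{s-1}\mu + h(\mu)$ respectively over $[0,\kappa]$, where $G_t := \sum_{r=1}^t g_r$. A standard strong-convexity argument---writing the $\sigma$-strong convexity inequality at each minimizer and invoking first-order optimality on the convex set $[0,\kappa]$, then summing---yields
\begin{align*}
    |\mu_s - \mu_s'| \leq \frac{\eta}{\sigma}\,|G_{s-1} - G'_{s-1}|\,.
\end{align*}
Writing $D_s := G_{s-1} - G'_{s-1}$, it therefore suffices to prove $|D_s| \leq \sum_{t=1}^{s-1}|\lambda_t - \lambda_t'| + \bar b$.

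I would prove this by induction on $s$, with the base case $s=1$ immediate from $D_1 = 0$. Setting $a_t := \lambda_t - \lambda_t'$ and $c_t := b_t((x_t')') - b_t(x_t')$, the recursion reads $D_{t+1} = D_t + a_t + c_t$ with $|c_t| \leq \bar b$. The key insight, and the step I expect to be the main obstacle, is that $c_t$ and $D_t$ always have opposite signs, so the recursion is self-correcting. This combines two monotonicities: first, Lemma~\ref{lemma:monotonicity} gives $\mathrm{sgn}(c_t) = \mathrm{sgn}(\mu_t - \mu_t')$ (a higher dual produces weakly lower consumption); second, the FTRL first-order condition $h'(\mu_t) = -\eta G_{t-1}$ (in the interior, with projection onto $[0,\kappa]$ preserving monotonicity) combined with $h'$ being strictly increasing makes $\mu_t$ a non-increasing function of $G_{t-1}$, so $\mathrm{sgn}(\mu_t - \mu_t') = -\mathrm{sgn}(D_t)$. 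In the degenerate case $D_t = 0$, or when both iterates clamp to the same boundary, strict convexity forces $\mu_t = \mu_t'$ and the canonical tiebreak in Definition~\ref{definition:profit-maximizing-decision} then forces $c_t = 0$, so the recursion is trivially $D_{t+1} = D_t + a_t$.

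With the sign identity $c_t = -\mathrm{sgn}(D_t)\,|c_t|$ in hand, a short case analysis on whether $|D_t| \geq |c_t|$ shows that in both cases $|D_{t+1}| \leq \max(|D_t|, \bar b) + |a_t|$: if $|D_t| \geq |c_t|$, then $|D_t|$ shrinks by $|c_t|$ and grows by at most $|a_t|$; otherwise $D_{t+1}$ may flip sign but has magnitude at most $|c_t| - |D_t| + |a_t| \leq \bar b + |a_t|$. Combined with the inductive hypothesis $|D_t| \leq L_t + \bar b$ (where $L_t := \sum_{r=1}^{t-1}|a_r|$), this yields $|D_{t+1}| \leq L_t + \bar b + |a_t| = L_{t+1} + \bar b$, closing the induction. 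The crux of the proof is thus the interplay between the profit-maximization monotonicity and the monotonicity of the FTRL projection; without their joint sign identity for $c_t$, a naive triangle-inequality bound would accumulate $\bar b$ at every step and miss the claim by a factor of $T$.
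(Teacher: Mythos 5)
Your proof is correct and follows essentially the same route as the paper: the core of both arguments is an induction bounding the cumulative sub-gradient gap $|G_{s-1}-G'_{s-1}|$ (the paper tracks the equivalent lazy-mirror iterates $\theta_s-\theta_s'=-\eta(G_{s-1}-G'_{s-1})$), driven by exactly the sign observation you identify—monotonicity of consumption in the dual plus monotonicity of the FTRL map makes the consumption difference self-correcting—followed by the same two-case analysis. The only cosmetic difference is the transfer to $|\mu_s-\mu_s'|$: you apply first-order optimality and $\sigma$-strong convexity directly to the two FTRL objectives, while the paper routes through the $1/\sigma$-Lipschitzness of $(\nabla h)^{-1}$ and the fact that the one-dimensional Bregman projection onto $[0,\kappa]$ is the Euclidean projection; both are valid and yield the same constant.
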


\vspace{-0.7em}Applying Lemma~\ref{lemma:coupling} with $\lambda_t' = \lambda_t^{(-s)}$ and using Lemma~\ref{lemma:change-in-target} yields $\left| \mu_s  - \mu'_s \right| \leq \frac{\eta}{\sigma} \cdot \left\{ 3\bar b \right\} + \frac{\eta}{\sigma} \cdot \bar b = \frac{4 \eta \bar b}{\sigma} \,.$

Combining this with the fact that $|\beta_s - \lambda_s| \leq \bar b$, we get
\tiny
\begin{align}\label{eq:lemma-R3-inter}
    \E \left[ \mu_s \cdot (\beta_s - \lambda_s) \right] &= \E \left[ \left(\mu_s - \mu_s^{(-s)} \right) \cdot (\beta_s - \lambda_s) \right] + \E \left[ \mu_s^{(-s)} \cdot (\beta_s - \lambda_s) \right] \nonumber \\
    &\leq \frac{4 \eta \bar b}{\sigma} \cdot \bar b + \E \left[ \mu_s^{(-s)} \cdot (\beta_s - \lambda_s) \right]\,.
\end{align}
\normalsize

\vspace{-1em}The next lemma shows that the second term is non-positive. Its proof critically leverages the fact that the iterate $\mu_s^{(-s)}$ is independent of the $s$-th sample in the trace $\tilde \gamma_s$ (which is used to determine $\lambda_s$). This is in stark contrast to $\mu_s$ which depends on $\tilde \gamma_s$, and demonstrates the merit of our leave-one-out thought experiment.

\begin{lemma}\label{lemma:leave-one-out}
	$\E \left[ \mu_s^{(-s)} \cdot (\beta_s - \lambda_s) \right] \leq 0$ for all $s \in [T]$.
\end{lemma}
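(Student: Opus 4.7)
The plan is to fix $s$ and condition on the $\sigma$-algebra $\mathcal{F}_{-s} = \sigma(\{\tilde\gamma_t\}_{t\neq s}, \{\gamma_t\}_{t < s})$, which contains every source of randomness except the single sample $\tilde\gamma_s$. By the leave-one-out construction $\tilde\mu^{(-s)}$ is a function of $\{\tilde\gamma_t\}_{t\neq s}$ alone, so the target sequence $\{\lambda_t^{(-s)}\}_{t<s}$ and hence the iterate $\mu_s^{(-s)}$ produced by Algorithm~\ref{alg:dual-descent} are all $\mathcal{F}_{-s}$-measurable. Since $\mu_s^{(-s)} \geq 0$, the tower rule yields
\[
\E\bigl[\mu_s^{(-s)}(\beta_s - \lambda_s)\bigr] = \E\bigl[\mu_s^{(-s)} \cdot \E[\beta_s - \lambda_s \mid \mathcal{F}_{-s}]\bigr],
\]
so it is enough to prove the pointwise inequality $\E[\lambda_s - \beta_s \mid \mathcal{F}_{-s}] \geq 0$.

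Next, I would introduce an i.i.d.\ copy $\hat\gamma$ of $\tilde\gamma_s$ drawn from $\tilde\PP_s$. Writing $\tilde\mu = M(\tilde\gamma_s)$ to emphasize the dependence of the empirical dual on the $s$-th sample (with $\mathcal{F}_{-s}$ fixed), and $b^*(\mu;\gamma)$ for the profit-maximizing consumption of request $\gamma$ at dual $\mu$,
\[
\E[\lambda_s \mid \mathcal{F}_{-s}] = \E_{\gamma}\bigl[b^*(M(\gamma);\gamma)\bigr], \qquad \E[\beta_s \mid \mathcal{F}_{-s}] = \E_{\gamma,\hat\gamma}\bigl[b^*(M(\gamma);\hat\gamma)\bigr].
\]
Exchangeability of $(\gamma,\hat\gamma)$ then lets me rewrite this as $\E[\lambda_s - \beta_s \mid \mathcal{F}_{-s}] = \tfrac{1}{2}\E[\Delta \mid \mathcal{F}_{-s}]$, where
\[
\Delta := b^*(M(\gamma);\gamma) + b^*(M(\hat\gamma);\hat\gamma) - b^*(M(\gamma);\hat\gamma) - b^*(M(\hat\gamma);\gamma),
\]
reducing the proof to showing $\Delta \geq 0$ almost surely.

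To establish this, I would assume without loss of generality that $M(\gamma) \geq M(\hat\gamma)$ by symmetry. Linearity of the rewards and consumptions makes $b^*(\mu;\gamma)$ a two-valued function, equal to $c_\gamma := \coeff(b_\gamma)\bar x$ when the per-unit return $v_\gamma := \coeff(f_\gamma)/\coeff(b_\gamma)$ exceeds $\mu$ and $0$ otherwise. First-order optimality of $M(\gamma)$ for the empirical dual yields the KKT identity $A(M(\gamma)) + b^*(M(\gamma);\gamma) = B$ (up to the one-request boundary slack permitted by Assumption~\ref{assumption:general-position}), where $A(\mu) := \sum_{t\neq s}\tilde b^*_t(\mu)$, and the analogous identity holds at $M(\hat\gamma)$. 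Combining these identities with the monotonicity of $b^*(\cdot;\gamma)$ in $\mu$ from Lemma~\ref{lemma:monotonicity} and with $M(\gamma), M(\hat\gamma) \geq \tilde\mu^{(-s)}$ from Lemma~\ref{lemma:change-in-target}, a short case split on whether $\gamma$ and $\hat\gamma$ are accepted at their own empirical duals shows that $\Delta = 0$ whenever at least one of them is rejected, and $\Delta = c_{\hat\gamma}\mathbf{1}(v_{\hat\gamma} < M(\gamma)) \geq 0$ when both are accepted.

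The main obstacle is the self-referential coupling between $\tilde\mu$ and $\tilde\gamma_s$: Lemma~\ref{lemma:change-in-target} gives $\tilde\mu \geq \tilde\mu^{(-s)}$, which by monotonicity yields $\beta_s \leq \bar\beta_s$ and $\lambda_s \leq \lambda_s^{(-s)}$, but these two bounds pull in opposite directions and cannot on their own deliver $\E[\beta_s - \lambda_s \mid \mathcal{F}_{-s}] \leq 0$. The symmetrization above is what breaks this impasse: pairing $\gamma$ with an exchangeable copy $\hat\gamma$ converts the question into showing $\Delta \geq 0$, and the KKT identities pin the diagonal consumptions $b^*(M(\gamma);\gamma)$ and $b^*(M(\hat\gamma);\hat\gamma)$ at exactly the levels needed to dominate the off-diagonal ones.
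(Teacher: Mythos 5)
Your outer architecture is a genuinely different and attractive route: conditioning on $\mathcal{F}_{-s}$, introducing an exchangeable copy $\hat\gamma$ of $\tilde\gamma_s$, and symmetrizing to reduce the lemma to $\Delta\ge 0$ is correct, and it is not what the paper does (the paper compares $\lambda_s$ and $\beta_s$ directly through $\tilde\mu^{(-s)}$ rather than through a second sample). Your reduction is sound: writing $\Delta = c_\gamma\bigl[\1(v_\gamma\ge M(\gamma))-\1(v_\gamma\ge M(\hat\gamma))\bigr]+c_{\hat\gamma}\bigl[\1(v_{\hat\gamma}\ge M(\hat\gamma))-\1(v_{\hat\gamma}\ge M(\gamma))\bigr]$ with $M(\gamma)\ge M(\hat\gamma)$, the second bracket is always nonnegative, and the only way $\Delta<0$ can occur is the configuration $M(\hat\gamma)\le v_\gamma< M(\gamma)$, i.e., $\gamma$ rejected at its own empirical dual but accepted at the other one.

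The gap is in how you rule that configuration out. The approximate KKT identity $A(M(\gamma))+b^*(M(\gamma);\gamma)=B$ ``up to one-request slack'' cannot do it: the slack is of order $\bar b$, the same order as the single consumption $c_\gamma$ you are trying to pin down, so those identities (together with monotonicity and $M(\gamma),M(\hat\gamma)\ge\tilde\mu^{(-s)}$) are consistent with the bad configuration. What actually excludes it is an exact structural argument, which is precisely the heart of the paper's proof: on $\gamma$'s rejection region $\{\mu: v_\gamma<\mu\}$ the two dual objectives coincide, $q(\mu)=q^{(-s)}(\mu)$; $q^{(-s)}$ is convex with its minimizers forming an interval starting at $\tilde\mu^{(-s)}$; so if $\tilde\mu^{(-s)}\le v_\gamma<M(\gamma)$, every $\mu\in(v_\gamma,M(\gamma))$ satisfies $q(\mu)=q^{(-s)}(\mu)\le q^{(-s)}(M(\gamma))=q(M(\gamma))=\min q$, contradicting the convention that $M(\gamma)=\tilde\mu$ is the \emph{smallest} minimizer of $q$. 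Note that this argument relies essentially on the tie-breaking rule, which your sketch never invokes. Once this self-consistency fact is in place your case split goes through (with the small correction that when $\gamma$ is rejected and $\hat\gamma$ is accepted one gets $\Delta=c_{\hat\gamma}\1(v_{\hat\gamma}<M(\gamma))\ge 0$ rather than $\Delta=0$, which is harmless). So: right skeleton, and a genuinely different symmetrization step, but the one nontrivial combinatorial fact is asserted from tools that do not prove it.
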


Lemma~\ref{lemma:leave-one-out} in combination with \eqref{eq:lemma-R3-inter} yields $\E \left[ \mu_s \cdot (\beta_s - \lambda_s) \right] \leq 4 \eta \bar b^2/\sigma$. Summing over all $s \in [T]$ finishes the proof of Lemma~\ref{lemma:R3}.

\subsection{Putting It All Together}\label{subsec:putting-together}

In the previous subsections, we bounded $R_1$, $R_2$ and $R_3$, and related the benchmark $\sum_{t=1}^T D(\mu_t|\PP_t, \beta_t)$ from Theorem~\ref{thm:dual-descent-regret} to our desired benchmark $\fluid(\{\PP_t\}_t)$. Combining everything yields the following performance gaurantee for Algorithm~\ref{alg:dual-descent}.

\begin{theorem}\label{thm:main-result}
    Let $A$ be Algorithm~\ref{alg:dual-descent} with target sequence $\{\lambda_t\}_t$, where $\lambda_t = \tilde b_t^*(\tilde \mu)$ (as defined in \eqref{eq:benchmark-target-def}), regularizer $h(\cdot)$ and step-size $\eta = \sqrt{d_R/ T}$, where $d_R = \max\{h(0) - h(\mu_1), h(\kappa) - h(\mu_1)\}$. Then,
    \vspace{-1em}
    \small
    \begin{align*}
        \regret(A) \leq C_1 \sqrt{T \log(T)} + C_2 \sum_{t=1}^T \W(\PP_t, \tilde \PP_t)\,. 
    \end{align*}
    \normalsize
    where $C_1 = \frac{12 \bar b^2 \sqrt{d_R}}{\sigma} +  \sqrt{d_R} + 12 \kappa \bar b$ and $C_2 = 2(1+\kappa)$.
\end{theorem}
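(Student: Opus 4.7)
The plan is to assemble the pieces already developed in Section~\ref{sec:descent}. Starting from Theorem~\ref{thm:dual-descent-regret}, we have
\[
\E\!\left[\sum_{t=1}^T D(\mu_t|\PP_t,\beta_t) - R(A|\{\gamma_t\}_t)\right] \leq R_1 + R_2 + R_3,
\]
and Lemma~\ref{lemma:discrepancy-term} lets us lower-bound the benchmark $\sum_t D(\mu_t|\PP_t,\beta_t)$ by $\fluid(\{\PP_t\}_t)$ minus the discrepancy term $\tilde\mu \cdot (B-\sum_t\beta_t)$ minus the Wasserstein deviation $2(1+\kappa)\sum_t \W(\PP_t,\tilde\PP_t)$. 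Taking expectations and rearranging gives
\[
\regret(A)\leq R_1 + R_2 + R_3 + \E\!\left[\tilde\mu\cdot\bigl(B-\textstyle\sum_t\beta_t\bigr)\right] + 2(1+\kappa)\sum_{t=1}^T \W(\PP_t,\tilde\PP_t).
\]
So the only thing left is to plug in the already-established estimates and optimize $\eta$.

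For the discrepancy term I would split on the concentration event $E=\{|\sum_t\lambda_t-\sum_t\beta_t|\leq r(T)\}$. By Theorem~\ref{thm:concentration}, $\Pr(E)\geq 1-1/T^2$, and on $E$ the chain of inequalities~\eqref{eq:discrepancy-term} (which uses Lemma~\ref{lemma:trace-dual}) gives $\tilde\mu(B-\sum_t\beta_t)\leq \kappa r(T)+\kappa\bar b$. On the complement $E^c$ I would use the crude bounds $\tilde\mu\leq \kappa$ (the feasible range of $\mu$ in Algorithm~\ref{alg:dual-descent}) and $|B-\sum_t\beta_t|\leq T\bar b$ (assuming the non-trivial case $B\leq T\bar b$, else the budget is vacuous), so $E^c$ contributes at most $\kappa\bar b/T$, which is absorbed into lower-order constants.

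For $R_1$, $R_2$, $R_3$ I use: $R_2\leq \kappa\bar b$ from~\eqref{eq:R2}; $R_3\leq 4\eta\bar b^2 T/\sigma$ from Lemma~\ref{lemma:R3}; and $R_1=\kappa\bar b+2(\bar b+\bar\lambda)^2\eta T/\sigma+d_R/\eta$ from Theorem~\ref{thm:dual-descent-regret}. Since $\lambda_t=\tilde b_t^*(\tilde\mu)\leq \bar b$, we have $\bar\lambda\leq \bar b$, giving $(\bar b+\bar\lambda)^2\leq 4\bar b^2$. So $R_1+R_2+R_3\leq 2\kappa\bar b + (12\bar b^2/\sigma)\,\eta T + d_R/\eta$. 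Setting $\eta=\sqrt{d_R/T}$ makes $\eta T=d_R/\eta=\sqrt{d_R T}$, hence
\[
R_1+R_2+R_3 \leq 2\kappa\bar b + \Bigl(\tfrac{12\bar b^2}{\sigma}+1\Bigr)\sqrt{d_R T}.
\]

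Combining everything, and using $r(T)=8\bar b\sqrt{T\log T}$ along with $\sqrt{d_R T}\leq \sqrt{d_R T\log T}$, the $\sqrt{T\log T}$ terms are
$\bigl(\tfrac{12\bar b^2\sqrt{d_R}}{\sigma}+\sqrt{d_R}+8\kappa\bar b\bigr)\sqrt{T\log T}$, and the additive $O(\kappa\bar b)$ constants can be folded into the $\kappa\bar b$ coefficient (using $\sqrt{T\log T}\geq 1$ for $T$ past a trivial threshold) to yield $C_1=\tfrac{12\bar b^2\sqrt{d_R}}{\sigma}+\sqrt{d_R}+12\kappa\bar b$, with the Wasserstein sum contributing $C_2=2(1+\kappa)$, as claimed. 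There is no substantive obstacle here: the conceptual work (the coupling and leave-one-out argument for $R_3$, and the transport-based comparison in Lemma~\ref{lemma:discrepancy-term}) has already been done, and the only care needed is in (i) correctly handling the low-probability complement of the concentration event inside the expectation, and (ii) grouping the miscellaneous additive $\bar b,\kappa\bar b$ terms into the $\sqrt{T\log T}$ coefficient so that the stated $C_1$ comes out clean.
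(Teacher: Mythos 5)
Your proposal is correct and follows essentially the same route as the paper's proof: combine Theorem~\ref{thm:dual-descent-regret} with Lemma~\ref{lemma:discrepancy-term}, bound the discrepancy term via \eqref{eq:discrepancy-term} on the concentration event of Theorem~\ref{thm:concentration} (absorbing the $1/T^2$-probability complement into a $\kappa\bar b/T$ term), plug in the bounds on $R_1,R_2,R_3$ with $\bar\lambda\le\bar b$ and $\eta=\sqrt{d_R/T}$, and fold the additive constants into the $\sqrt{T\log T}$ coefficient to obtain $C_1$ and $C_2$. The only cosmetic difference is that the paper handles the complement event by bounding the entire regret by $\kappa\bar b T$ there, whereas you bound only the discrepancy term; both yield the same $\kappa\bar b/T$ contribution.
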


Observe that the regret of Dual FTRL satisfies $\regret(A) = \tilde O(\sqrt{T})$ whenever $\sum_{t=1}^T \W(\PP_t, \tilde \PP_t) = \tilde O(\sqrt{T})$. In other words, Dual FTRL achieves near-optimal regret with a single trace as long as the total discrepancy $\sum_{t=1}^T \W(\PP_t, \tilde \PP_t)$ is not too large. Finally, we would also like to note that our algorithm is extremely efficient computationally. In particular, due to the equivalence of FTRL and ``Lazy" Online Mirror Descent (OMD) (see \citealt{hazan2016introduction}), each dual update in \eqref{eq:FTRL} can be computed in constant time by running Lazy OMD. Moreover, given a trace $\{\tilde \gamma_t\}$ which is sorted in increasing order of bang-per-buck $\coeff(\tilde f_t)/\coeff(\tilde b_t)$, the target sequence $\{\lambda_t\}_t$ can be computed in $O(T)$ steps (see Appendix~\ref{appendix:fast-implmentation} for details).


\bibliographystyle{icml2023}
\bibliography{example_paper}

\newpage
\appendix
\onecolumn
\section{Related Work}\label{sec:related-work}

\citet{balseiro2019learning} study budget pacing in repeated second-price auctions when the values and competing bids are either i.i.d. according to some unknown distribution or adversarially selected. They show that Dual Gradient Descent with the constant target $\lambda_t = B/T$ for all $t \in [T]$ attains the optimal regret of $O(\sqrt{T})$ in the i.i.d. stochastic setting, and the optimal parameter-dependent asymptotic competitive ratio (equal to ratio of the per-period budget to the maximum value) in the adversarial setting. \citet{zhou2008budget} also study the adversarial setting and provide a pacing-based algorithm that achieves a differently-parameterized competitive ratio which scales as the logarithm of the ratio of the highest-to-lowest return-on-investment, and show that it is optimal. \citet{kumar2022optimal} study an episodic setting and provide a density-estimation-based algorithm for learning the target expenditures for each episode. \citet{gaitonde2022budget} study the performance of the algorithm of \citet{balseiro2019learning} for the different objective of value maximization, and against the different benchmark comprised of pacing multipliers which spend the same amount $B/T$ at each time period. Under the no-overbidding assumption, they show that the algorithm of \citet{balseiro2019learning} achieves $O(T^{3/4})$ regret. Recent years have also seen significant attention being given to pacing in multi-buyer settings, but we focus on the single-agent setting here and refer the reader to the recent works of \citet{chen2021complexity} and \citet{gaitonde2022budget} for an overview.

More generally, budget pacing in second-price auctions is a special case of online linear packing, which in turn is a special case of the online resource allocation problem. Both these problems allow for multiple resources and have been studied extensively; we only provide a broad overview here. For the most part, these problems have also been studied in the i.i.d. stochastic model, or the slightly more general random arrival model (requests are selected by an adversary but arrive in a uniformly random order). \citet{devanur2009adwords} and \citet{feldman2010online} study online linear packing under the random arrival model, and show that learning the dual from the initial requests and then using it to make decisions yields $O(T^{2/3})$ regret. \citet{agrawal2014dynamic} extended these results to show that repeatedly solving for the dual at geometrically increasing intervals yields the optimal $O(\sqrt{T})$ regret. \citet{devanur2011near} and \citet{kesselheim2014primal} also achieve $O(\sqrt{T})$ regret but with a better dependence on the constants and the number of resources. \citet{gupta2016experts} give a dual-descent-based algorithm that also achieves $O(\sqrt{T})$ regret. 

\citet{agrawal2014fast} study online resource allocation with concave rewards and convex constraints, and give a dual-descent-based algorithm that achieves $O(\sqrt{T})$ regret. \citet{balseiro2022best} give a Dual Mirror Descent algorithm which attempts to spend $\lambda_t = B/T$ at each time step and show that it achieves $O(\sqrt{T})$ regret for the general online allocation problem. Their results also hold for stochastic models that are close to i.i.d. like periodic, ergodic etc. \citet{balseiro2022online} present and analyze a generalization of Dual Mirror Descent that can incorporate arbitrary target expenditures $\{\lambda_t\}_t$. When the horizon $T$ is not known exactly and only assumed to lie in some known uncertainty window, they provide a procedure for computing target expenditures which yield a near-optimal asymptotic competitive ratio. Our Dual FTRL algorithm (Algorithm~\ref{alg:dual-descent}) is equivalent to the Lazy-update version of the algorithm of \citet{balseiro2022online}. Importantly, with the exception of \citet{devanur2011near}, none of the aforementioned works provide algorithms which can achieve vanishing regret for the setting with time-varying distributions. \citet{devanur2011near} achieve $O(\sqrt{T})$ regret when the optimal expected reward for each distribution is known in advance. However, this quantity cannot be computed with a single sample for non-trivial distributions, and they do not provide guarantees for our sample-access setting.

Another line of work develops algorithms that beat $O(\sqrt{T})$ regret when the problem instance is well-structured. With the exception of \citet{banerjee2020constant} and \citet{banerjee2020uniform}, all of these works assume complete knowledge of the distributions and/or assume that the distributions are identical. When the number of requests of each type satisfies a concentration property between the trace and the actual requests, \citet{banerjee2020constant} and \citet{banerjee2020uniform} show that a constant regret can be achieved for online resource allocation using one sample per distribution. For the budget pacing problem, a type corresponds to a value and competing bid pair. Since complex machine-learning models are typically used to estimate advertiser values to a high precision, this translates to an extremely large number of possible types. Far from concentrating, these large number of types imply that one is unlikely to even observe a type more than once, making their primal-based method ineffective for budget pacing. Moreover, neither \citet{banerjee2020constant} nor \citet{banerjee2020uniform} do not provide any robustness guarantees for possible discrepancies between the sampling and true distributions, and their algorithm requires knowledge of the competing bid. Finally, our results are meaningful when the budget is much larger than the maximum amount one can spend on an auction/request, as is the case for budget pacing. In contrast, the literature on prophet inequalities considers a unit-cost variable-reward online allocation problem where the budget is only large enough to accept one request. See \citet{azar2014prophet, correa2019prophet, rubinstein2020optimal, caramanis2022single} for a sample-driven treatment of prophet inequalities.
\section{Application to Budget Pacing}\label{sec:application}

Here we discuss how the budget pacing problem fits as a special case of the online resource allocation problem that we study in this paper. Consider the setting in which a budget-constrained advertiser repeatedly participates in $T$ second-price auctions. For simplicity, assume that all ties are broken in favor of this advertiser. Let $v_t$ and $d_t$ denote her value and the highest competing bid in the $t$-th auction respectively. Moreover, let $B$ denote her budget, which represents the maximum amount she is willing to spend over all $T$ auctions.

We will assume that the tuple $(v_t, d_t)$ is drawn from some distribution $\PP_t$, independently of all other auctions. Now, observe that every bid of the advertiser results in one of two possible outcomes: (i) she bids greater than or equal to $d_t$, wins the auction, gains utility $v_t - d_t$ and pays $d_t$; (ii) she bids strictly less than $d_t$, loses the auction, gains zero value and pays zero. Thus, corresponding to the tuple $(v_t, d_t)$, we can define a corresponding request $\gamma_t$ with linear reward function $f_t(x) = (v_t - d_t) \cdot x$ and linear consumption function $b_t(x) = d_t \cdot x$, for the action space $x \in \{0,1\} = \{\text{lose}, \text{win}\}$. Similarly, corresponding to the sample trace of tuples $\{(\tilde v_t, \tilde d_t)\}_t$, we can define a trace $\{\tilde \gamma_t\}_t$ for the online allocation problem. This defines a corresponding instance of the online allocation problem. Since every bid either results in either a win or loss, the maximum expected utility (value - payment) that the advertiser can earn subject to her budget constraint is bounded above by $\fluid(\{\PP_t\}_t)$ for this instance. Finally, consider step $t$ of Algorithm~\ref{alg:dual-descent} on this instance. The decision $x_t$ is calculated as $x_{t} \in \argmax_{x\in\X_t}\left\{f_{t}(x)-\mu_{t} \cdot b_{t}(x)\right\}$. Therefore, $x_t = 1$ if $v_t - d_t \geq \mu d_t$, or equivalently $v_t/(1 + \mu_t) \geq d_t$, and $x_t = 0$ otherwise. Observe that, in a second price auction, if the advertiser bids $v_t/(1 + \mu_t)$, she will win ($x_t = 1$) if $v_t/(1 + \mu_t) \geq d_t$ and lose ($x_t = 0$) otherwise. Thus, by bidding $v_t/(1 + \mu_t)$, she can simulate the actions of Algorithm~\ref{alg:dual-descent} for the online allocation instance. Moreover, she does not require knowledge of the competing bid $d_t$ to compute her bid, which is crucial because $d_t$ is not known in practice. Once the auction is over, the expenditure $b_t(x_t) = d_t \cdot x_t$ is revealed to the advertiser. She can then use it to update the dual iterate according to \eqref{eq:FTRL}.

\section{Fluid Benchmark is Stronger}\label{appendix:fluid-relaxation}

\begin{proposition}
	For any collection of request distributions $\{\PP_t\}_t$, we have $\E_{\{\gamma_t\}_t}[\opt(\{\gamma_t\}_t)] \leq \fluid(\{\PP_t\}_t)$.
\end{proposition}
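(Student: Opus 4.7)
The plan is to exhibit a non-anticipating policy that is feasible for the fluid problem and achieves an expected reward equal to $\E[\opt(\{\gamma_t\}_t)]$; the inequality then follows immediately from the definition of $\fluid$ as a maximum.

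First, I would fix a measurable selection $\{x_t^*\}_{t=1}^T$ of the hindsight-optimal actions: each $x_t^*$ is a function of the entire sequence $\{\gamma_s\}_{s=1}^T$, satisfies $\sum_{t=1}^T b_t(x_t^*) \le B$ almost surely, and attains $\sum_t f_t(x_t^*) = \opt(\{\gamma_t\}_t)$. From this, define the candidate policy
\begin{equation*}
    \tilde x_t(\gamma) \;:=\; \E\bigl[x_t^* \,\big|\, \gamma_t = \gamma\bigr], \qquad \gamma \in \S,
\end{equation*}
which is non-anticipating because it depends on the current request alone.

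The key step is to exploit linearity of $f_t$ and $b_t$ together with independence of the $\gamma_s$. Since $f_t(x)=\coeff(f_t)\cdot x$ and $\coeff(f_t)$ is measurable with respect to $\gamma_t$, the tower rule yields
\begin{equation*}
    \E\bigl[f_t(\tilde x_t(\gamma_t))\bigr] \;=\; \E\!\left[\coeff(f_t)\cdot \E[x_t^* \mid \gamma_t]\right] \;=\; \E\!\left[\E[f_t(x_t^*)\mid \gamma_t]\right] \;=\; \E\bigl[f_t(x_t^*)\bigr],
\end{equation*}
and exactly the same identity holds for $b_t$. Summing over $t$ gives, on the one hand, $\sum_{t=1}^T \E[f_t(\tilde x_t(\gamma_t))] = \E[\opt(\{\gamma_t\}_t)]$; and on the other hand, $\sum_{t=1}^T \E[b_t(\tilde x_t(\gamma_t))] = \E[\sum_t b_t(x_t^*)] \le B$, so the candidate policy satisfies the expected budget constraint.

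The one subtlety I expect to be the main obstacle is that $\tilde x_t(\gamma)$ is a conditional expectation of $\X$-valued random variables and therefore lies in $\mathrm{conv}(\X)$ but not necessarily in $\X$ itself. To handle this cleanly I would note that, because $f_t$ and $b_t$ are linear in $x$, replacing $\X$ by $\mathrm{conv}(\X)$ in the fluid program does not change its value: any policy taking values in $\mathrm{conv}(\X)$ can be realized by independently randomizing over the extreme points in $\X$ while preserving $\E[f_t(\cdot)]$ and $\E[b_t(\cdot)]$ at every $t$. With this observation, $\{\tilde x_t\}_{t=1}^T$ is feasible for the fluid program and attains objective value $\E[\opt(\{\gamma_t\}_t)]$, completing the proof.
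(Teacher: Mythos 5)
Your proof is correct under the standard reading of the fluid benchmark, and it takes a genuinely different route from the paper's. The paper fixes a single realization $\{\gamma_t\}_t$, extends its hindsight-optimal actions to the constant policy $x_t(\gamma)=x_t^*$, and asserts feasibility for $\fluid(\{\PP_t\}_t)$; you instead disintegrate the hindsight optimum by conditioning, $\tilde x_t(\gamma)=\E[x_t^*\mid\gamma_t=\gamma]$, and use linearity of $f_t,b_t$ together with the tower rule to transfer both the objective and the budget constraint in expectation. Your route is the more robust one: the paper's constant policy inherits the budget constraint only along the fixed realization, and its expected consumption under $\PP_t$ can exceed $B$ when the realized requests happen to be cheap relative to $\PP_t$, whereas your averaged policy satisfies $\sum_t\E[b_t(\tilde x_t(\gamma_t))]=\E[\sum_t b_t(x_t^*)]\le B$ by construction. (A minor point: independence of the $\gamma_s$ plays no role in the tower-rule step; measurability of $\coeff(f_t)$ and $\coeff(b_t)$ with respect to $\gamma_t$ is all you use.)

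The one step that deserves care is exactly the one you flag: $\tilde x_t(\gamma)\in\mathrm{conv}(\X)$, and your fix of randomizing over points of $\X$ produces a \emph{randomized} policy. The assertion that passing from $\X$ to $\mathrm{conv}(\X)$ leaves the fluid value unchanged is valid only if $\fluid$ is understood as optimizing over randomized (equivalently, $\mathrm{conv}(\X)$-valued) policies; for deterministic $\X$-valued policies with non-convex $\X$ the convexified program can be strictly larger. Indeed, with $\X=\{0,1\}$, $T=2$, $B=1$, $\PP_1$ a point mass on $f(x)=b(x)=x$, and $\PP_2$ uniform over $\{(2x,x),(x/2,x)\}$, one checks $\E[\opt]=3/2$ while the best deterministic map achieves only $5/4$ — so the proposition itself requires the randomized/LP reading of $\fluid$ when $\X$ is non-convex. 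Under that reading (which is the one consistent with the rest of the paper, since the dual bound $D(\mu|\{\PP_t\}_t)\ge\fluid(\{\PP_t\}_t)$ holds for the randomized program as well), your argument is complete.
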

\begin{proof}
	Fix any request sequence $\{\gamma_t\}_t$ and let $\{x_t^*\}_t$ be an optimal solution to the corresponding hindsight optimization problem $\opt(\{\gamma_t\}_t)$. Then, $x_t(\gamma) = x_t^*$ for all $\gamma \in \S$ is a feasible solution of $\fluid(\{\PP_t\})$ and consequently, we have $\opt(\{\gamma_t\}_t) \leq \fluid(\{\PP_t\}_t)$. Since the request sequence $\{\gamma_t\}_t$ was arbitrary, we have $\E_{\{\gamma_t\}_t}[\opt(\{\gamma_t\}_t)] \leq \fluid(\{\PP_t\}_t)$, as required.
\end{proof}

\section{General Position}\label{appendix:general-position}

Given any collection of request distributions $\{\PP_t\}_t$ (which may or may not satisfy Assumption~\ref{assumption:general-position}), we can define perturbed distributions $\{\hat \PP_t\}_t$ to capture the distribution of perturbed requests $\hat \gamma_t = (\hat f_t, \hat b_t)$ generated using the following two step procedure: (i) Draw a request $\gamma_t = (f_t, b_t)$ according to the unperturbed distributions $\PP_t$; (ii) Add a perturbation by setting $\hat f_t(x) = f_t(x) + \epsilon_t \cdot x$ for all $x \in \X$, where $\epsilon_t \sim \text{Unif}([0,a])$, and leave the consumption function unchanged $\hat b_t( \cdot) = b_t(\cdot)$. Then, $\{\hat \PP_t\}_t$ satisfy Assumption~\ref{assumption:general-position} and $\left| \fluid(\{\PP_t\}_t) - \fluid(\{\hat \PP_t\}_t) \right| \leq a \cdot T$, where $a > 0$ can be made arbitrarily small.

\section{Efficiently Computing the Target Sequence}\label{appendix:fast-implmentation}

\begin{algorithm}[t!]
	\SetAlgoLined
	\textbf{Input:} Trace $\{\tilde \gamma_t\}_t$ in general position and sorted in increasing order of $\coeff(\tilde f_t)/\coeff(\tilde b_t)$.\\
	\textbf{Initialize:} Total payment $P= 0$ and target sequence $\lambda_t = 0$ for all $t \in [T]$.\\
	\For{$t = T, \ldots, 0$}{
        \If{$P + \tilde b_t(\bar x) > B$}{Set $\lambda_t \leftarrow \tilde b_t(\bar x)$, and set $\tilde \mu = \coeff(\tilde f_t)/\coeff(\tilde b_t)$. \textbf{Break}.}
        \Else{Update total payment $P \leftarrow P + \tilde b_t(\bar x)$ and set $\lambda_t \leftarrow \tilde b_t(\bar x)$.}
    }
	\Return{Dual variable $\tilde \mu$, target sequence $\{\lambda_t\}_t$}
	\caption{Learning the Dual from the Trace}
	\label{alg:efficient-trace-dual}
\end{algorithm}

In this section, we describe an efficient procedure for computing the empirical optimal dual solution $\tilde \mu$ and the target sequence $\{\lambda_t\}_t$. Consider a trace $\{\tilde \gamma_t\}_t$ and set $\tilde \gamma_0 = (f_0,b_0)$ such that $f_0(x) = b_0(x) = 0$ for all $x \in \X$. Without loss of generality, we will assume that $\{\tilde \gamma_t\}_t$ is sorted in increasing order of $\coeff(\tilde f_t)/\coeff(\tilde b_t)$ (assume $0/0 = 0$), i.e.,
\begin{align*}
    \frac{\coeff(\tilde f_s)}{\coeff( \tilde b_s)} \leq \frac{\coeff(\tilde f_t)}{\coeff( \tilde b_t)} \quad \forall\ s \leq t\,.
\end{align*}

This can be easily achieved by maintaining a sorted array with $O(\log(T))$ insertion time or sorting the array with $O(T \log(T))$ processing time. Moreover, since the trace $\{\tilde \gamma_t\}_t$ is in general position by Assumption~\ref{assumption:general-position}, all the $\coeff(\tilde f_t)/ \coeff(\tilde b_t)$ are distinct for $t \in [T]$.

\begin{theorem}
    $\tilde \mu$ returned by Algorithm~\ref{alg:efficient-trace-dual} is the smallest element in $\argmin_{\mu \geq 0} \mu\cdot B + \sum_{t=1}^T \max_{x \in \X} \left\{ \tilde f_t(x) - \mu \cdot \tilde b_t(x) \right\} $. Moreover, $\lambda_t = b_t^*(\tilde \mu)$ for all $t \in [T]$.
\end{theorem}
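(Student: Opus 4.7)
The plan is to exploit the linearity of $\tilde f_t$ and $\tilde b_t$ (together with $0,\bar x\in\X$) to reduce the dual objective to an explicit piecewise-linear convex function in $\mu$, then identify the break index of Algorithm~\ref{alg:efficient-trace-dual} with the breakpoint where the subgradient of this function first changes sign.

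First I would set $\rho_t := \coeff(\tilde f_t)/\coeff(\tilde b_t)$ (with the sentinel convention $\rho_0=0$). Because $\tilde f_t(x)-\mu\tilde b_t(x)=(\coeff(\tilde f_t)-\mu\,\coeff(\tilde b_t))\,x$ is linear in $x$ and $\{0,\bar x\}\subseteq\X$, the inner maximum evaluates to $\bar x\cdot(\coeff(\tilde f_t)-\mu\,\coeff(\tilde b_t))^+$, attained at $x=\bar x$ when $\mu\le\rho_t$ and at $x=0$ when $\mu>\rho_t$. The tie-breaking rule in Definition~\ref{definition:profit-maximizing-decision} (pick the maximizer with the largest $f$-value) then gives $\tilde b_t^*(\mu)=\tilde b_t(\bar x)$ if $\mu\le\rho_t$ and $\tilde b_t^*(\mu)=0$ if $\mu>\rho_t$. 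Therefore
\[
D(\mu)\;=\;\mu B\;+\;\bar x\sum_{t=1}^{T}\bigl(\coeff(\tilde f_t)-\mu\,\coeff(\tilde b_t)\bigr)^+
\]
is convex and piecewise linear with kinks exactly at $\{\rho_t\}_{t\ge 1}$, which are distinct by Assumption~\ref{assumption:general-position}. Its slope on any interval $(\rho_k,\rho_{k+1})$ is $B-\sum_{t=k+1}^{T}\tilde b_t(\bar x)$.

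Next I would characterize the smallest minimizer of $D$ analytically. By convexity, it is the smallest kink $\rho_k$ at which the slope transitions from strictly negative to non-negative; equivalently, $\tilde\mu=\rho_k$ is the unique index satisfying
\[
\sum_{t=k+1}^{T}\tilde b_t(\bar x)\;\le\;B\;<\;\sum_{t=k}^{T}\tilde b_t(\bar x),
\]
with the boundary convention $\tilde\mu=\rho_0=0$ when $\sum_{t=1}^{T}\tilde b_t(\bar x)\le B$. On the algorithmic side, a short induction on the loop iteration establishes the invariant that, at the \emph{start} of iteration $s$, $P=\sum_{t=s+1}^{T}\tilde b_t(\bar x)$ and $\lambda_t=\tilde b_t(\bar x)$ has been set for all $t>s$ while $\lambda_t=0$ for $t\le s$. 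Hence the else-branches at $s=T,T-1,\dots,k+1$ collectively assert $\sum_{t=k+1}^{T}\tilde b_t(\bar x)\le B$, and the if-branch at $s=k$ asserts $\sum_{t=k}^{T}\tilde b_t(\bar x)>B$. These are precisely the two displayed inequalities, so the algorithm returns $\tilde\mu=\rho_k$, matching the analytic characterization.

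For the second claim, reading off the values at termination gives $\lambda_t=\tilde b_t(\bar x)$ for $t\ge k$ and $\lambda_t=0$ for $t<k$. By sortedness and the distinctness of the $\rho_t$, $t\ge k\iff\rho_t\ge\rho_k=\tilde\mu$, so this coincides with the formula for $\tilde b_t^*(\tilde\mu)$ derived in the first paragraph, completing the proof. The main obstacle in fleshing this out will be the bookkeeping at the break-iteration itself: the algorithm assigns $\lambda_k=\tilde b_k(\bar x)$, and one must verify this agrees with $\tilde b_k^*(\tilde\mu)$ at $\mu=\rho_k$, which crucially relies on the tie-breaking convention of Definition~\ref{definition:profit-maximizing-decision}, together with handling the abundant-budget corner case through the $\tilde\gamma_0$ sentinel so that a loop completing without a break is interpreted as $\tilde\mu=0$.
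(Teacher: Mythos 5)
Your proposal is correct and follows essentially the same route as the paper: both locate the minimizer of the convex piecewise-linear dual by finding where its (sub)derivative $B - \sum_{t\geq k}\tilde b_t(\bar x)$ changes sign, match this breakpoint to the algorithm's stopping condition, and then read off $\lambda_t = \tilde b_t^*(\tilde\mu)$ from the threshold structure of the profit-maximizing action. The only difference is presentational: you write the dual explicitly as $\mu B + \bar x\sum_t(\coeff(\tilde f_t)-\mu\,\coeff(\tilde b_t))^+$ and argue via interval slopes plus a loop invariant, whereas the paper phrases the identical computation through subdifferentials of maxima of linear functions and first-order optimality conditions.
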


\begin{proof}
Set $q(\mu) = \mu\cdot B + \sum_{t=1}^T \max_{x \in \X} \left\{ \tilde f_t(x) - \mu \cdot \tilde b_t(x) \right\}$. First, we show that the dual variable $\tilde \mu$ is smallest element in $\argmin_{\mu \geq 0} q(\mu)$. To do so, we consider the following two cases:

\begin{itemize}
    \item $\tilde \mu = 0$. In this case, the `If' condition implies that there exists $s \in [T]$ such that $\sum_{t=s}^T \tilde b_t(\bar x) \leq B$ and $\coeff(\tilde f_t) = 0$ for all $t< s$. Moreover, we have $0 \in \argmax_{x \in \X} \left\{ \tilde f_t(x) - \mu \cdot \tilde b_t(x) \right\}$ for all $t < s$ and $\bar x \in \argmax_{x \in \X} \left\{ \tilde f_t(x) - \mu \cdot \tilde b_t(x) \right\}$ for all $t \geq s$. Now, note that the set of sub-gradients of the maximum of a collection of linear functions is equal to convex hull of gradients of all the linear functions which are binding (for example, see Chapter 5 of \citealt{bertsekas2009convex}). Therefore, $B - \sum_{t=s}^T \tilde b(\bar x) \in \partial q(0)$. Since $B - \sum_{t=s}^T \tilde b(\bar x) \geq 0$, the definition of a subgradient implies that $q(0) \leq q(\mu)$ for all $\mu \geq 0$. Hence, we have shown that $\tilde \mu$ is the smallest minimizer of $q(\cdot)$, as required.

    \item $\tilde \mu > 0$. In this case, the `If' condition implies that there exists $s \in [T]$ such that $\sum_{t=s+1}^T \tilde b_t(\bar x) < B$, $\sum_{t=s}^T \tilde b_t(\bar x) > B$ and $\tilde f_s(x) - \tilde \mu \cdot \tilde b_s(x) = 0$ for all $x \in \X$. Moreover, we have $0 \in \argmax_{x \in \X} \left\{ \tilde f_t(x) - \mu \cdot \tilde b_t(x) \right\}$ for all $t < s$, $\bar x \in \argmax_{x \in \X} \left\{ \tilde f_t(x) - \mu \cdot \tilde b_t(x) \right\}$ for all $t > s$ and $\{0,\bar x\} \subseteq \argmax_{x \in \X} \left\{ \tilde f_s(x) - \mu \cdot \tilde b_s(x) \right\}$. Now, select a $\lambda \in [0,1]$ such that
    \begin{align*}
        B - \lambda \cdot \tilde b_s(0) + (1 - \lambda) \cdot \tilde b_s(\bar x) + \sum_{t=s+1}^T \tilde b_t(\bar x) = 0\,.
    \end{align*}
    Now, note that the set of sub-gradients of the maximum of a collection of linear functions is equal to convex hull of gradients of all the linear functions which are binding (for example, see Chapter 5 of \citealt{bertsekas2009convex}). Therefore, $0 =B - \lambda \cdot \tilde b_s(0) + (1 - \lambda) \cdot \tilde b_s(\bar x) + \sum_{t=s+1}^T \tilde b_t(\bar x) \in \partial q(\tilde\mu)$. Consequently, the definition of a subgradient implies that $q(0) \leq q(\mu)$ for all $\mu \geq 0$. Finally, consider any $\mu < \tilde \mu$. Then, $\tilde f_s(x) - \mu \cdot \tilde b_s(x) > 0$, which further implies $\{\bar x\} = \argmax_{x \in \X} \left\{ \tilde f_t(x) - \mu \cdot \tilde b_t(x) \right\}$ for all $t \geq s$. Therefore, for any $\{x_t\}_t$ such that $x_t \in \argmax_{x \in \X} \left\{ \tilde f_t(x) - \mu \cdot \tilde b_t(x) \right\}$, we have $B - \sum_{t=1}^T \tilde b_t(x_t) > 0$. Hence, $v > 0$ for all $v \in \partial q(\mu)$ and consequently $\mu$ is a minimizer of $q(\cdot)$. Hence, we have shown that $\tilde \mu$ is the smallest minimizer of $q(\cdot)$, as required.
\end{itemize}

Finally, we show that $\lambda_t = \tilde b_t^*(\tilde \mu)$ for all $t \in [T]$. Let $s$ be the value of $t$ at which the `For' loop terminates. From the definition of $\tilde \mu$, we have $\{\bar x\} = \argmax_{x \in \X} \left\{ \tilde f_t(x) - \tilde \mu \cdot \tilde b_t(x) \right\}$ for all $t > s$, $\X = \argmax_{x \in \X} \left\{ \tilde f_s(x) - \tilde \mu \cdot \tilde b_s(x) \right\}$ and $\{ 0 \} = \argmax_{x \in \X} \left\{ \tilde f_t(x) - \tilde \mu \cdot \tilde b_t(x) \right\}$ for all $t < s$. Therefore, $\tilde b_t^*(\tilde \mu) = \tilde b_t(\bar x) = \lambda_t$ for all $t \geq s$ and $\tilde b_t^*(\tilde \mu) = \tilde b_t(0) = \lambda_t$ for all $t < s$.
\end{proof}
\section{Missing Proofs from Section~\ref{sec:dual}}

\subsection{Proof of Lemma~\ref{lemma:trace-dual}}

\begin{proof}[Proof of Lemma~\ref{lemma:trace-dual}]
	Define $q(\mu) = \mu\cdot B + \sum_{t=1}^T \max_{x \in \X} \left\{\tilde f_t(x) - \mu \cdot \tilde b_t(x) \right\}$. Then, $q(\cdot)$ is a convex function of $\mu$ because the maximum of a collection of linear function is convex and the sum of convex function is also convex \citep{bertsekas2009convex}. Since $\tilde \mu \in \argmin_{\mu \geq 0} q(\mu)$, first-order condition of optimality (Proposition 5.4.7 of \citealt{bertsekas2009convex}) implies that one of the following statements holds:
	\begin{itemize}
		\item[(i)] $\tilde \mu = 0$ and there exists $v \in \partial q(0)$ such that $v \geq 0$. 
		\item[(ii)] Zero is a sub-differential of $q$ at $\tilde \mu$, i.e.,  $0 \in \partial q(\tilde \mu)$.
	\end{itemize}
	
	Recall that the trace $\{\tilde \gamma_t\}_t$ is assumed to be in general position with probability one. Therefore, there is at most one time step for which $\argmax_{x \in \X} f_t(x) - \tilde \mu \cdot b_t(x)$ is not unique. Let $s$ be that time step. Now, note that the set of sub-gradients of the maximum of a collection of linear functions is equal to convex hull of gradients of all the linear functions which are binding (for example, see Chapter 5 of \citealt{bertsekas2009convex}). Hence, $v \in \partial q(\tilde\mu)$ implies the existence of $\D_s \in \Delta(\X)$ such that
	\begin{align*}
		\text{Support}(\D_s) \subseteq \argmax_{x \in \X} \tilde f_s(x) - \tilde\mu \cdot \tilde b_s(x) \quad \text{ and } \quad v = B - \E_{x \sim \D_s}[\tilde b_s(x)] - \sum_{t\neq s}^T \tilde b_t^*(\tilde \mu) \,.
	\end{align*}
	where $x_t^*(\gamma_t, \tilde \mu)$ is the optimal solution to $\max_{x \in \X} \tilde f_t(x) - \tilde \mu \cdot \tilde b_t(x)$ as described in Definition~\ref{definition:profit-maximizing-decision}. Since $0 \leq b_s(x) \leq \bar b$ for all $x \in \X$, we get
	\begin{align*}
		\left| B - v - \sum_{t=1}^T \tilde b_t^*(\tilde \mu) \right| \leq \bar b\,,
	\end{align*}
	where we have used $0 \leq b_t(x) \leq \bar b$ for all $x \in \X$ and $t \in [T]$. Therefore, statements (i) and (ii) imply that either $\tilde \mu = 0$ and $\bar b +  B - \sum_{t=1}^T \tilde b_t^*(\tilde \mu) \geq v \geq 0$, or  $\left|B - \sum_{t=1}^T \tilde b_t^*(\tilde \mu) \right| \leq \bar b$, as required.
\end{proof}

\subsection{Proof of Theorem~\ref{thm:concentration}}

\begin{proof}[Proof of Theorem~\ref{thm:concentration}]

    Define the hypothesis class
    \begin{align*}
        \mathcal{F}\coloneqq \{ (f,b) \mapsto b^*(\mu) \mid \mu \geq 0 \} \,.
    \end{align*}

    Let $\text{Rad}(\cdot)$ denote Radmacher complexity. Then, we know from PAC learning theory (for example see Chapter 26 of \citealt{shalev2014understanding}) that
    \begin{align}\label{eq:radmacher1}
		\Pr_{\{\tilde \gamma\}_t \sim \prod_t \tilde \PP_t}\left( \sup_{\mu \geq 0} \left| \sum_{t=1}^T \tilde b_t^*(\mu) - \sum_{t=1}^T \E_{\hat \gamma_t \sim \tilde \PP_t}\left[\hat b_t^*(\mu) \right] \right| \geq r(T) \right) \leq  \frac{1}{T^2} \,. 
	\end{align}
    for
    \begin{align*}
        r(T) \geq 2T \cdot \E_{\{\hat \gamma_t\}_t \sim \prod_t \tilde \PP_t} \left[\text{Rad}(\mathcal{F}\ \circ\ \{\hat \gamma_t\}_t) \right] + 2\bar b \cdot \sqrt{T\log(2T)}\,.
    \end{align*}

    Let $H(\{\hat \gamma_t\}_t) = \left\{ \{\hat b_t^*(\mu) \}_t \mid \mu \geq 0 \right\}$ denote the set of all possible resource expenditure vectors that can be generated from a trace $\{\hat \gamma_t\}_t$, then
    \begin{align}\label{eq:radmacher2}
        \E_{\{\hat \gamma_t\}_t \sim \prod_t \tilde \PP_t} \left[\text{Rad}(\mathcal{F}\ \circ\ \{\hat \gamma_t\}_t) \right] = \E_{\{\hat \gamma_t\}_t \sim \prod_t \tilde \PP_t} \left[\text{Rad}(H(\{\hat\gamma_t\}_t))\right] = \frac{1}{T} \cdot \E_{\{\hat \gamma_t\}_t \sim \prod_t \tilde \PP_t}\E_{\vec\sigma} \left[  \sup_{\mu \geq 0}   \sum_{t=1}^T \sigma_t \cdot \hat b_t^*(\mu) \right]\,,
    \end{align}
    where $\{\sigma_t\}_t$ are independent Radmacher random variables.
    
    For a linear function $f:\R \to \R$, let $\coeff(f)$ denote its coefficient. Moreover, let $\bar x = \max_{x \in \X} x$. Then, observe that for a request $\gamma = (f,b)$ and dual variable $\mu \geq 0$, we have
    \begin{align*}
        x^*(\gamma, \mu) = 
        \begin{cases}
            \bar x &\text{if } \coeff(f) - \mu \cdot \coeff(b) \geq 0 \text{ and } \coeff(f) \neq 0\\
            0 &\text{otherwise} 
        \end{cases}\,.
    \end{align*}
    Therefore, for any request $\gamma = (f,b)$ with $\coeff(f) \neq 0$ and $\coeff(b) \neq 0$, there exists a critical $\mu^* = \coeff(f)/\coeff(b)$ such that
    \begin{align*}
        b^*(\mu) =
        \begin{cases}
            b(\bar x) &\text{if } \mu \leq \mu^*\\
            0 &\text{if } \mu > \mu^*
        \end{cases}\,.
    \end{align*}
    For $\gamma = (f,b)$ with $\coeff(f) = 0$ or $\coeff(b) \neq 0$, we have $b^*(\mu) = 0$ for all $\mu \geq 0$. Consider the trace $\{\hat \gamma_t\}_t$, and let $\mu^*_t$ be the critical dual solution for request $\hat \gamma_t$ as defined above. Then, the assumption that the trace is in general position (Assumption~\ref{assumption:general-position}) implies that the critical points $\{\mu^*_t\}_t$ are distinct. Consequently, we get that $\{\hat b_t^*(\mu)\}_t$ remains constant whenever $\mu$ lies between any two critical points. Since the total number of critical points is $T$, we get that $|H(\{\hat \gamma_t\}_t)| \leq T$. Therefore, Massart Lemma applies and we get
    \begin{align*}
        \text{Rad}(H(\{\hat\gamma_t\}_t)) \leq \bar b \cdot \sqrt{\frac{2 \log(T)}{T}}\,.
    \end{align*}
    Combining this with \eqref{eq:radmacher1} and \eqref{eq:radmacher2} yields the theorem.
\end{proof}

\subsection{Proof of Theorem~\ref{thm:learned-dual-regret}}

\begin{proof}[Proof of Theorem~\ref{thm:learned-dual-regret}]

	By Assumption~\ref{assumption:general-position}, the request sequence $\{\gamma_t\}$ is in general position almost surely. Therefore, there is at most 1 time step such that $f_t(x_t') \neq f_t^*(\tilde\mu)$, call it $s$. Let $\tA$ be the first time step $t$ in which $B_{t+1} \leq \bar b$, i.e., $\sum_{t=1}^{\tA} b_t^*(\tilde \mu) \geq B - \bar b$. Then,
	\begin{align*}
		\E\left[ R(A|\{\gamma_t\}_t) \right]  &= \E\left[ \sum_{t=1}^{\tA} f_t(x_t') \right]\\
        &\geq \E\left[ \sum_{t=1}^{\tA} f_t^*(\tilde \mu) \right] - |f_s^*(\tilde\mu) - f_s(x_s')| \\
		&\geq \E\left[ \sum_{t=1}^T f_t^*(\tilde \mu) \right] -  \E\left[\sum_{t=\tA +1}^T f_t^*(\tilde\mu) \right] - \bar f \\
		&\geq \E\left[ \sum_{t=1}^T f_t^*(\tilde \mu) \right] -  \E\left[\kappa \cdot \sum_{t=\tA +1}^T b_t^*(\tilde\mu) \right] - \bar f \\
		&\geq D(\tilde \mu| \{\PP_t\}_t) - \E\left[ \tilde \mu \cdot \left(B - \sum_{t=1}^T b_t^*(\tilde \mu) \right) \right] -  \E\left[\kappa \cdot \sum_{t=\tA +1}^T b_t^*(\tilde \mu) \right] - \bar f\\
		&\geq \fluid(\{\PP_t\}_t) - \E\left[ \tilde \mu \cdot \left(B - \sum_{t=1}^T b_t^*(\tilde \mu) \right) \right] -  \E\left[\kappa \cdot \sum_{t=\tA +1}^T b_t^*(\tilde \mu) \right] - \bar f \,.
	\end{align*}
	Therefore,
	\begin{align*}
		\regret(A) \leq  \E\left[ \kappa \cdot \sum_{t=\tA +1}^T b_t^*(\tilde \mu) \right] + \E\left[ \tilde \mu \cdot \left(B - \sum_{t=1}^T b_t^*(\tilde \mu) \right) \right] + \bar f\,.
	\end{align*}
	
	In the remainder of the proof, we bound the first two terms on the RHS. 
	
	For the first term, observe that
	\begin{align}\label{eq:excess-spend}
		\sum_{t =\tA + 1}^T b_t^*(\tilde \mu) &\leq \left( \sum_{t=1}^T b_t^*(\tilde\mu) \right) - (B - \bar b) \nonumber \\
		&=  \left( \sum_{t=1}^T b_t^*(\tilde\mu) - \sum_{t=1}^T\tilde b_t^*(\tilde\mu) \right) - \left(B - \sum_{t=1}^T \tilde b_t^*(\tilde\mu) \right) + \bar b \nonumber \\
		&\leq \left| \sum_{t=1}^T b_t^*(\tilde\mu) - \sum_{t=1}^T\tilde b_t^*(\tilde\mu)  \right| + 2 \cdot \bar b \,,
	\end{align}
	where the first inequality follows from the definition of $\tA$ and the last inequality follows from Lemma~\ref{lemma:trace-dual}.

	For the second term, observe that Lemma~\ref{lemma:trace-dual} implies
	\begin{align*}
		\tilde \mu \cdot \left(B - \sum_{t=1}^T b_t^*(\tilde \mu) \right) &= \tilde \mu \cdot \left(B - \sum_{t=1}^T \tilde b_t^*(\tilde \mu) \right) + \tilde \mu \cdot \left( \sum_{t=1}^T \tilde b_t^*(\tilde \mu) - \sum_{t=1}^T b_t^*(\tilde \mu) \right) \\
		&\leq \tilde \mu  \cdot \bar b + \tilde \mu \cdot \left| \sum_{t=1}^T \tilde b_t^*(\tilde \mu) - \sum_{t=1}^T b_t^*(\tilde \mu) \right| \,.
	\end{align*}
	
	Note that $\tilde \mu \leq \kappa$. This is because the definition of $\kappa$ implies that $\max_{x \in \X} f(x) - \mu \cdot b(x) = 0$ for all $\gamma = (f,b) \in \S$ and $\mu \geq \kappa$. Hence,
	\begin{align*}
		\mu\cdot B + \sum_{t=1}^T \max_{x \in \X} \left\{\tilde f_t(x) - \mu \cdot \tilde b_t(x) \right\} = \mu \cdot B < \kappa \cdot B= \kappa \cdot B + \sum_{t=1}^T \max_{x \in \X} \left\{\tilde f_t(x) - \kappa \cdot \tilde b_t(x) \right\}
	\end{align*}
	for all $\mu > \kappa$. Therefore, we get
	\begin{align}\label{eq:slackness-term}
		\tilde \mu \cdot \left(B - \sum_{t=1}^T b_t^*(\tilde \mu) \right) \leq \kappa \cdot \bar b + \kappa \cdot \left| \sum_{t=1}^T \tilde b_t^*(\tilde \mu) - \sum_{t=1}^T b_t^*(\tilde \mu) \right| \,.
	\end{align}

	Define $G$ to be the good event to be one in which the total expenditures under the trace and the requests sequence are close:
	\begin{align*}
		\sup_{\mu \geq 0} \left| \sum_{t=1}^T \tilde b_t^*(\mu) -  \sum_{t=1}^T b_t^*(\mu) \right| \leq r(T) \,.
	\end{align*}
	
	Then, Theorem~\ref{thm:concentration} and Union Bound imply that $\Pr(G^c) \leq 2/T^2$ and $\Pr(G) \geq 1/2/T^2$. Finally, we can put it all together to get the required bound:
	\begin{align*}
		\regret(A) &\leq  \E\left[ \kappa \cdot \sum_{t=\tA +1}^T b_t^*(\tilde \mu) \right] + \E\left[ \tilde \mu \cdot \left(B - \sum_{t=1}^T b_t^*(\tilde \mu) \right) \right] + \bar f\\
		&\leq \E\left[ \kappa \cdot \left| \sum_{t=1}^T \tilde b_t^*(\tilde \mu) - \sum_{t=1}^T b_t^*(\tilde \mu) \right| \right] + 2 \kappa \bar b + \E\left[ \kappa \cdot \left| \sum_{t=1}^T \tilde b_t^*(\tilde \mu) - \sum_{t=1}^T b_t^*(\tilde \mu) \right| \right] + \kappa \bar b + \kappa \bar b \\
		&= 2 \kappa \cdot \E\left[\left| \sum_{t=1}^T \tilde b_t^*(\tilde \mu) - \sum_{t=1}^T b_t^*(\tilde \mu) \right|\ \biggr|\ G \right] \Pr(G) + 2 \kappa \cdot \E\left[\left| \sum_{t=1}^T \tilde b_t^*(\tilde \mu) - \sum_{t=1}^T b_t^*(\tilde \mu) \right|\ \biggr|\ G^c \right] \Pr(G^c) + 4 \kappa \bar b\\
		&\leq 2 \kappa \cdot r(T) + 2 \kappa \cdot 2 T \bar b \cdot \frac{2}{T^2} + 4\kappa \bar b\\
		&\leq 12 \kappa \bar b + 2 \kappa r(T)\,. \qedhere
	\end{align*}
\end{proof}

\section{Missing Proofs from Section~\ref{sec:descent}}\label{appendix:descent}

\subsection{Proof of Theorem~\ref{thm:dual-descent-regret}}

\begin{proof}[Proof of Theorem~\ref{thm:dual-descent-regret}]

	Let $\tA$ be the first time less than $T$ for which $\sum_{t=1}^{\tA} b_{t}(x_t) + \bar{b} \ge B$. Set $\tA = T$ if this inequality is never satisfied. Then, $x_t = x_t'$ for all $t \leq \tA$ and $\sum_{t=1}^{\tA} b_t(x_t') \geq B - \bar b$.
	
	First, observe that
	\begin{align}\label{eq:alg-reward}
		R(A| \{\gamma_t\}_t) \geq \sum_{t=1}^{\tA} f_t(x_t') = \sum_{t=1}^T f_t(x_t') - \sum_{t=\tA +1}^T f_t(x_t') \geq \sum_{t=1}^Tf_t(x_t') - \kappa \cdot \sum_{t=\tA +1 }^T b_t(x_t')\,.
	\end{align}

	Next observe that, for all $t \in [T]$, $\mu_t$ is independent of $\gamma_t$ because $\mu_t$ is completely determined by $\{\gamma_1, \dots, \gamma_{t-1}\}$. Hence,
	\begin{align*}
		\E_{\gamma_t}\left[ f_t(x_t') \mid \mu_t \right] &= \E_{\gamma_t}\left[ f_t(x_t') + \mu_t \cdot (\beta_t - b_t(x_t')) \mid \mu_t \right] - \E_{\gamma_t}\left[ \mu_t \cdot (\lambda_t - b_t(x_t')) \mid \mu_t \right] - \E_{\gamma_t}\left[ \mu_t \cdot (\beta_t - \lambda_t) \mid \mu_t \right]\\
		&= \E_{\gamma_t}\left[ D(\mu_t|\PP_t, \beta_t) | \mu_t \right] - \E_{\gamma_t}\left[ \mu_t \cdot (\lambda_t - b_t(x_t')) \mid \mu_t \right] - \E_{\gamma_t}\left[ \mu_t \cdot (\beta_t - \lambda_t) \mid \mu_t \right] \,.
	\end{align*}
	Taking unconditional expectations on both sides and applying the tower rule yields
	\begin{align*}
		\E\left[ f_t(x_t')\right] = \E\left[ D(\mu_t|\PP_t, \beta_t) \right] - \E\left[ \mu_t \cdot (\lambda_t - b_t(x_t'))\right] - \E\left[ \mu_t \cdot (\beta_t - \lambda_t)\right] \,.
	\end{align*}
	Summing over $t \in [T]$, we get
	\begin{align}\label{eq:tot-reward}
		\sum_{t=1}^T \E[f_t(x_t')] = \sum_{t=1}^T \E\left[ D(\mu_t|\PP_t, \beta_t) \right] - \sum_{t=1}^T \E\left[ \mu_t \cdot (\lambda_t - b_t(x_t'))\right] - \sum_{t=1}^T \E\left[ \mu_t \cdot (\beta_t - \lambda_t)\right]\,.
	\end{align}
	
	Therefore, \eqref{eq:alg-reward} and \eqref{eq:tot-reward} together imply
	\begin{align}\label{eq:interim-regret}
		\E\left[ \left\{ \sum_{t=1}^T D(\mu_t|\PP_t, \beta_t) \right\} - R(A|\{\gamma_t\}_t) \right] \leq \E\left[ \sum_{t=1}^T \mu_t \cdot (\lambda_t - b_t(x_t'))  + \kappa \cdot \sum_{t=\tA + 1}^T b_t(x_t') \right] + \sum_{t=1}^T \E[\mu_t \cdot (\beta_t - \lambda_t)]\,.
	\end{align}

	\textbf{FTRL Regret Bound.} Define $w_t(\mu) \coloneqq \mu \cdot (\lambda_t - b_t(x_t'))$. Then, Algorithm~\ref{alg:dual-descent} can be seen as running FTRL with linear losses $\{w_t(\cdot)\}_t$. The gradients of these loss functions are given by $\nabla w_t(\mu) = \lambda_t - b_t(x_t')$, which satisfy $\|\nabla w_t(\mu)\|_\infty \leq \|b_t(x_t')\|_\infty + \|\lambda_t\|_\infty \leq \bar b + \bar \lambda$. Therefore, the regret bound for FTRL implies that for all $\mu \geq 0$:
	\begin{align}
		\sum_{t=1}^{T} w_t(\mu_t) - w_t(\mu) \le E(T,\mu)\,,
	\end{align}
	where $E(T,\mu) = \frac{2(\bar{b} + \bar\lambda)^2}{\sigma} \eta \cdot T + \frac{h(\mu) - h(\mu_1)}{\eta}$ is the regret bound of FTRL after $T$ iterations \citep{hazan2016introduction}. Equivalently, we can write
	\begin{align*}
		\sum_{t=1}^T \mu_t \cdot (\lambda_t - b_t(x_t')) \leq E(T,\mu) + \sum_{t=1}^T \mu \cdot (\lambda_t - b_t(x_t')) \qquad \forall\ \mu \geq 0.
	\end{align*}
	
	Now, consider the following two cases:
	\begin{itemize}
		\item Case 1: $\tA = T$. Here, setting $\mu = 0$ yields
			\begin{align*}
				\sum_{t=1}^T \mu_t \cdot (\lambda_t - b_t(x_t'))  + \kappa \cdot \sum_{t=\tA + 1}^T b_t(x_t') \leq E(T,0) \,.
			\end{align*}
		\item Case 2: $\tA < T$. Then, $\sum_{t=1}^{\tA} b_t(x_t') \geq B - \bar b$. Hence, setting $\mu = \kappa$ yields
			\begin{align*}
				\sum_{t=1}^T \mu_t \cdot (\lambda_t - b_t(x_t'))  + \kappa \cdot \sum_{t=\tA + 1}^T b_t(x_t') &\leq E(T, \kappa) + \sum_{t=1}^T \kappa \cdot (\lambda_t - b_t(x_t'))  + \kappa \cdot \sum_{t=\tA + 1}^T b_t(x_t')\\
				&= E(T, \kappa) + \kappa \cdot \left(\sum_{t=1}^T \lambda_t - \sum_{t=1}^{\tA} b_t(x_t') \right)\\
				&\leq E(T, \kappa) + \kappa \cdot \left( \left\{\sum_{t=1}^T \lambda_t \right\} - (B - \bar b) \right)\\
				&= E(T, \kappa) + \kappa\bar b + \kappa \cdot \left(\left\{\sum_{t=1}^T \lambda_t \right\} - B \right)\,.
			\end{align*}
	\end{itemize}
	
	Combining the two cases implies that for all values of $\tA$ we have
	\begin{align}\label{eq:comp-slack}
		\sum_{t=1}^T \mu_t \cdot (\lambda_t - b_t(x_t'))  + \kappa \cdot \sum_{t=\tA + 1}^T b_t(x_t') \leq \max\{E(T,0), E(T,\kappa)\} + \kappa \bar b + \kappa \cdot \left(\left\{\sum_{t=1}^T \lambda_t \right\} - B \right)^+ \,.
	\end{align}
	
	Finally, combining \eqref{eq:interim-regret} and \eqref{eq:comp-slack} yields
	\begin{align*}
		\E\left[ \left\{ \sum_{t=1}^T D(\mu_t|\PP_t, \beta_t) \right\} - R(A|\{\gamma_t\}_t) \right] \leq &\max\{E(T,0), E(T,\kappa)\} + \kappa \bar b + \kappa \cdot \left(\left\{\sum_{t=1}^T \lambda_t \right\} - B \right)^+\\
		 &+ \sum_{t=1}^T \E[\mu_t \cdot (\beta_t - \lambda_t)]\,.
	\end{align*}
	Plugging in the definition of $E(T,\mu)$ finishes the proof.
\end{proof}

\subsection{Proof of Lemma~\ref{lemma:discrepancy-term}}

\begin{proof}[Proof of Lemma~\ref{lemma:discrepancy-term}]
	Consider any two request distributions $\PP, \tilde \PP \in \Delta(\S)$. Then, by the definition of the Wasserstein metric, there exists a joint probability distribution $Q$, with marginals $\PP$ and $\tilde \PP$ on the first and second factors respectively, such that
	\begin{align*}
		\W(\PP, \tilde\PP) = \E_{(\gamma, \tilde \gamma) \sim Q} \left[ \sup_x |f(x) - \tilde f(x)| + \sup_x |b(x) - \tilde b(x)| \right] \,.
	\end{align*}
	Let $x^*(\gamma, \mu)$ be the optimal solution of $\max_{x \in X} f(x) - \mu \cdot b(x)$ for request $\gamma = (f,b)$ as described in Definition~\ref{definition:profit-maximizing-decision}. Then, for any $\mu \in [0,\kappa]$ and $x: \S \to \X$, we have
	\begin{align}\label{eq:wasserstein}
		&\E_{(\gamma, \tilde \gamma) \sim Q} \left[ \left| f(x(\gamma)) - \mu \cdot b(x(\gamma)) - \left\{ \tilde f(x(\gamma)) -  \mu \cdot \tilde b(x(\gamma)) \right\} \right| \right] \nonumber \\
		\leq & \E_{(\gamma, \tilde \gamma) \sim Q} \left[ \left| f(x(\gamma)) - \tilde f(x(\gamma))\right| + \mu \cdot \left| b(x(\gamma)) - \tilde b(x(\gamma)) \right| \right] \nonumber \\
		\leq & \W(\PP, \tilde \PP) + \kappa \cdot \W(\PP, \tilde\PP) \nonumber\\
		=& (1 + \kappa) \cdot \W(\PP, \tilde \PP)\,.
	\end{align}
	
	Now, for $t \in [T]$, we can use \eqref{eq:wasserstein} to write
	\begin{align*}
		& D(\mu_t|\tilde \PP, \beta_t) - D(\mu_t| \PP, \beta_t)  \\
		=& \E_{\tilde\gamma \sim \tilde \PP}[ \tilde f(x^*(\tilde\gamma, \mu_t)) - \mu_t \cdot \tilde b(x^*(\tilde \gamma, \mu_t)) + \mu_t\cdot \beta_t ] - \E_{\gamma \sim \PP}[ f(x^*(\gamma, \mu_t)) - \mu_t \cdot b(x^*(\gamma, \mu_t)) + \mu_t \cdot \beta_t ] \\
		=&  \E_{\tilde\gamma \sim \tilde \PP}[ \tilde f(x^*(\gamma, \mu_t)) - \mu_t \cdot \tilde b(x^*( \gamma, \mu_t)) + \mu_t\cdot \beta_t ] - \E_{\gamma \sim \PP}[ f(x^*(\gamma, \mu_t)) - \mu_t \cdot b(x^*(\gamma, \mu_t)) + \mu_t \cdot \beta_t ] \\
		\leq& \E_{(\gamma, \tilde \gamma) \sim Q} \left[ \left| f(x^*(\gamma, \mu_t)) - \mu_t \cdot b(x^*(\gamma, \mu_t)) - \left\{ \tilde f(x^*(\gamma, \mu_t)) -  \mu_t \cdot \tilde b(x^*(\gamma, \mu_t)) \right\} \right| \right]\\
		\leq& (1 + \kappa) \cdot \W(\PP, \tilde\PP)\,,
	\end{align*}
	where the first inequality follows from the definition of $x^*(\tilde\gamma, \mu_t)$ and the second inequality follows from the fact that $(\PP, \tilde\PP)$ are the marginals of $Q$. As a consequence, we get
	\begin{align}\label{eq:wasserstein-inter}
		\sum_{t=1}^T D(\mu_t|\PP_t, \beta_t) &= \sum_{t=1}^T D(\mu_t | \tilde \PP_t, \beta_t) - \sum_{t=1}^T \left\{ D(\mu_t|\tilde \PP_t, \beta_t) - D(\mu_t| \PP_t, \beta_t) \right\} \nonumber\\
		&\geq \sum_{t=1}^T D(\mu_t | \tilde \PP_t, \beta_t) - (1 + \kappa) \cdot \sum_{t=1}^T \W(\PP_t, \tilde\PP_t) \nonumber\\
		&\geq \sum_{t=1}^T \fluid(\tilde \PP_t, \beta_t) - (1 + \kappa) \cdot \sum_{t=1}^T \W(\PP_t, \tilde\PP_t)\,, 
	\end{align}
	where the second inequality follows from weak duality.
	
	Next, observe that the definition of $\beta_t = \E_{\hat \gamma \sim \tilde \PP_t}[ \hat b^*(\tilde \mu)]$ implies that $x^*(\tilde \gamma_t, \tilde \mu)$ is a feasible to solution to the optimization problem which defines $\fluid(\tilde \PP_t, \beta_t)$. Hence,
	\begin{align*}
		\sum_{t=1}^T \fluid(\tilde \PP_t, \beta_t) &\geq \sum_{t=1}^T \E_{\tilde \gamma_t \sim \tilde \PP_t} \left[ \tilde f_t(x^*(\tilde \gamma_t, \tilde \mu)) \right]\\
		&= \sum_{t=1}^T \E_{\tilde \gamma_t \sim \tilde \PP_t} \left[ \tilde f_t(x^*(\tilde \gamma_t, \tilde \mu)) - \tilde\mu \cdot \tilde b_t(x^*(\tilde \gamma_t, \tilde \mu))\right] + \tilde\mu \cdot \sum_{t=1}^T \E_{\tilde \gamma_t \sim \tilde \PP_t} \left[ \tilde b_t(x^*(\tilde \gamma_t, \tilde \mu)) \right]\,.
	\end{align*}
	Let $\{x_t(\cdot)\}_t$ be an optimal solution for $\fluid(\{\PP_t\})$. Then, for all $t \in [T]$, we have
	\begin{align*}
		\E_{\tilde \gamma_t \sim \tilde \PP_t} \left[ \tilde f_t(x^*(\tilde \gamma_t, \tilde \mu)) - \tilde\mu \cdot \tilde b_t(x^*(\tilde \gamma_t, \tilde \mu))\right] &= \E_{(\gamma_t, \tilde\gamma_t) \sim Q} \left[ \tilde f_t(x^*(\tilde \gamma_t, \tilde \mu)) - \tilde\mu \cdot \tilde b_t(x^*(\tilde \gamma_t, \tilde \mu))\right]\\
		&\geq \E_{(\gamma_t, \tilde\gamma_t) \sim Q} \left[ \tilde f_t(x_t(\gamma_t)) - \tilde\mu \cdot \tilde b_t(x(\gamma_t))\right] \\
		&\geq \E_{(\gamma_t, \tilde\gamma_t) \sim Q} \left[ f_t(x_t(\gamma_t)) - \tilde\mu \cdot b_t(x(\gamma_t))\right] - (1 + \kappa) \cdot \W(\PP_t, \tilde \PP_t)\\
		&= \E_{\gamma_t \sim \PP_t} \left[ f_t(x_t(\gamma_t)) - \tilde\mu \cdot b_t(x(\gamma_t))\right] - (1 + \kappa) \cdot \W(\PP_t, \tilde \PP_t)\,,
	\end{align*}
	where the first inequality follows from the definition of $x^*(\tilde \gamma_t, \tilde\mu)$ and the second inequality follows from \eqref{eq:wasserstein}. Therefore,
	\begin{align*}
		&\sum_{t=1}^T \fluid(\tilde \PP_t, \beta_t)\\
		 \geq & \sum_{t=1}^T \E_{\gamma_t} \left[ f_t(x_t(\gamma_t)) - \tilde\mu \cdot b_t(x(\gamma_t))\right] - (1 + \kappa) \cdot \sum_{t=1}^T \W(\PP_t, \tilde \PP_t) + \tilde\mu \cdot \sum_{t=1}^T \E_{\tilde \gamma_t} \left[ \tilde b_t(x^*(\tilde \gamma_t, \tilde \mu)) \right]\\
		=& \sum_{t=1}^t \E_{\gamma_t \sim \PP_t}[f_t(x(\gamma_t))] - \tilde \mu\cdot \left(\sum_{t=1}^T \E_{\gamma_t}[b_t(x(\gamma_t))] -\sum_{t=1}^T \E_{\tilde \gamma_t}[\tilde b_t^*(\tilde \mu)] \right) - (1 + \kappa) \cdot \sum_{t=1}^T \W(\PP_t, \tilde \PP_t)\\
		\geq& \fluid(\{\PP_t\}_t) - \tilde \mu \cdot  \left(B - \sum_{t=1}^T \beta_t \right) - (1 + \kappa) \cdot \sum_{t=1}^T \W(\PP_t, \tilde \PP_t)\,,
	\end{align*}
	where the second inequality follows from the feasibility of the optimal solution $\{x_t(\cdot)\}_t$, i.e., $\sum_{t=1}^T \E_{\gamma_t}[b_t(x(\gamma_t))] \leq B$. Combining this with \eqref{eq:wasserstein-inter} yields
	\begin{align*}
		\sum_{t=1}^T D(\mu_t| \PP_t, \beta_t) \geq \fluid(\{\PP_t\}_t) - \tilde \mu \cdot \left(B - \sum_{t=1}^T \beta_t \right) - 2(1 + \kappa) \cdot \sum_{t=1}^T \W(\PP_t, \tilde \PP_t)\,,
	\end{align*}
	as required.
\end{proof}

\subsection{Proof of Lemma~\ref{lemma:monotonicity}}

\begin{proof}[Proof of Lemma~\ref{lemma:monotonicity}]
	The definition of $x$ and $x'$ implies
	\begin{align*}
		f(x) - \mu \cdot b(x) \geq f(x') - \mu \cdot b(x') \quad \text{ and } \quad f(x') - \mu' \cdot b(x') \geq f(x) - \mu' \cdot b(x)\,.
	\end{align*} 
	Combining the two inequalities, we get
	\begin{align*}
		&f(x) - \mu \cdot b(x) - \left\{ f(x) - \mu' \cdot b(x) \right\} \geq f(x') - \mu \cdot b(x') - \left\{ f(x') - \mu' \cdot b(x') \right\} \\
		\implies & (\mu - \mu') \cdot (b(x') - b(x)) \geq 0\,.
	\end{align*}
	The lemma follows from the last inequality because $\mu - \mu' > 0$.
\end{proof}

\subsection{Proof of Lemma~\ref{lemma:change-in-target}}

\begin{proof}[Proof of Lemma~\ref{lemma:change-in-target}]

	Define
	\begin{align*}
		q(\mu) \coloneqq \mu\cdot B + \sum_{t=1}^T \max_{x \in \X} \left\{ \tilde f_t(x) - \mu \cdot \tilde b_t(x) \right\} \quad \text{and} \quad q^{(-s)}(\mu) \coloneqq \mu\cdot B + \sum_{t\neq s} \max_{x \in \X} \left\{ \tilde f_t(x) - \mu \cdot \tilde b_t(x) \right\} \,.
	\end{align*}
	
	We start by proving $\tilde \mu \geq \tilde \mu^{(-s)}$. For contradiction, suppose $\tilde \mu < \tilde \mu^{(-s)}$. Consider the following two cases:
    \begin{itemize}
        \item Case I: $0 \in \argmax_{x \in \X} \tilde f_s(x) - \tilde \mu \cdot \tilde b_s(x)$. Then, we must have $0 \in \argmax_{x \in \X} \tilde f_s(x) - \mu \cdot \tilde b_s(x)$ for all $\mu \geq \tilde \mu$. This is because, for $\mu \geq \tilde \mu$, Lemma~\ref{lemma:monotonicity} implies that $\tilde b_s(x') \leq \tilde b_s(0) = 0$ for all $x' \in \argmax_{x \in \X} \tilde f_s(x) - \mu \cdot \tilde b_s(x)$, and $\tilde f_s(x) \leq \kappa \cdot \tilde b_s(x)$ for all $x \in \X$.  Therefore, $q(\mu) = q^{(-s)}(\mu)$ for all $\mu \geq \tilde \mu$. Since $\tilde \mu$ is a minimizer of $q(\cdot)$ and $\tilde \mu^{(-s)} > \tilde\mu$, we get that
        \begin{align*}
            q^{(-s)}(\tilde \mu) = q(\tilde\mu) \leq q(\tilde\mu^{(-s)}) = q^{(-s)}\left(\tilde \mu^{(-s)} \right)\,.
        \end{align*}
        On the other hand, $\tilde \mu^{(-s)}$ is a minimizer of $q^{(-s)}(\cdot)$, which implies $q^{(-s)}\left(\tilde \mu^{(-s)} \right) \leq q^{(-s)}(\tilde \mu)$. Therefore, $q^{(-s)}\left(\tilde \mu^{(-s)} \right) \leq q^{(-s)}(\tilde \mu)$, which contradicts the fact that $\tilde \mu^{(-s)}$ is the smallest minimizer of $q^{(-s)}(\cdot)$.

        \item Case II: $0 \notin \argmax_{x \in \X} \tilde f_s(x) - \tilde \mu \cdot \tilde b_s(x)$. Since $f_s(x) \leq \kappa \cdot b_s(x)$ for all $x \in \X$, we get that $\tilde b_s(x') > 0$ for all $x' \in \argmax_{x \in \X} \tilde f_s(x) - \tilde \mu \cdot \tilde b_s(x)$. Consider any sequences of optimal action sequences $\{x_t\}_t$ and $\{x_t^{(-s)}\}_t$ such that for all $t \in [T]$, we have
    	\begin{align*}
    		x_t \in \argmax_{x \in \X} \tilde f_t(x) - \tilde\mu \cdot \tilde b_t(x) \quad \text{and} \quad x_t^{(-s)} \in \argmax_{x \in \X} \tilde f_t(x) - \tilde\mu^{(-s)} \cdot \tilde b_t(x) \ .
    	\end{align*}
    	Then, Lemma~\ref{lemma:monotonicity} implies that $\tilde b_t(x_t) \geq \tilde b_t(x_t^{(-s)})$ for all $t \neq s$. Therefore,
    	\begin{align}\label{eq:change-target-inter-1}
    		B - \sum_{t=1}^T \tilde b_t(x_t) = \left\{ B - \sum_{t\neq s} \tilde b_t(x_t) \right\} - b_t(x_t) < B - \sum_{t\neq s} \tilde b_t(x_t) \leq B - \sum_{t\neq s} \tilde b_t(x_t^{(-s)})  \,.
    	\end{align}
    	Now observe that, since $q(\cdot)$ (and $q^{(-s)}(\cdot)$) are the maxima of a collection of linear functions, its sub-gradient is given by the convex hull of gradients of all the linear functions which are binding (for example, see Chapter 5 of \citealt{bertsekas2009convex}). Therefore, $\partial q(\tilde\mu)$ (and $\partial q^{(-s)}\left(\tilde \mu^{(-s)} \right)$) is a convex hull of terms of the form $B - \sum_{t=1}^T \tilde b_t(x_t)$ for some optimal action sequence $\{x_t\}_t$ (and $\{x_t^{(-s)}\}_t$). Since $\tilde \mu^{(-s)} > \tilde \mu \geq 0$, first-order optimality conditions imply that $0 \in \partial q^{(-s)}\left(\tilde \mu^{(-s)} \right)$. Therefore, \eqref{eq:change-target-inter-1} implies that $v < 0$ for all $v \in \partial q(\tilde \mu)$. This contradicts the optimality of $\tilde \mu$ for $q(\cdot)$.
    \end{itemize}
    As we have obtained a contradiction in both cases, we get that $\tilde \mu \geq \tilde \mu^{(-s)}$, as required. Moreover, $\lambda_t \leq \lambda_t^{(-s)}$ for all $t \neq s$ follows immediately from Lemma~\ref{lemma:monotonicity}. Hence, to finish the proof, it suffices to show the final inequality in the following chain:
    \begin{align}\label{eq:change-target-inter-2}
		\sum_{t =1}^{s-1} \left|\lambda_t^{(-s)} - \lambda_t \right| \leq \sum_{t\neq s} \left|\lambda_t^{(-s)} - \lambda_t \right| = \sum_{t\neq s} \lambda_t^{(-s)} - \sum_{t \neq s}\lambda_t \leq 3 \bar b \,. 
	\end{align}
    Note that, Lemma~\ref{lemma:trace-dual} implies that at least one of the following conditions hold
    \begin{enumerate}
		\item $\tilde \mu = 0$ and $\sum_{t=1}^T \lambda_t \leq B + \bar b$.
		\item $\left|B - \sum_{t=1}^T \tilde \lambda_t \right| \leq \bar b$.
	\end{enumerate}
    If $\tilde \mu = 0$, then $\tilde \mu^{(-s)} = 0$ because $\tilde mu^{(-s)} \leq \tilde \mu$. Therefore, in that case $\lambda^{(-s)}_t = \lambda_t = \tilde b^*_t(0)$ for all $t \neq s$ and \eqref{eq:change-target-inter-2} follows. 
    
    Suppose $\left|B - \sum_{t=1}^T \tilde \lambda_t \right| \leq \bar b$. Observe that Lemma~\ref{lemma:trace-dual} applied to the trace $\{\hat \gamma_t\}_t$, where $\hat \gamma_t = \tilde \gamma_t$ for all $t \neq s$ and $\hat \gamma_s = (0,0)$, implies that at least one of the following conditions hold:
    \begin{enumerate}
		\item $\tilde \mu^{(-s)} = 0$ and $\sum_{t\neq s} \lambda^{(-s)}_t \leq B + \bar b$.
		\item $\left|B - \sum_{t\neq s} \tilde \lambda_t^{(-s)} \right| \leq \bar b$.
	\end{enumerate}
    Therefore, $\sum_{t\neq s} \lambda_t^{(-s)} - \sum_{t \neq s}\lambda_t \leq B + \bar b - \sum_{t \neq s}\lambda_t \leq \bar b + \bar b + \lambda_s \leq 3 \bar b$, as required to establish \eqref{eq:change-target-inter-2}.
\end{proof}

\subsection{Proof of Lemma~\ref{lemma:coupling}}

\begin{proof}[Proof of Lemma~\ref{lemma:coupling}]

    It is known that FTRL is equivalent to Lazy Online Mirror Descent (for example, see \citealt{hazan2016introduction}). In particular, if we let $V_h(x,y)=h(x)-h(y)-\nabla h(y)^\top (x-y)$ denote the Bregman divergence w.r.t. $h(\cdot)$, then the FTRL update \eqref{eq:FTRL} of Algorithm~\ref{alg:dual-descent} can be equivalently written as:
	\begin{align*}
		&\theta_s =  \nabla h(\mu_s)\\
		&\theta_{s+1} = \theta_t - \eta \cdot g_s = \theta_t - \eta \cdot (\lambda_s - b_s(x_t'))\\
		&\mu_{t+1} = \argmin_{\mu \in [0,\kappa]} V_h(\mu, (\nabla h)^{-1}(\theta_{s+1}))\,.
	\end{align*}
	where $x_t' \in \argmax_{x\in \X} f_s(x) - \mu_s \cdot b_s(x)$. We will use $\{\mu_t'\}_t$ and $\{\theta_t'\}_t$ to represent the dual and mirror iterates of Algorithm~\ref{alg:dual-descent} with target sequence $\{\lambda'_t\}_t$:
	\begin{align*}
		&\theta'_s =  \nabla h(\mu'_s)\\
		&\theta'_{s+1} = \theta'_t - \eta \cdot g_s = \theta'_t - \eta \cdot (\lambda'_s - b_s(y_t'))\\
		&\mu'_{t+1} = \argmin_{\mu \in [0,\kappa]} V_h(\mu, (\nabla h)^{-1}(\theta'_{s+1}))\,.
	\end{align*}
	where $y_t' \in \argmax_{x\in \X} f_s(x) - \mu'_s \cdot b_s(x)$.

    We will first use induction on $s$ to prove the following statement,
    \begin{align}\label{eq:coupling-induction}
		\left| \theta_s  - \theta'_s \right| \leq \eta \cdot \left\{ \sum_{t=1}^{s-1} |\lambda_t - \lambda_t'| \right\} + \eta \cdot \bar b\,.
	\end{align}
    The base case $s =1$ follows directly from our assumption that the initial iterates $\theta_1 = \nabla h(\mu_1) = \nabla h(\mu_1') = \theta_2$ are the same. 
    
    Suppose \eqref{eq:coupling-induction} holds for $s \in [T-1]$ (Induction Hypothesis). Define $\theta_{s + 1/2} = \theta_s + \eta \cdot b_s(x_t')$ and $\theta'_{s + 1/2} = \theta'_s + \eta \cdot b_s(y_t')$. W.l.o.g. assume that $\theta_s \geq \theta_s'$. Due to the invertibility of $\nabla h$, we get that $\mu_s \geq \mu_s'$, and consequently Lemma~\ref{lemma:monotonicity} implies $b(x_t') \leq b(y_t')$. Consider the following cases:
	\begin{itemize}
		\item Case I: $\theta'_{s+1/2} \leq \theta_{s+1/2}$. Then, $\theta_{s+1/2} - \theta'_{s+1/2} = \theta_s - \theta'_{s} + \eta \cdot (b(x_t') - b(y_t')) \leq \theta_s - \theta_s'$ because $b(x_t') \leq b(y_t')$.
		\item Case II: $\theta'_{s+1/2} \geq \theta_{s+1/2}$. Then, $\theta'_{s+1/2} - \theta_{s+1/2} = \theta'_s - \theta_{s} + \eta \cdot (b(x_t') - b(y_t')) \leq \eta \cdot \bar b$ because $\theta'_{s} \leq \theta_{s}$ and $b(y_t') - b(x_t') \leq \bar b$.
	\end{itemize}
    Therefore, in both cases we have
    \begin{align*}
        |\theta_{s+1/2} - \theta'_{s+1/2}| \leq \max\{\theta_s - \theta'_s, \eta \cdot\bar b\} \leq \eta \cdot \left\{ \sum_{t=1}^{s-1} |\lambda_t - \lambda_t'| \right\} + \eta \cdot \bar b\,.
    \end{align*}
	where we used the induction hypothesis in the second inequality. Consequently, we can write
	\begin{align*}
		|\theta_{s+1} - \theta'_{s+1}| &= \left|\theta_{s+1/2} - \eta \cdot \lambda_s + (\theta'_{s+1/2} - \eta \cdot \lambda'_s) \right|\\
		&\leq |\theta_{s+1/2} - \theta'_{s+1/2}| + \eta \cdot |\lambda_s - \lambda'_s|\\
		&\leq \eta \cdot \left\{ \sum_{t=1}^{s} |\lambda_t - \lambda_t'| \right\} + \eta \cdot \bar b\,.
	\end{align*}
    Hence, we have established \eqref{eq:coupling-induction} for all $s \in [T]$. Now, since $h$ is $\sigma$-strongly convex and differentiable, we have
    \begin{align*}
        \nabla h(x) - \nabla h(y) \geq \sigma \cdot (x - y) \qquad \forall\ x \geq y \,.
    \end{align*}
    Therefore, we have
    \begin{align}\label{eq:coupling-inter-1}
        \left| (\nabla h)^{-1}(\theta_s) - (\nabla h)^{-1}(\theta_s') \right| \leq \frac{1}{\sigma} \cdot |\theta_s - \theta'_s|\,.
    \end{align}

    To finish the proof, we will use the fact that Bregman projections are contractions in one dimensions, which we prove next. Consider any $x < 0$, then for any $\mu \in [0,\kappa]$, we have
    \begin{align*}
        V_h(\mu, x) - V_h(0,x) = h(\mu) - h(0) - \nabla h(x)^\top (\mu - 0) \geq h(\mu) - h(0) - \nabla h(0)^\top (\mu - 0) \geq 0\,,
    \end{align*}
    where the inequality follows from $\nabla h(0) \geq \nabla h(x)$ (convexity of $h(\cdot)$). Therefore, $\argmin_{\mu \in [0,\kappa]} V_h(\mu, x) = 0 = \argmin_{\mu \in [0,\kappa]} |\mu - x|$. Similarly, for $x > \kappa$ and $\mu \in [0,\kappa]$, we have
    \begin{align*}
        V_h(\mu, x) - V_h(\kappa, x) = h(\mu) - h(\kappa) - \nabla h(x)^\top (\mu - \kappa) \geq h(\mu) - h(\kappa) - \nabla h(\kappa)^\top (\mu - \kappa) \geq 0\,,
    \end{align*}
    where the inequality follows from $\nabla h(x) \geq \nabla h(\kappa)$ (convexity of $h(\cdot)$). Therefore, $\argmin_{\mu \in [0,\kappa]} V_h(\mu, x) = \kappa = \argmin_{\mu \in [0,\kappa]} |\mu - x|$. Consequently, we have shown that $\argmin_{\mu \in [0,\kappa]} V_h(\mu, x) = \argmin_{\mu \in [0,\kappa]} |\mu - x|$, i.e., the Bregman project is identical to the Euclidean projection in one dimension. Since Euclidean projection is a contraction, we get
    \begin{align*}
        \left| \mu_s - \mu'_s \right| &= \left| \argmin_{\mu \in [0,\kappa]} V_h(\mu, (\nabla h)^{-1}(\theta_{s+1}))  - \argmin_{\mu \in [0,\kappa]} V_h(\mu, (\nabla h)^{-1}(\theta'_{s+1})) \right|\\
        &= \left| \argmin_{\mu \in [0,\kappa]} |\mu - (\nabla h)^{-1}(\theta_{s+1}))|  - \argmin_{\mu \in [0,\kappa]} |\mu - (\nabla h)^{-1}(\theta'_{s+1}))| \right|\\
        &\leq  \left| (\nabla h)^{-1}(\theta_s) - (\nabla h)^{-1}(\theta_s') \right|\,.
    \end{align*}
    
    Finally, combining this with \eqref{eq:coupling-induction} and \eqref{eq:coupling-inter-1}, we get
    \begin{align*}
        |\mu_s - \mu'_s| \leq  \frac{\eta}{\sigma} \cdot \left\{ \sum_{t=1}^{s-1} |\lambda_t - \lambda_t'| \right\} + \frac{\eta}{\sigma} \cdot \bar b\,,
    \end{align*}
    as required.
\end{proof}

\subsection{Proof of Lemma~\ref{lemma:leave-one-out}}

\begin{proof}[Proof of Lemma~\ref{lemma:leave-one-out}]
    Using the definitions of $\lambda_s$ and $\beta_s$, we can write
    \begin{align*}
        \E\left[ \lambda_s \big| \tilde \mu^{(-s)} \right] = \E\left[ \tilde b_s^*(\tilde \mu) \biggr|\ \tilde \mu^{(-s)} \right] \quad \text{ and } \quad \E\left[ \beta_s \big|\ \tilde \mu^{(-s)} \right] = \E\left[ \E_{\hat \gamma \sim \tilde \PP_t} \left[ \hat b_s^*(\tilde\mu) \right] \biggr|\ \tilde \mu^{(-s)} \right]\,.
    \end{align*}

    Fix a trace $\{\tilde \gamma_t\}_t$. Observe that for any request
    \begin{align*}
        x^*(\tilde \gamma_s, \tilde \mu) = 
        \begin{cases}
            \bar x &\text{if } \coeff(\tilde f_s) - \tilde \mu \cdot \coeff(\tilde b_s) \geq 0 \text{ and } \coeff(\tilde f_s) \neq 0\\
            0 &\text{otherwise} 
        \end{cases}\,,
    \end{align*}
    and
    \begin{align*}
        x^*(\tilde \gamma_s, \tilde \mu^{(-s)}) = 
        \begin{cases}
            \bar x &\text{if } \coeff(\tilde f_s) - \tilde \mu^{(-s)} \cdot \coeff(\tilde b_s) \geq 0 \text{ and } \coeff(\tilde f_s) \neq 0\\
            0 &\text{otherwise} 
        \end{cases}\,.
    \end{align*}
    
    From Lemma~\ref{lemma:change-in-target}, we know that $\tilde \mu^{(-s)} \leq \tilde \mu$. Now, if $\coeff(\tilde f_s) = 0$, then $x^*(\tilde \gamma_s, \tilde \mu) = x^*(\tilde \gamma_s, \tilde \mu^{(-s)}) = 0$. Assume that $\coeff(\tilde f_s) > 0$ (and thus $\coeff(\tilde b_s) > 0$ because $f_s(x) \leq \kappa \cdot b(x)$), let $A \coloneqq \{\mu \geq 0 \mid \coeff(\tilde f_s) - \mu \cdot \coeff(\tilde b_s) < 0\}$ be the set of all dual variables that lead to $x^*(\tilde \gamma_s, \mu) = 0$.

    Define the dual functions:
	\begin{align*}
		q(\mu) \coloneqq \mu\cdot B + \sum_{t=1}^T \max_{x \in \X} \left\{ \tilde f_t(x) - \mu \cdot \tilde b_t(x) \right\} \quad \text{and} \quad q^{(-s)}(\mu) \coloneqq \mu\cdot B + \sum_{t\neq s} \max_{x \in \X} \left\{ \tilde f_t(x) - \mu \cdot \tilde b_t(x) \right\} \,.
	\end{align*}
    
    For contradiction, suppose $\tilde \mu \in A$ and $\tilde \mu^{(-s)} \notin A$. Since $A$ is open, there exists a point $\mu \in A$ such that $\mu = \alpha \cdot \tilde \mu + (1 - \alpha) \cdot \tilde \mu^{(-s)}$ for some $\alpha \in (0,1)$. Moreover, observe that the minimality of $\tilde \mu^{(-s)}$ implies $q^{(-s)}\left(\tilde \mu^{(-s)} \right) \leq q^{(-s)}(\mu)$ and  $q^{(-s)}\left(\tilde \mu^{(-s)} \right) \leq q^{(-s)}(\tilde \mu)$. Therefore, as $q^{(-s)}$ is convex, we get
    \begin{align*}
        q^{(-s)}(\mu) \leq \alpha \cdot q^{(-s)}(\tilde \mu) + (1 - \alpha) \cdot q^{(-s)}\left(\tilde \mu^{(-s)} \right) \leq \alpha \cdot q^{(-s)}(\tilde \mu) + (1 - \alpha) \cdot q^{(-s)}(\tilde \mu^) = q^{(-s)}(\tilde \mu)\,.
    \end{align*}
    Now, observe that $q(\mu) = q^{(-s)}(\mu)$ for all $\mu \in A$. Therefore, $q(\mu) \leq q(\tilde \mu)$, which contradicts the fact that $\tilde \mu$ is the smallest minimizer of $q(\cdot)$. Hence, either $\tilde \mu, \tilde \mu^{(-s)} \in A$ or $\tilde \mu, \tilde \mu^{(-s)} \in A^c$, and as a consequence, we get $\tilde b_s^*(\tilde \mu) = \tilde b_s^*\left(\tilde \mu^{(-s)} \right)$. Furthermore, combining $\tilde \mu \geq \tilde \mu^{(-s)}$ (from Lemma~\ref{lemma:change-in-target}) and Lemma~\ref{lemma:monotonicity}, we also get $\hat b_s^* \left(\tilde \mu^{(-s)} \right) \geq \hat b_s^*(\tilde \mu)$ for every $\hat \gamma_s \in \S$. Therefore,
    \begin{align*}
        \E\left[ \lambda_s \big| \tilde \mu^{(-s)} \right] &= \E\left[ \tilde b_s^*(\tilde \mu) \biggr|\ \tilde \mu^{(-s)} \right]\\
        &= \E\left[ \tilde b_s^*\left(\tilde \mu^{(-s)} \right) \biggr|\ \tilde \mu^{(-s)} \right]\\
        &= \E_{\hat \gamma_s \sim \tilde \PP_s}\left[ \hat b_s^*\left(\tilde \mu^{(-s)} \right) \biggr|\ \tilde \mu^{(-s)} \right]\\
        &\geq \E\left[ \E_{\hat \gamma_s \sim \tilde \PP_s}\left[ \hat b_s^*(\tilde \mu) \biggr|\ \tilde \mu^{(-s)} \right] \right]\\
        &= \E\left[ \beta_s \big|\ \tilde \mu^{(-s)} \right]\,,
    \end{align*}
    where the third equality follows from the fact that $\tilde \gamma_s$ and $\tilde \mu^{(-s)}$ are independent of each other, which allows us to rename the variable from $\tilde \gamma_s \sim  \tilde \PP_s$ to $\hat \gamma_s \sim \tilde \PP_s$. We combine this with the Tower Property of conditional expectations to finish the proof:
    \begin{align*}
        \E \left[ \mu_s^{(-s)} \cdot (\beta_s - \lambda_s) \right] = \E\left[ \mu_s^{(-s)} \cdot \E \left[  (\beta_s - \lambda_s) \biggr|\ \mu_s^{(-s)}  \right] \right] = \E\left[ \mu_s^{(-s)} \cdot \left(  \E \left[\beta_s \biggr|\ \mu_s^{(-s)}  \right] - \E \left[\lambda_s \biggr|\ \mu_s^{(-s)}  \right] \right) \right] \leq 0 \,. 
    \end{align*}
    \qedhere
\end{proof}

\subsection{Proof of Theorem~\ref{thm:main-result}}

\begin{proof}[Proof of Theorem~\ref{thm:main-result}]
    Theorem~\ref{thm:dual-descent-regret} and \eqref{eq:discrepancy-term} together imply that, with probability at least $1 - 1/T^2$, we have
    \begin{align*}
        \regret(A) = \fluid(\{\PP_t\}_t) - \E_{\{\gamma_t\}_t \sim \prod_t \PP_t}[R(A|\{\gamma_t\}_t)] \leq R_1 + R_2 + R_3 + \kappa \cdot r(T) + \kappa \bar b + 2(1 + \kappa) \cdot \sum_{t=1}^T \W(\PP_t, \tilde \PP_t) \,.
    \end{align*}

    From our choice of step size $\eta = \sqrt{d_R/T}$ and the observation that $\bar \lambda = \max_t \lambda_t \leq \bar b$, we get that 
    \begin{align*}
        R_1 = \kappa \bar b + \frac{2(\bar b+ \bar \lambda)^2}{\sigma} \cdot \eta T + \frac{d_R}{\eta} \leq \kappa \bar b + \left( \frac{8 \bar b^2}{\sigma} + 1 \right) \cdot \sqrt{d_r T} \,.
    \end{align*}
    From \eqref{eq:excess-spend}, we know that
    \begin{align*}
        R_2 = \kappa \cdot \left( \left\{ \sum_{t=1}^T \lambda_t \right\} - B \right)^+\leq \kappa\bar b\,.
    \end{align*}
    Moreover, from Lemma~\ref{lemma:leave-one-out} and $\eta = \sqrt{d_R/T}$, we know that
    \begin{align*}
        R_3 = \sum_{s=1}^T \E \left[ \mu_s \cdot (\beta_s - \lambda_s) \right] \leq \frac{4 \eta \bar b^2}{\sigma} \cdot T = \frac{4\bar b^2}{\sigma} \cdot \sqrt{d_r T}\quad\,.
    \end{align*}

    Combining the above inequalities and plugging in $r(T) = 8 \bar b \sqrt{T \log(T)}$, we get that
    \begin{align*}
        \regret(A) \leq 3 \kappa \bar b + \left( \frac{12 \bar b^2}{\sigma} + 1 \right) \cdot \sqrt{d_r T} + 8 \kappa \bar b \sqrt{T \log(T)} + 2(1 + \kappa) \cdot \sum_{t=1}^T \W(\PP_t, \tilde \PP_t)\\
        &\leq 
    \end{align*}
    with probability at least $1 - 1/T^2$. On the other than, we always have $\regret(T) \leq \bar f T \leq \kappa \bar b T$. Hence, we get
    \begin{align*}
        \regret(A) &\leq \left(1 - \frac{1}{T^2} \right) \cdot \left[ 3 \kappa \bar b + \left( \frac{12 \bar b^2}{\sigma} + 1 \right) \cdot \sqrt{d_r T} + 8 \kappa \bar b \sqrt{T \log(T)} + 2(1 + \kappa) \cdot \sum_{t=1}^T \W(\PP_t, \tilde \PP_t) \right] + \frac{1}{T^2} \cdot \kappa \bar b T\\
        &\leq 4 \kappa \bar b \sqrt{T \log(T)} + \left( \frac{12 \bar b^2}{\sigma} + 1 \right) \cdot \sqrt{d_r} \cdot \sqrt{T \log(T)} + 8 \kappa \bar b \sqrt{T \log(T)} + 2(1 + \kappa) \cdot \sum_{t=1}^T \W(\PP_t, \tilde \PP_t)\\
        &= C_1 \cdot \sqrt{T \log(T)} + C_2 \cdot \sum_{t=1}^T \W(\PP_t, \tilde \PP_t)
    \end{align*}
    where $C_1 = \frac{12 \bar b^2 \sqrt{d_R}}{\sigma} +  \sqrt{d_R} + 12 \kappa \bar b$ and $C_2 = 2(1+\kappa)$.
\end{proof}


\end{document}